\documentclass[final,hidelinks,onefignum,onetabnum]{siamart251216}

\usepackage{lipsum}
\usepackage{amsfonts}
\usepackage{graphicx,subfigure}
\usepackage{tikz}
\usetikzlibrary{calc}
\usepackage{epstopdf}
\usepackage{algorithmic}
\usepackage{booktabs}
\usepackage{multirow} 
\usepackage{hyperref}
\usepackage{bm}
\usepackage[normalem]{ulem}
\ifpdf
  \DeclareGraphicsExtensions{.eps,.pdf,.png,.jpg}
\else
  \DeclareGraphicsExtensions{.eps}
\fi

\newcommand*\diff{\mathop{}\!\mathrm{d}}

\newsiamremark{remark}{Remark}
\newsiamremark{hypothesis}{Hypothesis}
\crefname{hypothesis}{Hypothesis}{Hypotheses}
\newsiamthm{claim}{Claim}
\newsiamremark{assumption}{Assumption}
\crefname{assumption}{Assumption}{Assumptions}

\headers{HRGrad}{Z.Y. Liang}

\title{Conflict-Aware Harmonized Rotational Gradient for Multiscale Kinetic Regimes
\thanks{Submitted to the editors DATE.}}

\author{Zhangyong Liang\thanks{National Center for Applied Mathematics, Tianjin University, Tianjin, 300072, China.}}

\usepackage{amsopn}

\makeatletter
\newcommand*{\addFileDependency}[1]{
  \typeout{(#1)}
  \@addtofilelist{#1}
  \IfFileExists{#1}{}{\typeout{No file #1.}}
}
\makeatother

\ifpdf
\hypersetup{
  pdftitle={HRGrad},
  pdfauthor={Z.Y. Liang}
}
\fi

\begin{document}

\maketitle

\begin{abstract}
     In this paper, we propose a harmonized rotational gradient method, termed HRGrad, for simultaneously tackling multiscale time-dependent kinetic problems with varying small parameters.
     These parameters exhibit asymptotic transitions from microscopic to macroscopic physics, making it a challenging multi-task problem to solve over all ranges simultaneously.
     Solving tasks in different asymptotic regions often encounter gradient conflicts, which can lead to the failure of multi-task learning.
     To address this challenge, we explicitly encode a hidden representation of these parameters, ensuring that the corresponding solving tasks are serialized for simultaneous training.
     Furthermore, to mitigate gradient conflicts, we segment the prediction results to construct task losses and introduce a novel gradient alignment metric to ensure a positive dot product between the final update and each loss-specific gradient.
     This metric maintains consistent optimization rates for all task losses and dynamically adjusts gradient magnitudes based on conflict levels.
     Moreover, we provide a mathematical proof demonstrating the convergence of the HRGrad method, which is evaluated across a range of challenging asymptotic-preserving neural networks (APNNs) scenarios.
     We conduct an extensive set of experiments encompassing the Bhatnagar-Gross-Krook (BGK) equation and the linear transport equation in all ranges of Knudsen number.
     Our results indicate that HRGrad effectively overcomes the `failure modes' of APNNs in these problems.
\end{abstract}

\begin{keywords}
Multi-task learning,
Asymptotic transitions,
Bhatnagar-Gross-Krook equation,
Linear transport equation,
Asymptotic-preserving neural networks
\end{keywords}

\begin{AMS}
  65N22, 65N55, 68T07
\end{AMS}

\section{Introduction}\label{sec:01}
Kinetic equations have been widely used in many areas such as mechanics, rarefied gas, plasma physics, astrophysics, semiconductor device modeling, and social and biological sciences \cite{Villani02}. They describe the non-equilibrium dynamics of a system composed of a large number of particles and bridge atomistic and continuum models in the hierarchy of multiscale modeling. The Boltzmann-type equation, as one of the most representative models in kinetic theory, provides a powerful tool to describe molecular gas dynamics, radiative transfer, plasma physics, and polymer flow \cite{A15}. In particular, the linear semiconductor Boltzmann-BGK equation we will study in this article describes the evolution of electron density(in the phase space) within semiconductor devices, in the presence of an external electrical potential and scatterings among various particles \cite{Jungel}.

Ignoring relativistic effects, quantum mechanics provides a sufficiently accurate framework for understanding the physical properties of matter. Within the broad category of multiscale modeling, kinetic equations play a central role as bridges between atomistic and continuum descriptions~\cite{cerci2002relat}. 
However, a fundamental challenge arises from their inherently multiscale nature: kinetic equations typically involve small or multiple spatial and/or temporal scales, characterized by the dimensionless Knudsen number, which represents the ratio of the mean free path (or time) to the macroscopic length (or time) scale.  
In multiscale computations, one often encounters the need to couple models across different scales, each requiring distinct numerical strategies~\cite{weinan2011principles}.
When the Knudsen number is small, kinetic equations can be rigorously approximated by macroscopic hydrodynamic or diffusion equations~\cite{bouchut2000kinetic}. 
Yet, direct numerical simulations in this regime become prohibitively expensive, since resolving the small scales explicitly is computationally intractable.  
To overcome this difficulty, the concept of asymptotic-preserving (AP) schemes was introduced. 
An AP scheme is designed to seamlessly capture the asymptotic transition from kinetic or hyperbolic equations to their macroscopic hydrodynamic or diffusive limits within the discrete numerical framework~\cite{jin2010asymptotic}. 
This strategy avoids the explicit coupling of microscopic and macroscopic solvers; instead, the microscopic scheme naturally degenerates into a macroscopic solver in the vanishing Knudsen number limit.  
Over the past two decades, AP schemes have proven to be a powerful and robust framework for multiscale problems. 
Their main advantage lies in their efficiency and accuracy in the hydrodynamic or diffusive regime, since they eliminate the need to resolve small physical parameters while faithfully capturing the emergent macroscopic behavior. 
As such, AP schemes provide a unifying and effective methodology for handling the inherent complexity of multiscale kinetic and hyperbolic problems. 

The high dimensionality and multiscale nature of kinetic simulations call for efficient numerical methods. 
Classical Monte Carlo methods are widely used but suffer from low-order accuracy and increasing cost as the Knudsen number decreases~\cite{pareschi1999implicit, dimarco2014numerical, pareschi2001time}. 
To alleviate these difficulties, a series of approaches based on deep neural networks (DNNs) has recently been proposed.  
Training these DNN-based solvers requires minimizing a high-dimensional, non-convex loss and remains difficult in the small-parameter regimes of multiscale kinetic problems. 
For time-dependent PDEs, physics-informed neural networks (PINNs) minimize least-squares residual risks of the governing equations~\cite{raissi2019physics}. 
Yet, depending on how the loss function is designed, such formulations are only able to capture the leading-order or single-scale behavior of the solution.  
Such limitations arise from loss design, first-order optimization, and the Frequency Principle~\cite{xu2019frequency}, which biases DNNs toward low frequencies and away from small-scale structures.  
To address these challenges, a new class of asymptotic-preserving neural networks (APNNs) \cite{jin2023ap,jin2024ap,liu2025ap,chen2025micro} has been developed, which integrates the asymptotic-preserving (AP) strategy with PINNs to tackle multiscale physical problems involving small-scale parameters. 
By embedding asymptotic information, APNNs overcome the shortcomings of standard PINNs and yield robust multiscale approximations.
However, existing APNNs are typically limited to problems with a specific small-scale parameter. 
For many multiscale physical problems, the scale parameter often spans the entire spectrum from macroscopic regimes, through transitional regions, to microscopic regimes. 
For example, in the radiative transfer equation, the scale parameter (i.e., the Knudsen number) may vary significantly, covering regimes from kinetic to diffusive, thus exhibiting pronounced multiscale transitions.
A natural question is whether APNNs can be pretrained across a range of scale parameters and directly predict the solution for downstream values, including those unseen during pretraining. 
Even with AP decomposition, however, heterogeneous scale-dependent behaviors can induce severe gradient conflicts across asymptotic regimes.

This raises two intertwined challenges: pretraining AP solvers across the parameter spectrum and resolving the resulting multi-task gradient conflicts. 
Existing methods for parameter-dependent PDEs either rely on data-intensive neural operators or use collocation-based solvers that typically require retraining for new parameter values. 
To address this limitation, two directions have been explored: multi-task learning (MTL)~\cite{caruana1997multitask,zhang2018overview,zhang2021survey,standley2020tasks} over a parameter domain, and parameterization strategies for specific parameter values. 
Existing approaches to conflicting gradients in MTL mainly focus on dynamic loss re-weighting in classical machine learning \cite{vandenhende2021multi,yu2020gradient}, with only limited extensions to PDE solvers, where residual, initial, and boundary losses may be imbalanced. 
In this setting, self-adaptive weighting schemes \cite{song2024loss, van2022optimally,wang2022and,xiang2022self,yao2023multiadam} are widely used, while self-paced learning (SPL) \cite{jiang2015self,yuan2024self,zhao2024symmetric} further emphasizes difficult collocation-point losses; Cognitive Physics-Informed Neural Networks (CoPINN) \cite{duancopinn} recast weighted-loss optimization as collocation-point selection. 
Another line of work addresses imbalance through optimizer preconditioning. 
Unlike self-adaptive weighting schemes that focus on gradient magnitudes, these methods modify gradient directions in MTL, where opposing gradients can yield inefficient updates. 
Representative methods include PCGrad \cite{yu2020gradient}, CAGrad \cite{liu2021conflict}, IMTL-G \cite{liu2021towards}, ConFIG \cite{liu2024config}, and SOAP \cite{wang2025gradient}. 
However, these approaches mainly address PDEs with fixed parameters. 
For simultaneous pretraining over varying parameter values, recent work has explored meta-learning \cite{de2021hyperpinn,huang2022meta,toloub2023dats}, typically via a bi-level formulation in which the outer loop assigns weights across parameter values and the inner loop solves the associated PINN problems. 
Another approach directly parameterizes the scale parameter itself \cite{park2024parameterized}, feeding sampled parameter values to the network as an additional input while encoding collocation points and parameters separately before concatenation. 
Such parameterization, however, is less effective for strongly multiscale problems with drastic regime changes and severe gradient conflicts. 
To the best of our knowledge, no existing optimization framework resolves these conflicts for simultaneous training across asymptotically distinct parameter regimes.

Motivated by these challenges, we develop a physics-informed continuous multi-task optimization framework for asymptotic-preserving neural networks over a continuum of scale parameters. 
Our central observation, which is formalized in \Cref{sec:method}, is that multiscale APNN training suffers from two coupled failure modes: directional contradiction between microscopic and macroscopic task gradients, and magnitude disparity caused by stiff asymptotic scaling. 
These effects make standard projection-based gradient manipulation intrinsically dissipative for multiscale kinetic learning, as it clips weak microscopic gradients precisely when non-equilibrium features are the most vulnerable. 
To overcome this difficulty, we propose the Harmonized Rotational Gradient (HRGrad) method, which replaces lossy Euclidean projection with harmonized, energy-preserving gradient rotation and fair aggregation. As illustrated in \Cref{fig:hrgrad_framework}, HRGrad proceeds through four coordinated stages: harmonized-cone construction with physical anchoring, isometric task-wise rotation, MER-based angle selection, and fair consensus aggregation. The first three stages resolve directional contradiction without dissipating gradient energy. The final aggregation stage mitigates magnitude disparity by restoring balanced multiscale contributions in the resulting update.

\begin{figure*}[t]
    \centering
    \includegraphics[width=0.92\textwidth]{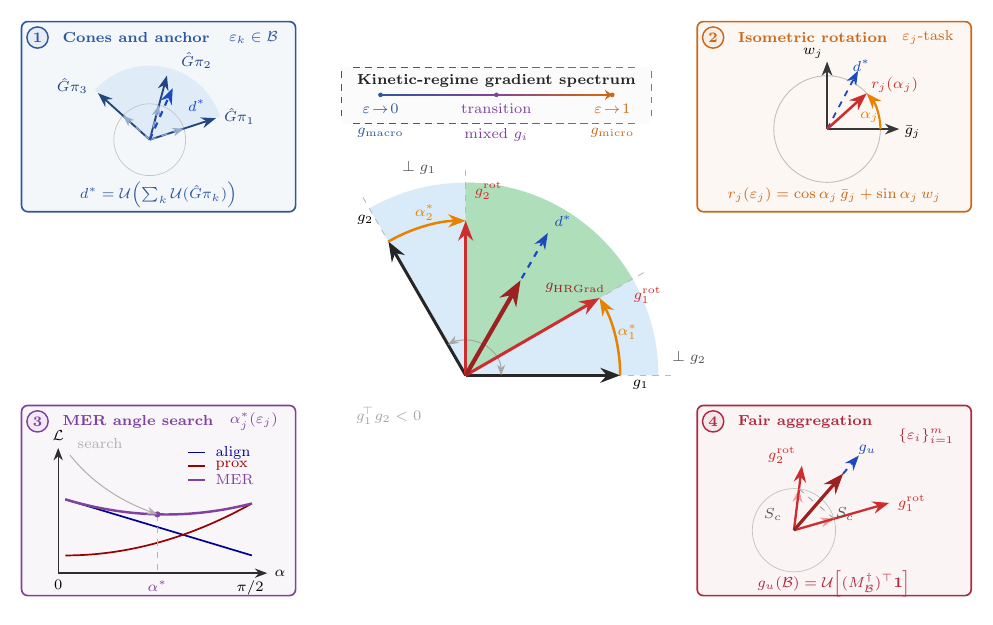}
    \caption{Schematic of the proposed HRGrad framework. Starting from task gradients sampled over macro, transitional, and micro regimes, HRGrad first constructs a harmonized cone together with a physical anchor. Then, it rotates conflicting gradients by isometric operators and selects MER-optimal rotation angles. Finally, it performs fair aggregation to produce the update direction.}
    \label{fig:hrgrad_framework}
\end{figure*}

Our main contributions can be summarized as follows:
\begin{itemize}
    \item \textbf{HRGrad for multiscale APNNs.} We propose the Harmonized Rotational Gradient (HRGrad) method as a physics-informed gradient preprocessing and aggregation framework for continuous multi-task APNN training over a continuum of Knudsen numbers, enabling one unified model to learn across kinetic, transitional, and macroscopic regimes.
    \item \textbf{Kinetic-aware gradient guidance.} We design HRGrad to explicitly account for the multiscale structure of kinetic equations, so that weak but essential updates associated with microscopic non-equilibrium dynamics are protected during APNN optimization rather than being overwhelmed by dominant macroscopic gradients.
    \item \textbf{Harmonized rotational preprocessing.} We introduce a harmonized cone together with a physically anchored reference direction, and use manifold isometric rotation to preprocess conflicting task gradients. This mechanism removes destructive gradient conflicts while preserving the norm information carried by the original gradients, especially in the kinetic regime.
    \item \textbf{Magnitude restoration and fair aggregation.} We further equip HRGrad with magnitude restoration and fair consensus aggregation, so that the final update remains positively aligned with all rotated task gradients and balances optimization progress across different scales and physical objectives.
    \item \textbf{Theory and multiscale validation.} We provide a convergence analysis for the proposed gradient preprocessing framework and validate HRGrad on the Boltzmann--BGK equation, the linear transport equation, the ES-BGK equation, and the linear semiconductor Boltzmann--Poisson equation, demonstrating stable and robust performance across diverse multiscale kinetic regimes.
\end{itemize}

The remainder of the paper is organized as follows. \Cref{sec:problem} introduces the Boltzmann-BGK equation and the linear transport equation, commonly used asymptotic-preserving numerical methods, and the formulation of the APNNs loss.
\Cref{sec:method} analyzes the failure modes of multiscale APNN training and develops the HRGrad method, including its geometric motivation, theoretical properties, and implementation.
\Cref{sec:result} showcases numerical results for several kinetic equations solved by HRGrad and compares them with existing multi-task learning baselines and parameterized APNN strategies.
Finally, \Cref{sec:conclusion} offers the conclusion and outlook.

\section{Preliminaries}\label{sec:problem}
This section briefly introduces the kinetic equations of interest and the framework of asymptotic-preserving neural networks (APNNs).

\subsection{Multiscale kinetic problems}
We consider two prototypical multiscale kinetic equations that describe the evolution of a particle distribution function $f(t,\bm{x},\bm{v})$ under the scaling of the Knudsen number $\varepsilon$.

\paragraph{The Boltzmann-BGK equation}
The Boltzmann-BGK equation reads
\begin{equation}\label{eqn:bgk}
  \partial_t f + v \partial_x f = \frac{1}{\varepsilon} \left( M(U) - f \right),
\end{equation}
where $M(U)$ is the local Maxwellian defined by the macroscopic moments $U = \langle m f \rangle = (\rho, \rho u, E)^T$, with $m=(1,v,\frac{1}{2}v^2)^T$ and $\langle \cdot \rangle = \int \cdot \, d\bm{v}$. 
As $\varepsilon \to 0$, we have $f \to M(U)$, and the moments satisfies the local conservation laws:
\begin{equation}\label{eqn:claw}
  \partial_t U + \partial_x \langle v m f \rangle = 0,
\end{equation}
which reduces to the macroscopic compressible Euler equations \cite{filbet2010class}.

\paragraph{The linear transport equation}
The one-dimensional linear transport equation under diffusive scaling is 
\begin{equation}\label{eqn:lt}
    \varepsilon \partial_t f + v \partial_x f = \frac{1}{\varepsilon} \left( \frac{1}{2}\int_{-1}^1 f \, d\bm{v}' - f \right).
\end{equation}
By defining the even and odd parities $r(t,\bm{x},\bm{v}) = \frac{1}{2}(f(\bm{v})+f(-\bm{v}))$ and $j(t,\bm{x},\bm{v}) = \frac{1}{2\varepsilon}(f(\bm{v})-f(-\bm{v}))$, the system transforms into:
\begin{equation}\label{eqn:even-odd-system}
    \left\{
    \begin{aligned}
        \varepsilon^2 \partial_t r + \varepsilon^2 v\partial_x j &= \rho - r, \\
        \varepsilon^2 \partial_t j + v\partial_x r &= - j,
    \end{aligned}
    \right.
\end{equation}
where $\rho = \langle r \rangle := \int_0^1 r\,d\bm{v}$. In the diffusive limit $\varepsilon \to 0$, we obtain $r \to \rho$ and $j \to -v\partial_x \rho$, and $\rho$ satisfies the diffusion equation:
\begin{equation}\label{eqn:diff}
  \partial_t\rho - \frac13\,\partial_{xx}\rho = 0.
\end{equation}

\subsection{Asymptotic-preserving neural networks (APNNs)}
Standard Physics-Informed Neural Networks (PINNs) approximate $f$ by directly minimizing the PDE residuals. However, in the diffusive or fluid regime ($\varepsilon \to 0$), the standard PINN loss fundamentally fails to capture the macroscopic evolution constraints, leading to incorrect predictions of the macroscopic dynamics \cite{jin2023ap,jin2024ap,liu2025ap}.

To overcome this, Asymptotic-Preserving Neural Networks (APNNs) \cite{jin2023ap} modify the loss function by explicitly coupling the microscopic kinetic equations with their corresponding macroscopic conservation laws. 

\begin{figure}[ht]
    \centering
    \includegraphics[width=0.45\textwidth]{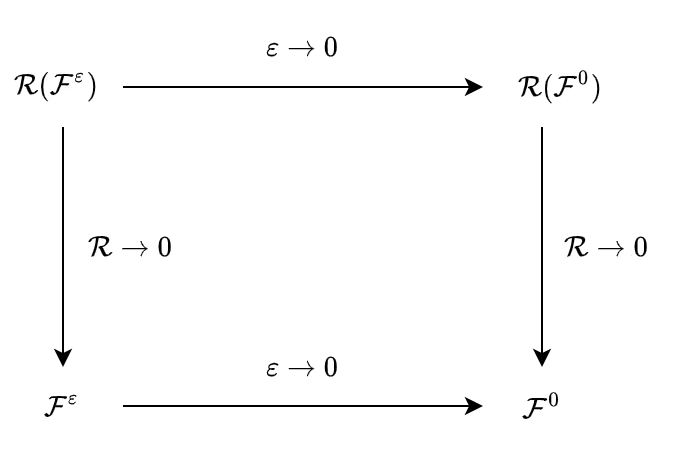}
    \vspace{-16pt}
    \caption{Illustration of the APNNs framework.}
    \label{fig:apnns}
\end{figure}

As shown in \Cref{fig:apnns}, the strategy involves embedding the macroscopic equations directly into the neural network training. 
For instance, when solving the Boltzmann-BGK equation, APNNs approximate $f$ and $U$ by continuous neural networks $f_\theta$ and $U_\theta$, and define the loss functional as
\begin{equation}\label{eqn:loss-bgk}
  \mathcal{L}^{\varepsilon}_{\mathrm{APNNs}}
  = \mathcal{L}^{\varepsilon}_{\mathrm{Res}}
  +\mathcal{L}^{\varepsilon}_{\mathrm{Claw}}
  +\mathcal{L}^{\varepsilon}_{\mathrm{Cnt}}
  +\mathcal{L}^{\varepsilon}_{\mathrm{BC}}
  +\mathcal{L}^{\varepsilon}_{\mathrm{IC}},
\end{equation}
where the PDE structure constraints are:
\begin{equation}\label{eqn:loss-bgk-terms}
\left\{
\begin{aligned}
  \mathcal{L}^{\varepsilon}_{\mathrm{Res}}
  &=\lambda_1\bigl\|\varepsilon(\partial_t f_\theta+v\,\partial_x f_\theta)-(M(U_\theta)-f_\theta)\bigr\|^2,\\
  \mathcal{L}^{\varepsilon}_{\mathrm{Claw}}
  &=\lambda_2\bigl\|\partial_t U_\theta+\partial_x\langle v m f_\theta\rangle\bigr\|^2,\\
  \mathcal{L}^{\varepsilon}_{\mathrm{Cnt}}
  &=\lambda_3\bigl\|U_\theta-\langle m f_\theta\rangle\bigr\|^2.
\end{aligned}
\right.
\end{equation}
Similarly, for the linear transport equation, APNNs enforce the macroscopic density evolution:
\begin{equation}\label{eqn:moment}
  \partial_t \rho_\theta + \langle v \partial_x j_\theta \rangle = 0.
\end{equation}
By jointly minimizing the residuals of both the microscopic and macroscopic equations, APNNs preserve the correct asymptotic behavior as $\varepsilon \to 0$, enabling uniformly accurate simulations across all scales of $\varepsilon$.

\section{Methodology}\label{sec:method}

To simultaneously solve the multiscale kinetic equations across the continuous Knudsen spectrum $\varepsilon \in (0, 1]$, we formulate the training process as a continuous multi-task learning (MTL) problem. In each training iteration, we sample a mini-batch of $m$ scale parameters $\mathcal{B} = \{\varepsilon_1, \varepsilon_2, \cdots, \varepsilon_m\}$. For each sampled parameter $\varepsilon_i$, we adopt the Asymptotic-Preserving Neural Network (APNN) architecture described in Section~\ref{sec:problem} as the unified base model. Consequently, the objective for the $i$-th task is precisely defined as the APNN loss functional evaluated at $\varepsilon_i$, i.e., $\mathcal{L}_i(\theta) := \mathcal{L}_{\mathrm{APNNs}}^{\varepsilon_i}(\theta)$ (as formulated in Eq.~\ref{eqn:loss-bgk-terms} for the Boltzmann-BGK equation or its equivalent for the linear transport equation). The optimization trajectory is thus driven by the task-specific gradients $g_i = \nabla_\theta \mathcal{L}_i(\theta)$.

When the neural solver is faced with tasks across drastically different kinetic regimes simultaneously, we observe that existing MTL methods suffer from severe `failure modes', making multi-task optimization exceptionally challenging. To clarify the rationale and innovation behind our proposed method, we first mathematically dissect these failure modes from the perspective of multiscale gradient conflicts. Based on this analysis, we develop the Harmonized Rotational Gradient (HRGrad) method, which abandons traditional dissipative projections in favor of manifold isometric rotations, ensuring that the preconditioning gradient consistently maintains physically valid, conflict-free, and energy-preserving updates.

\subsection{Gradient conflict in multiscale kinetic regimes}
Although APNNs employ macro-micro and even-odd decompositions \cite{jin2023ap,jin2024ap,liu2025ap} to enforce asymptotic correctness within a \textit{single} regime, jointly training tasks across vastly different $\varepsilon$ values with shared parameters $\theta$ inevitably triggers severe gradient conflicts (i.e., $\langle g_i, g_j \rangle < 0$). This geometric misalignment originates from the fundamentally opposing physical behaviors captured by the APNN sub-losses across different kinetic regimes. Specifically, let $g_{\text{micro}} = \nabla_\theta \mathcal{L}_{\mathrm{Res}}^\varepsilon$ denote the gradient driven by microscopic transport mechanisms, and $g_{\text{macro}} = \nabla_\theta \mathcal{L}_{\mathrm{Claw}}^\varepsilon$ (or macroscopic density evolution) denote the gradient driven by macroscopic relaxation and conservation. We analyze the dynamics across three typical regimes:

\begin{itemize}
  \item \textbf{Micro-limit regime ($\varepsilon \to 1$; kinetic side):}
  The dynamics are highly non-equilibrium and transport-dominated. The optimization is overwhelmingly guided by the microscopic residual $\mathcal{L}_{\mathrm{Res}}^\varepsilon$ (e.g., the transport term $\varepsilon(\partial_t f_\theta + v\partial_x f_\theta)$ in Eq.~\ref{eqn:loss-bgk-terms}). The corresponding gradients $g_{\text{micro}}$ actively push the network to capture and preserve high-frequency, anisotropic structures in the phase space (such as Knudsen boundary layers), strongly resisting any premature relaxation toward the local Maxwellian $M(U_\theta)$.
  
  \item \textbf{Macro-limit regime ($\varepsilon \to 0$; fluid/diffusive side):}
  The system reaches the hydrodynamic or diffusive limit. The stiff collision penalty $\frac{1}{\varepsilon}(M(U_\theta) - f_\theta)$ strictly enforces the local relaxation $f_\theta \to M(U_\theta)$ (or $r_\theta \to \rho_\theta$ for the linear transport equation). The optimization focus shifts entirely to the macroscopic conservation laws (e.g., $\mathcal{L}_{\mathrm{Claw}}^\varepsilon$ in Eq.~\ref{eqn:loss-bgk-terms} or the density evolution in Eq.~\ref{eqn:moment}). The gradients $g_{\text{macro}}$ from this regime drive the neural network parameters to rapidly collapse onto a smooth, low-frequency macroscopic equilibrium manifold, heavily penalizing deviations from local equilibrium.
  
  \item \textbf{Intermediate regime ($\varepsilon \in (\varepsilon_{\min}, 1)$):} 
  Neither physical limit dominates; free transport and collisional relaxation contribute comparably. The network receives mixed gradient signals demanding both macroscopic consistency (moment evolution) and microscopic fidelity (non-equilibrium detail). Mini-batches from this band yield gradients that fluctuate between macro-scale isotropic smoothing and micro-scale anisotropic retention, severely increasing the frequency of pairwise disagreements.
\end{itemize}

From a mathematical optimization perspective, this multiscale physical transition introduces two fatal ``failure modes'' for conventional gradient aggregation:

\paragraph{\textbf{Failure Mode 1: Directional Contradiction ($\langle g_{\text{micro}}, g_{\text{macro}} \rangle < 0$)}} 
Macroscopic objectives actively smooth out non-equilibrium details to enforce local equilibrium, while microscopic objectives strive to preserve them. A parameter update step $\Delta \theta = -\eta g_{\text{macro}}$ that strictly favors the macroscopic continuum limit will inevitably degrade the microscopic transport fidelity. Based on the first-order Taylor expansion, the variation in the microscopic loss is:
\begin{equation}
    \mathcal{L}_{\text{micro}}(\theta + \Delta \theta) \approx \mathcal{L}_{\text{micro}}(\theta) - \eta \langle g_{\text{micro}}, g_{\text{macro}} \rangle.
\end{equation}
Since the underlying physical mechanisms are fundamentally opposing (isotropic smoothing vs. anisotropic high-frequency preservation), we rigorously have $\langle g_{\text{micro}}, g_{\text{macro}} \rangle < 0$. This yields an undesirable strict increase in the micro-loss ($\mathcal{L}_{\text{micro}}(\theta + \Delta \theta) > \mathcal{L}_{\text{micro}}(\theta)$). Consequently, these gradients inherently point in geometrically opposing directions, yielding persistent obtuse angles $\phi \in (\pi/2, \pi]$ and triggering severe negative transfer across the continuous $\varepsilon$ spectrum.

\paragraph{\textbf{Failure Mode 2: Magnitude Disparity and Energy Clipping}} 
Beyond directional conflict, the explicit $\mathcal{O}(1/\varepsilon)$ (in Eq.~\ref{eqn:bgk}) or $\mathcal{O}(1/\varepsilon^2)$ (in Eq.~\ref{eqn:even-odd-system}) scaling introduces extreme numerical kinetic stiffness. As $\varepsilon \to 0$, the macroscopic conservation and relaxation losses exponentially dominate the total loss landscape, creating an extreme spectral gap in gradient magnitudes:
\begin{equation}
    \lim_{\varepsilon \to 0} \frac{\|g_{\text{macro}}\|_2}{\|g_{\text{micro}}\|_2} = \mathcal{O}\left(\frac{1}{\varepsilon^p}\right) \to \infty, \quad (p \ge 1).
\end{equation}
This mathematically guarantees that the macroscopic gradient overwhelmingly eclipses the microscopic gradient ($\|g_{\text{macro}}\|_2 \gg \|g_{\text{micro}}\|_2$). 

Under this extreme multiscale heterogeneity, standard gradient manipulation methods fundamentally fail. Traditional optimizers (e.g., PCGrad, ConFIG) rely on Euclidean orthogonal projections $\mathcal{P}$ to resolve conflicts, blindly projecting the conflicting components of the microscopic gradients onto the normal plane of the dominant macroscopic gradient $g_{\text{macro}}$:
\begin{equation}
    \mathcal{P}(g_{\text{micro}}) = g_{\text{micro}} - \frac{\langle g_{\text{micro}}, g_{\text{macro}} \rangle}{\|g_{\text{macro}}\|_2^2} g_{\text{macro}}.
\end{equation}
By evaluating the remaining kinetic energy ($L_2$-norm) of the projected microscopic gradient, we reveal the inherent mathematical flaw of the projection operator:
\begin{equation}
\begin{aligned}
    \|\mathcal{P}(g_{\text{micro}})\|_2^2 &= \|g_{\text{micro}}\|_2^2 \left( 1 - \frac{\langle g_{\text{micro}}, g_{\text{macro}} \rangle^2}{\|g_{\text{micro}}\|_2^2 \|g_{\text{macro}}\|_2^2} \right) \\
    &= \|g_{\text{micro}}\|_2^2 (1 - \cos^2 \phi) \\
    &= \|g_{\text{micro}}\|_2^2 \sin^2 \phi.
\end{aligned}
\end{equation}
    This equation uncovers the catastrophic \textit{energy clipping} mechanism: when the multiscale conflict is severe ($\phi \to \pi$), $\sin^2 \phi \to 0$. Because the original microscopic kinetic signal $\|g_{\text{micro}}\|_2$ is already extremely weak due to the magnitude disparity, the projection mathematically acts as a \textit{numerical low-pass filter}, irreversibly diminishing its magnitude to near zero ($\|\mathcal{P}(g_{\text{micro}})\|_2 \to 0$). As illustrated in \Cref{fig:motivation_failure}(a), traditional projection methods (e.g., PCGrad, ConFIG) mathematically act as an energy clipping mechanism ($\|\mathcal{P}(g_{\text{micro}})\|_2 \ll \|g_{\text{micro}}\|_2$). As the conflict angle $\phi \to 180^\circ$, the norm $\|\mathcal{P}(g_{\text{micro}})\|_2 \to 0$, irreversibly dissipating high-frequency kinetic features. To address this, our proposed Harmonized Rotational Gradient (HRGrad) performs an isometric manifold rotation towards a physical anchor $d^*$ (\Cref{fig:motivation_failure}(b)). This approach rigorously guarantees non-conflict while flawlessly preserving $100\%$ of the gradient magnitude (i.e., kinetic physical energy, $\|g_{\text{micro}}^{\text{rot}}\|_2 \equiv \|g_{\text{micro}}\|_2$).

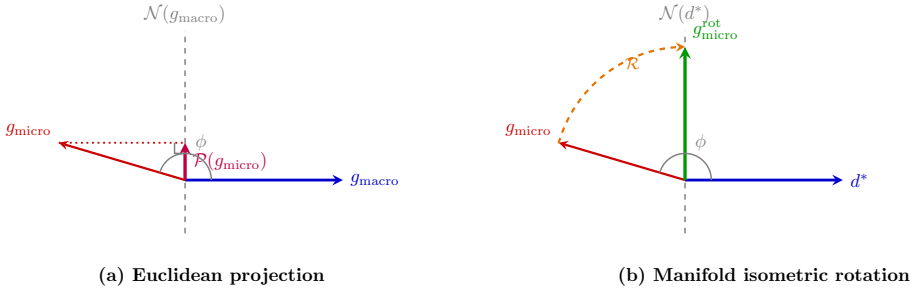
\begin{figure}[htbp]
    \centering
    \resizebox{0.95\textwidth}{!}{
    \begin{tikzpicture}[>=stealth, thick]
        \begin{scope}[xshift=0cm]
            \draw[->, line width=1.5pt, blue!80!black] (0,0) -- (3, 0) node[right] {$g_{\text{macro}}$};
            
            \draw[dashed, gray] (0, -1.0) -- (0, 2.8) node[above] {$\mathcal{N}(g_{\text{macro}})$};
            
            \coordinate (Gmicro) at (-2.4, 0.7);
            \draw[->, line width=1.2pt, red!80!black] (0,0) -- (Gmicro) node[above left] {$g_{\text{micro}}$};
            
            \draw[dotted, red!80!black, line width=1pt] (Gmicro) -- (0, 0.7);
            
            \draw[->, line width=1.8pt, purple] (0,0) -- (0, 0.7) node[midway, right] {$\mathcal{P}(g_{\text{micro}})$};
            
            \draw [gray, domain=0:164] plot ({0.5*cos(\x)}, {0.5*sin(\x)});
            \node[gray] at (0.3, 0.7) {$\phi$};
            
            \draw[gray] (-0.2, 0.7) -- (-0.2, 0.5) -- (0, 0.5);
            
            \node at (0.5, -1.8) {\textbf{(a) Euclidean projection}};
        \end{scope}
        
        \begin{scope}[xshift=9.5cm]
            \draw[->, line width=1.5pt, blue!80!black] (0,0) -- (3, 0) node[right] {$d^*$};
            
            \draw[dashed, gray] (0, -1.0) -- (0, 2.8) node[above] {$\mathcal{N}(d^*)$};
            
            \coordinate (Gmicro2) at (-2.4, 0.7);
            \draw[->, line width=1.2pt, red!80!black] (0,0) -- (Gmicro2) node[above left] {$g_{\text{micro}}$};
            
            \draw[->, dashed, orange!90!black, line width=1.2pt, domain=164:90] plot ({2.5*cos(\x)}, {2.5*sin(\x)});
            \node[orange!90!black, font=\small] at (-1.0, 2.2) {$\mathcal{R}$};
            
            \coordinate (Grot) at (0, 2.5);
            \draw[->, line width=1.8pt, green!60!black] (0,0) -- (Grot) node[above right] {$g_{\text{micro}}^{\text{rot}}$};
            
            \draw [gray, domain=0:164] plot ({0.5*cos(\x)}, {0.5*sin(\x)});
            \node[gray] at (0.3, 0.7) {$\phi$};
            
            \node at (1.5, -1.8) {\textbf{(b) Manifold isometric rotation}};
        \end{scope}
    \end{tikzpicture}
    }
    \caption{Comparison of gradient aggregation strategies under multiscale conflicts. (a) Euclidean projection severely clips the magnitude of the microscopic gradient $\mathcal{P}(g_{\text{micro}})$. (b) Manifold isometric rotation avoids directional conflicts while preserving the gradient magnitude $\|g_{\text{micro}}^{\text{rot}}\|_2 \equiv \|g_{\text{micro}}\|_2$.}
    \label{fig:motivation_failure}
\end{figure}

As a result, the high-frequency non-equilibrium physical features are permanently erased during parameter updates. The shared APNNs suffer from \textit{asymptotic manifold drift} and degenerate into a naive macroscopic Euler/Diffusion solver, entirely losing its capacity to capture multiscale transition details. This profound vulnerability mathematically necessitates the development of the Harmonized Rotational Gradient (HRGrad) method, which abandons dissipative projections in favor of isometric manifold rotations.

\begin{figure}[!htb]
     \centering
     \subfigure[$\varepsilon=1.0$]{\includegraphics[width=\textwidth]{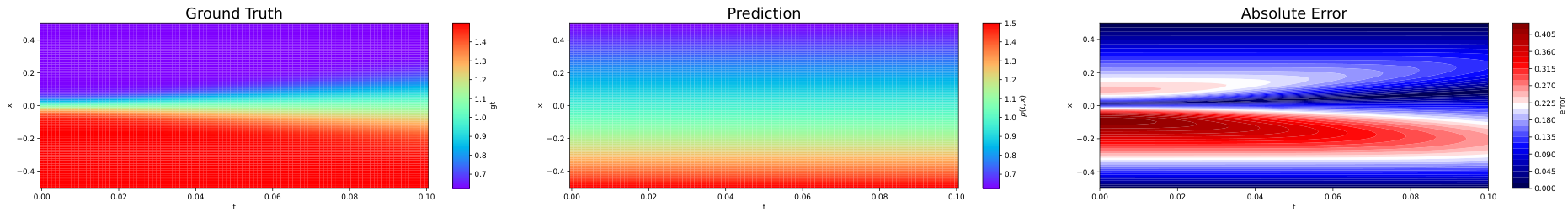}} \\
     \vspace{-8pt}
     \subfigure[$\varepsilon=0.1$]{\includegraphics[width=\textwidth]{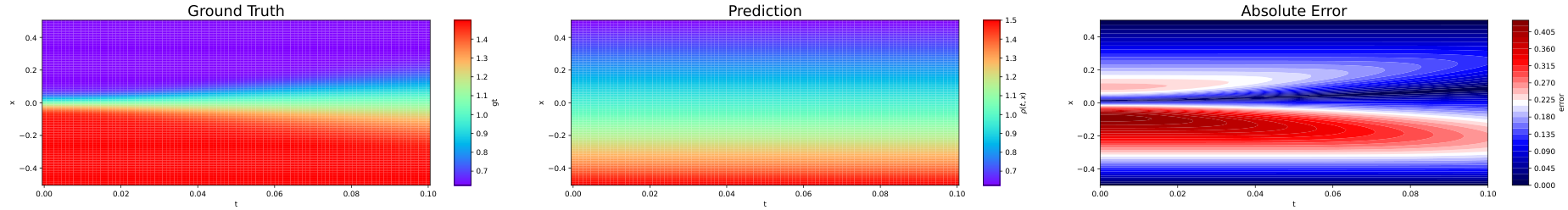}} \\
     \vspace{-8pt}
     \subfigure[$\varepsilon=0.01$]{\includegraphics[width=\textwidth]{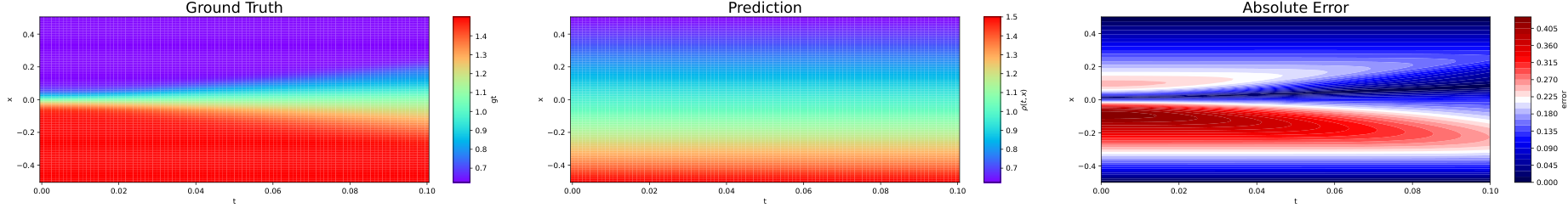}} \\
     \vspace{-8pt}
     \caption{MAD-MTL predicts the macroscopic density moments $\rho$ of the 1D Riemann problem. The comparison results are shown for (a) $\varepsilon = 1.0$, (b) $\varepsilon = 0.1$, and (c) $\varepsilon = 0.01$.}
\end{figure}

\begin{figure}[!htb]
     \centering
     \subfigure[$\varepsilon=1.0$]{\includegraphics[width=\textwidth]{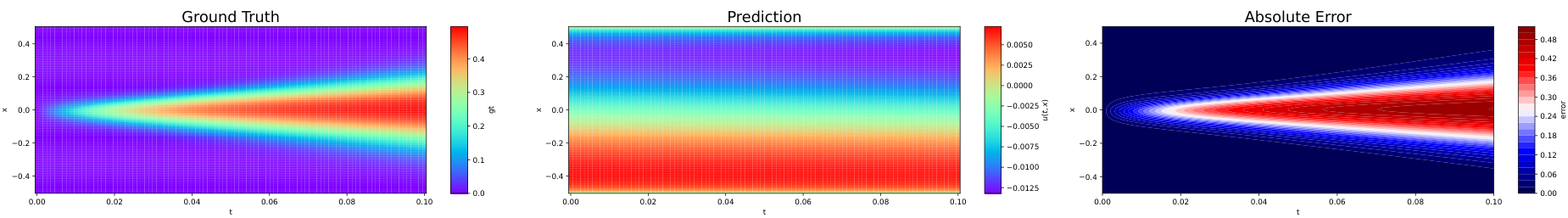}} \\
     \vspace{-8pt}
     \subfigure[$\varepsilon=0.1$]{\includegraphics[width=\textwidth]{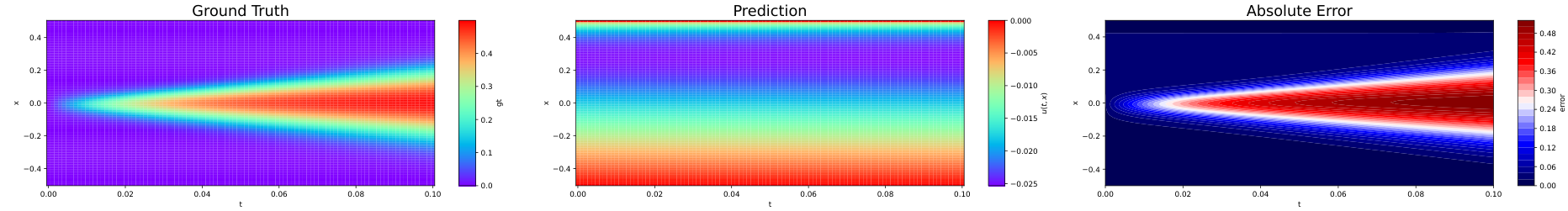}} \\
     \vspace{-8pt}
     \subfigure[$\varepsilon=0.01$]{\includegraphics[width=\textwidth]{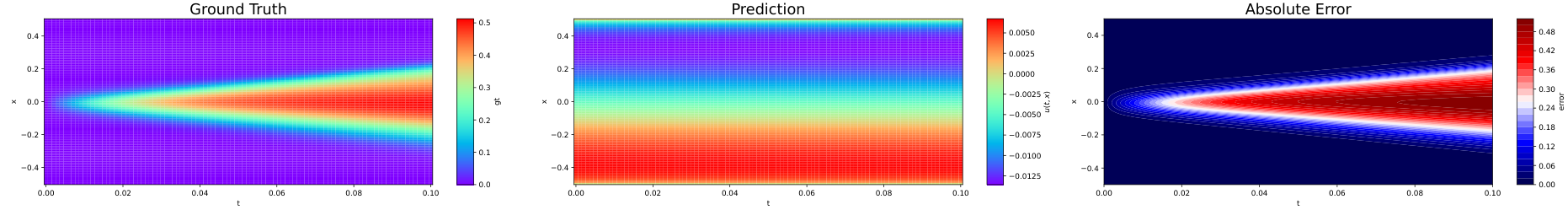}} \\
     \vspace{-8pt}
     \caption{MAD-MTL predicts the macroscopic velocity moments $u$ of the 1D Riemann problem. The comparison results are shown for (a) $\varepsilon = 1.0$, (b) $\varepsilon = 0.1$, and (c) $\varepsilon = 0.01$.}
\end{figure}

\begin{figure}[!htb]
     \centering
     \subfigure[$\varepsilon=1.0$]{\includegraphics[width=\textwidth]{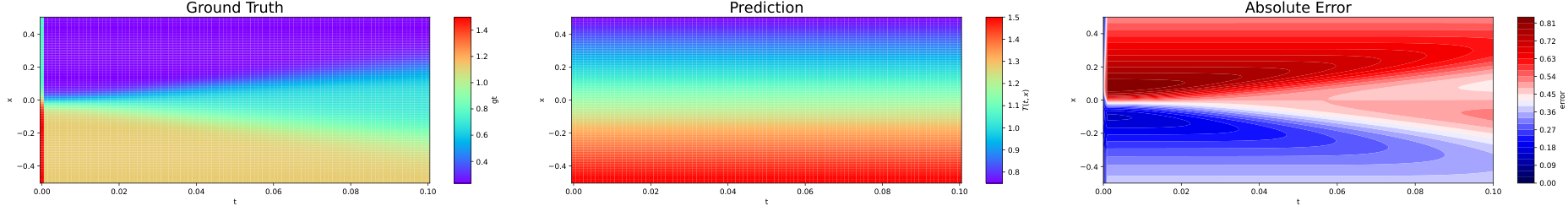}} \\
     \vspace{-8pt}
     \subfigure[$\varepsilon=0.1$]{\includegraphics[width=\textwidth]{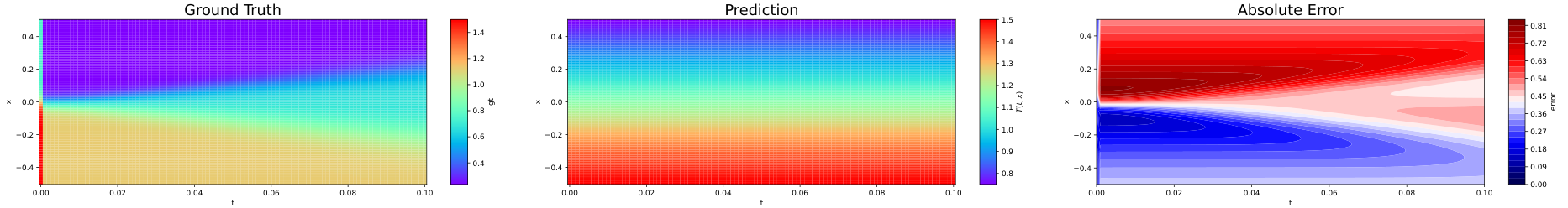}} \\
     \vspace{-8pt}
     \subfigure[$\varepsilon=0.01$]{\includegraphics[width=\textwidth]{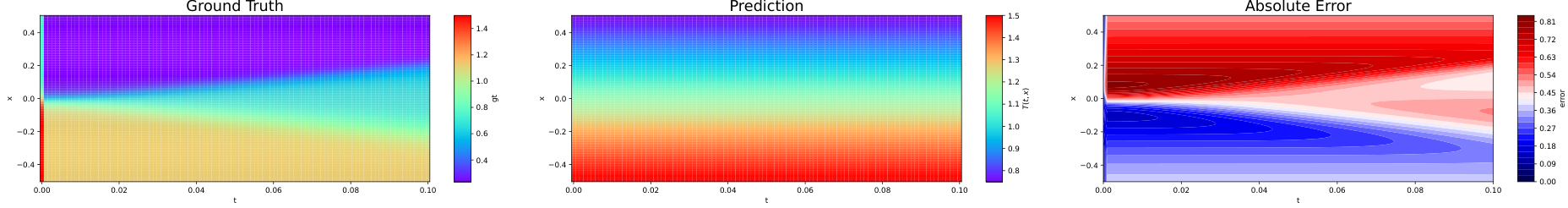}} \\
     \vspace{-8pt}
     \caption{MAD-MTL predicts the macroscopic temperature moments $T$ of the 1D Riemann problem. The comparison results are shown for (a) $\varepsilon = 1.0$, (b) $\varepsilon = 0.1$, and (c) $\varepsilon = 0.01$.}
\end{figure}

\subsection{Harmonized Rotational Gradient (HRGrad)}

Generally, we consider a multiscale optimization procedure with a set of $m$ task losses, i.e., $\{\mathcal{L}_1,\mathcal{L}_2,\cdots,\mathcal{L}_m\}$, sampled from a continuous Knudsen spectrum $\varepsilon \in (0,1]$.
Let $G = [g_1,g_2, \cdots, g_m] \in \mathbb{R}^{D \times m}$ denote the primal gradient matrix corresponding to each loss function, where $g_i = \nabla_\theta \mathcal{L}_i$. 

Traditional gradient manipulation methods (e.g., PCGrad, ConFIG) rely on Euclidean orthogonal projections to resolve gradient conflicts (i.e., $g_i^\top g_j < 0$). However, orthogonal projection $\mathcal{P}$ is inherently a non-isometric mapping, meaning that its operation strictly diminishes the gradient norm ($\|\mathcal{P}(g_i)\| < \|g_i\|$). In kinetic theory, microscopic non-equilibrium mechanisms (e.g., Knudsen boundary layers) produce gradients with significantly smaller magnitudes than macroscopic conservation laws. As intuitively illustrated in \Cref{fig:method_comparison}(a), PCGrad resolves conflicts by summing orthogonal projections ($\mathcal{O}$), which acts as a \textit{mathematical energy clipping} operator ($\|\mathcal{O}(g_i,g_j)\| \ll \|g_i\|$), inadvertently dissipating these high-frequency microscopic physical features. IMTL-G (\Cref{fig:method_comparison}(b)) rescales gradients to be equal-length but may not fully guarantee manifold feasibility. ConFIG (\Cref{fig:method_comparison}(c)) attempts to mitigate magnitude loss by summing unit vectors $\mathcal{U}$ of the orthogonal components, but the underlying direction still relies on a lossy projection basis. Furthermore, blindly seeking non-conflict directions in Euclidean space often forces the update trajectory to drift away from the physically feasible asymptotic-preserving (AP) manifold.

\begin{figure*}[htbp]
    \centering
    \resizebox{\textwidth}{!}{
    \begin{tikzpicture}[>=stealth, thick, scale=0.95, every node/.style={scale=0.95}]
        
        \begin{scope}[xshift=0cm]
            \coordinate (O) at (0,0);
            \coordinate (g1) at (0:2.5);
            \coordinate (g2) at (120:2.5);
            
            \coordinate (Og2) at (90:2.165); 
            \coordinate (Og1) at (30:2.165); 
            
            \draw[->, line width=1.5pt, black!70] (O) -- (g1) node[right] {$g_1$};
            \draw[->, line width=1.5pt, black!70] (O) -- (g2) node[above left] {$g_2$};
            
            \draw[->, line width=1.5pt, teal!80!black] (O) -- (Og2) node[above left] {$\mathcal{O}(g_2, g_1)$};
            \draw[->, line width=1.5pt, teal!80!black] (O) -- (Og1) node[right] {$\mathcal{O}(g_1, g_2)$};
            
            \draw[dashed, gray] (g2) -- (Og2);
            \draw[dashed, gray] (g1) -- (Og1);
            
            \draw[gray] (0, 1.965) -- (-0.2, 1.965) -- (-0.2, 2.165);
            \begin{scope}[shift={(Og1)}, rotate=30]
                \draw[gray] (0, -0.2) -- (-0.2, -0.2) -- (-0.2, 0);
            \end{scope}
            
            \coordinate (gPC) at ($(Og1)+(Og2)$);
            \draw[->, line width=1.8pt, green!60!black] (O) -- (gPC) node[above] {$g_{\text{PCGrad}}$};
            \draw[dashed, gray] (Og2) -- (gPC) -- (Og1);
            
            \node[font=\large\bfseries] at (1.25, -1.2) {(a) PCGrad};
        \end{scope}

        \begin{scope}[xshift=5.5cm]
            \coordinate (O) at (0,0);
            \coordinate (g1) at (0:2.5);
            \coordinate (g2) at (120:2.5);
            
            \draw[->, line width=1.5pt, black!70] (O) -- (g1) node[right] {$g_1$};
            \draw[->, line width=1.5pt, black!70] (O) -- (g2) node[above left] {$g_2$};
            
            \coordinate (a1g1) at (0:1.5);
            \coordinate (a2g2) at (120:1.5);
            
            \draw[->, line width=1.5pt, cyan!80!blue] (O) -- (a1g1) node[below] {$\alpha_1 g_1$};
            \draw[->, line width=1.5pt, cyan!80!blue] (O) -- (a2g2) node[left] {$\alpha_2 g_2$};
            
            \coordinate (gIMTL) at ($(a1g1)+(a2g2)$);
            \draw[->, line width=1.8pt, blue!60!black] (O) -- (gIMTL) node[above right] {$g_{\text{IMTL-G}}$};
            \draw[dashed, gray] (a1g1) -- (gIMTL) -- (a2g2);
            \node[font=\large\bfseries] at (1.25, -1.2) {(b) IMTL-G};
        \end{scope}

        \begin{scope}[xshift=11cm]
            \coordinate (O) at (0,0);
            \coordinate (g1) at (0:2.5);
            \coordinate (g2) at (120:2.5);
            
            \draw[->, line width=1.5pt, black!70] (O) -- (g1) node[right] {$g_1$};
            \draw[->, line width=1.5pt, black!70] (O) -- (g2) node[above left] {$g_2$};
            
            \coordinate (Og2) at (90:2.165); 
            \coordinate (Og1) at (30:2.165);
            
            \draw[dashed, gray, opacity=0.6] (g2) -- (Og2);
            \draw[dashed, gray, opacity=0.6] (g1) -- (Og1);
            \draw[dashed, gray, opacity=0.6] (O) -- (Og2);
            \draw[dashed, gray, opacity=0.6] (O) -- (Og1);
            
            \coordinate (U1) at (30:1.5); 
            \coordinate (U2) at (90:1.5);
            
            \draw[->, line width=1.5pt, magenta] (O) -- (U1) node[right] {$\mathcal{U}(\mathcal{O}(g_1, g_2))$};
            \draw[->, line width=1.5pt, magenta] (O) -- (U2) node[left] {$\mathcal{U}(\mathcal{O}(g_2, g_1))$};
            
            \coordinate (gConFIG) at ($(U1)+(U2)$);
            \draw[->, line width=1.8pt, magenta!80!black] (O) -- (gConFIG) node[above] {$g_{\text{ConFIG}}$};
            \draw[dashed, gray] (U1) -- (gConFIG) -- (U2);
            
            \node[font=\large\bfseries] at (1.25, -1.2) {(c) ConFIG};
        \end{scope}

        \begin{scope}[xshift=16.5cm]
            \coordinate (O) at (0,0);
            \coordinate (g1) at (0:2.5);
            \coordinate (g2) at (120:2.5);
            
            \fill[green!10] (O) -- (30:3) arc (30:90:3) -- cycle;
            \draw[dashed, green!60!black, line width=1pt] (O) -- (30:3) node[right] {$\perp g_2$};
            \draw[dashed, green!60!black, line width=1pt] (O) -- (90:3) node[above] {$\perp g_1$};
            \node[green!60!black, font=\bfseries] at (60:2.8) {$\mathbb{H}$};
            
            \draw[->, line width=1.5pt, black!70] (O) -- (g1) node[right] {$g_1$};
            \draw[->, line width=1.5pt, black!70] (O) -- (g2) node[above left] {$g_2$};
            
            \coordinate (dstar) at (60:2.5);
            \draw[->, dashed, line width=1.2pt, blue] (O) -- (dstar) node[above right] {$d^*$};
            
            \coordinate (g1rot) at (30:2.5); 
            \coordinate (g2rot) at (90:2.5); 
            
            \draw[->, line width=1.5pt, red] (O) -- (g1rot) node[right] {$g_1^{\text{rot}}$};
            \draw[->, line width=1.5pt, red] (O) -- (g2rot) node[left] {$g_2^{\text{rot}}$};
            
            \draw[->, orange, thick] (g1) arc (0:30:2.5);
            \draw[->, orange, thick] (g2) arc (120:90:2.5);
            
            \coordinate (gHR) at ($(g1rot)+(g2rot)$);
            \draw[->, line width=1.8pt, red!70!black] (O) -- (gHR) node[above] {$g_{\text{HRGrad}}$};
            \draw[dashed, gray] (g1rot) -- (gHR) -- (g2rot);
            
            \node[font=\large\bfseries] at (1.25, -1.2) {(d) HRGrad (Ours)};
        \end{scope}
    \end{tikzpicture}
    }
    \caption{Geometric sketch comparison of gradient manipulation methods under conflicting tasks ($g_1, g_2$). (a) PCGrad relies on energy-clipping orthogonal projections. (b) IMTL-G rescales gradients to be equal-length. (c) ConFIG sums unit vectors of orthogonal components. (d) HRGrad isometrically rotates gradients into a Harmonized Cone $\mathbb{H}$, resolving conflicts while fully preserving kinetic energy.}
    \label{fig:method_comparison}
\end{figure*}
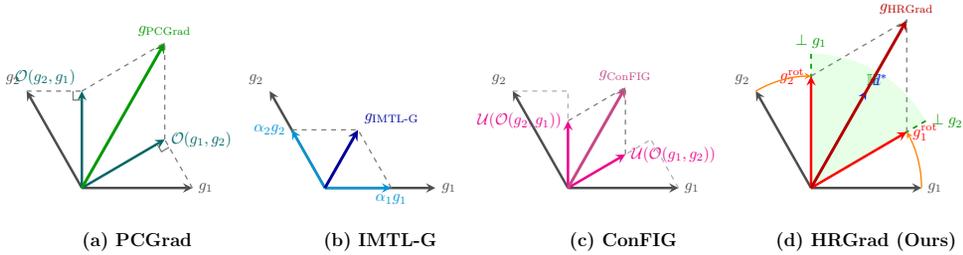

To address these fundamental theoretical flaws, we propose the \textbf{Harmonized Rotational Gradient (HRGrad)}. As visualized in Figure~\ref{fig:method_comparison}, HRGrad elevates conflict resolution from Euclidean projection to \textit{manifold isometric rotation}. Specifically, PCGrad (Figure~\ref{fig:method_comparison}(a)) resolves conflicts by summing orthogonal projections ($\mathcal{O}$), which inherently acts as an \textit{energy clipping} operator ($\|\mathcal{O}(g_i,g_j)\| \ll \|g_i\|$), irreversibly dissipating microscopic physical features. IMTL-G (Figure~\ref{fig:method_comparison}(b)) rescales gradients to be equal-length but may not fully guarantee manifold feasibility. ConFIG (Figure~\ref{fig:method_comparison}(c)) attempts to mitigate magnitude loss by summing unit vectors $\mathcal{U}$ of the orthogonal components, but the underlying direction still relies on a lossy projection basis. In contrast, HRGrad (Figure~\ref{fig:method_comparison}(d)) explicitly constructs the \textit{Harmonized Cone} $\mathbb{H} = \mathbb{K} \cap \mathbb{K}^*$ and extracts a physically valid anchor $d^*$. Conflicting gradients are deflected into $\mathbb{H}$ via \textit{manifold isometric rotation} (orange arcs), mathematically ensuring absolute non-conflict while flawlessly preserving $100\%$ of the kinetic energy ($\|g_i^{\text{rot}}\| \equiv \|g_i\|$). This approach is strictly bounded within a physically valid harmonized cone, which operates in four rigorous stages.

\paragraph{\textbf{Stage 1: Harmonized Cone and Physical Anchoring}}
A physically valid multi-scale update direction must be both feasible (expressible as a non-negative linear combination of actual physical mechanisms) and non-conflicting across all $\varepsilon$ scales. We define the Primal Gradient Cone $\mathbb{K} = \{ G\lambda \mid \lambda \in \mathbb{R}_+^m \}$ to guarantee physical feasibility, and the Dual Gradient Cone $\mathbb{K}^* = \{ y \in \mathbb{R}^D \mid G^\top y \ge \mathbf{0}_m \}$ to guarantee absolute non-conflict.

HRGrad restricts the optimization search space to their exact geometric intersection, the \textit{Harmonized Cone} $\mathbb{H} := \mathbb{K} \cap \mathbb{K}^*$. By leveraging the Double Description Method (DDM) to solve the algebraic constraint $G^\top G \lambda \ge \mathbf{0}_m$, we analytically extract the extreme rays $\Pi = [\pi_1, \cdots, \pi_p]$ of the feasible weight space. The physical extreme rays in the gradient space are given by $r_j = G\pi_j$. We then determine the absolute physical target direction $d^* \in \text{Int}(\mathbb{H})$ as the normalized centroid of these rays:
\begin{equation}
    d^* = \mathcal{U}\left( \sum_{j=1}^p \mathcal{U}(r_j) \right),
\end{equation}
where $\mathcal{U}(x) = x/(\|x\| + \delta)$ is the normalization operator, and $\delta>0$ is a small constant introduced for numerical stability. This strictly locks the neural network's evolution trajectory onto the AP manifold.

\paragraph{\textbf{Stage 2: Manifold Isometric Rotation Operator}}
For any normalized task gradient $\bar{g}_i = \mathcal{U}(g_i)$ that conflicts with the physical anchor (i.e., $\bar{g}_i^\top d^* < 0$), HRGrad completely abandons the projection operator. Instead, we construct a parameterized rotation operator on the 2D Riemannian tangent plane $\mathcal{T}_i = \text{span}(\bar{g}_i, d^*)$. Using the Gram-Schmidt process, we first extract the feasible orthogonal reference state:
\begin{equation}
    w_i = \mathcal{U}\left( d^* - (\bar{g}_i^\top d^*) \bar{g}_i \right).
\end{equation}
The parameterized $SO(2)$ isometric rotation operator $r_i(\alpha_i)$ with a deflection angle $\alpha_i \in [0, \pi/2]$ is formulated as:
\begin{equation}
    r_i(\alpha_i) = \cos(\alpha_i)\bar{g}_i + \sin(\alpha_i)w_i.
\end{equation}
This operator guarantees \textit{Isometric Fidelity} ($\|r_i(\alpha_i)\| \equiv 1$). The directional deflection is completely decoupled from the vector norm, ensuring that $100\%$ of the micro-scale kinetic energy is preserved during conflict resolution.

\paragraph{\textbf{Stage 3: Optimizing MER via MCAE}}
Blindly rotating the gradient to a completely orthogonal boundary constitutes a local hard-adjustment, which severely distorts task-specific physical features. To achieve a global Multi-task Equilibrium Relationship (MER), we seek the optimal rotation angles $\boldsymbol{\alpha}^* = [\alpha_1^*, \dots, \alpha_m^*]$ by minimizing a global energy functional:
\begin{equation}
    \min_{\boldsymbol{\alpha} \in [0, \pi/2]^m} \mathcal{L}(\boldsymbol{\alpha}) = \underbrace{ \frac{1}{m(m-1)} \sum_{i<j} \left[ 1 - r_i(\alpha_i)^\top r_j(\alpha_j) \right] }_{\text{Global Alignment (Suppress UCAE)}} + \lambda \underbrace{ \frac{1}{4m} \sum_{i=1}^m \| r_i(\alpha_i) - \bar{g}_i \|^2 }_{\text{Global Proximity (Suppress OCAE)}}. \label{eq:mer_objective}
\end{equation}
A second-order Taylor expansion of the Miss-Correction Angular Error (MCAE) demonstrates that any deviation from the optimal angle $\alpha_i^*$ introduces quadratic optimization penalties. Hard projections induce severe Over-Correction Angular Error (OCAE), obliterating microscopic specificities, while insufficient rotation leads to Under-Correction Angular Error (UCAE). 

Furthermore, to address the severe kinetic stiffness that varies drastically across $\varepsilon$, we introduce an adaptive search strategy for the inner optimization of Eq.~\ref{eq:mer_objective}. Let $\Delta_i^{(t)} = (L_i^{(t)} - L_i^{(t-1)}) / (L_i^{(t-1)} + \delta)$ be the relative loss change, and $s_t = \text{std}(\Delta_1^{(t)}, \dots, \Delta_m^{(t)})$ be its standard deviation. The number of inner SGD steps $\alpha_{\text{steps}}^{(t)}$ is dynamically adjusted:
\begin{equation}
    \alpha_{\text{steps}}^{(t)} = \left\lfloor \alpha_{\text{min}} + (\alpha_{\text{max}} - \alpha_{\text{min}}) \frac{s_t}{s_t + k_{\text{std}}} \right\rfloor.
\end{equation}
This mechanism allocates more computational effort to accurately locate $\boldsymbol{\alpha}^*$ when the multi-scale system enters highly stiff transitional regimes (where $s_t$ surges), strictly preventing OCAE distortion on high-frequency micro-features.

\paragraph{\textbf{Stage 4: Magnitude Restoration and Fair Aggregation}}
After achieving the perfect angular harmony, HRGrad flawlessly restores the intrinsic physical magnitudes to prevent macroscopic regimes from dominating the update. The physically restored gradient is given by:
\begin{equation}
    g_i^{\text{rot}} = \begin{cases} 
      \|g_i\| \cdot r_i(\alpha_i^*), & \text{if } \bar{g}_i^\top d^* < 0 \\
      g_i, & \text{otherwise}
   \end{cases}.
\end{equation}
To ensure scale-invariance and fair learning across the entire continuous spectrum, we adopt a pseudo-inverse equal-length aggregation. We construct the normalized orientation matrix $M = [ \mathcal{U}(g_1^{\text{rot}}), \mathcal{U}(g_2^{\text{rot}}), \cdots, \mathcal{U}(g_m^{\text{rot}}) ]$. Utilizing the Moore-Penrose pseudoinverse $M^\dagger$, we derive the globally fair consensus direction $g_u$ that maintains equal projection lengths across all rotated tasks:
\begin{equation}
    \label{eq:HRGrad_gu}
    g_u = \mathcal{U}\left[ (M^\dagger)^\top \mathbf{1}_m \right], 
\end{equation}
where $\mathbf{1}_m$ is a vector of ones. The final HRGrad update gradient is formulated by projecting the physically restored gradients onto this fair direction:
\begin{equation}
    \label{eq:HRGrad_final}
    g_{\text{HRGrad}} = \mathcal{G}_{\text{HRGrad}}(g_1, \cdots, g_m) := \left( \sum_{i=1}^m (g_i^{\text{rot}})^\top g_u \right) g_u.
\end{equation}

\subsection{Model architecture}

To seamlessly integrate HRGrad into deep learning optimizers, we apply the $\mathcal{G}_{\text{HRGrad}}$ operator directly to the pseudo-first-moments of the gradients within the Adam optimizer. This elegantly stabilizes the multiscale optimization trajectory. The complete process is detailed in Algorithm \ref{alg:m_HRGrad}.

\begin{algorithm}[ht!]
    \small
    \caption{HRGrad for Multi-task Learning}\label{alg:m_HRGrad}
    \begin{algorithmic}[1]
        \REQUIRE Initial network weights $\theta_0$, learning rate $\gamma$, Adam coefficients $\beta_1, \beta_2, \epsilon$.
        \REQUIRE HRGrad hyper-parameters $\alpha_{\min}, \alpha_{\max}, k_{\text{std}}, \lambda, \delta$.
        \STATE Initialize first momentum $[m_{g_1,0}, m_{g_2,0}, \dots, m_{g_m,0}] \gets [0,0,\dots,0]$ and second momentum $v_{0} \gets 0$.
        \STATE Initialize step counters $[t_{g_1}, t_{g_2}, \dots, t_{g_m}] \gets [0,0,\dots,0]$.
        \FOR{$t = 1, 2, \dots$}
        \STATE $i \gets (t-1)\bmod m+1$
        \STATE $t_{g_i} \gets t_{g_i}+1$
        \STATE $m_{g_i,t_{g_i}} \gets \beta_1 m_{g_i,t_{g_i}-1}+(1-\beta_1) \nabla_{\theta_{t-1}}\mathcal{L}_{i}$
        \STATE Compute bias-corrected momentums: $[\hat{m}_{g_1}, \dots, \hat{m}_{g_m}] \gets \left[ \frac{m_{g_1,t_{g_1}}}{1-\beta_1^{t_{g_1}}}, \dots, \frac{m_{g_m,t_{g_m}}}{1-\beta_1^{t_{g_m}}} \right]$

        \STATE \textbf{HRGrad Operator $\mathcal{G}_{\text{HRGrad}}$ applied to $\hat{G} = [\hat{m}_{g_1}, \dots, \hat{m}_{g_m}]$:}

        \STATE \emph{Stage 1 (Harmonized cone and physical anchor).}
        \STATE Solve the DDM constraint $\hat{G}^\top \hat{G}\,\lambda \geq \mathbf{0}_m$ to extract extreme rays $\Pi = [\pi_1, \dots, \pi_p]$ of $\mathbb{H} = \mathbb{K}\cap\mathbb{K}^*$.
        \STATE Compute physical extreme rays $r_k \gets \hat{G}\pi_k$ for $k=1,\dots,p$ and the physical anchor
        $d^* \gets \mathcal{U}\!\left( \sum_{k=1}^p \mathcal{U}(r_k) \right) \in \text{Int}(\mathbb{H})$.

        \STATE \emph{Stage 2 (Per-task rotation references).}
        \STATE Form the unit gradients $\bar{m}_{g_j} \gets \mathcal{U}(\hat{m}_{g_j})$ for $j=1,\dots,m$ and identify the conflicting set $\mathcal{C} \gets \{\,j : \bar{m}_{g_j}^\top d^* < 0\,\}$.
        \FOR{$j \in \mathcal{C}$}
            \STATE $w_j \gets \mathcal{U}\!\left( d^* - (\bar{m}_{g_j}^\top d^*)\bar{m}_{g_j} \right)$.
        \ENDFOR

        \STATE \emph{Stage 3 (Joint MER optimization with adaptive steps).}
        \STATE Compute relative loss variance $s_t$ and adaptive step count $\alpha_{\text{steps}}^{(t)}$.
        \STATE Fix $\alpha_j \equiv 0$ for $j \notin \mathcal{C}$ and run $\alpha_{\text{steps}}^{(t)}$ inner SGD iterations on the MER objective $\mathcal{L}(\boldsymbol{\alpha})$ in Eq.~\ref{eq:mer_objective}, jointly over $\{\alpha_j\}_{j \in \mathcal{C}} \subset [0,\pi/2]$, to obtain $\boldsymbol{\alpha}^* = [\alpha_1^*,\dots,\alpha_m^*]$.

        \STATE \emph{Stage 4 (Magnitude restoration and fair aggregation).}
        \FOR{$j = 1$ to $m$}
            \IF{$j \in \mathcal{C}$}
                \STATE $r_j \gets \cos(\alpha_j^*)\bar{m}_{g_j} + \sin(\alpha_j^*)w_j$,\quad
                $m_j^{\text{rot}} \gets \|\hat{m}_{g_j}\|\, r_j$.
            \ELSE
                \STATE $m_j^{\text{rot}} \gets \hat{m}_{g_j}$.
            \ENDIF
        \ENDFOR
        \STATE Form the normalized rotated matrix $M \gets [\mathcal{U}(m_1^{\text{rot}}), \dots, \mathcal{U}(m_m^{\text{rot}})]$.
        \STATE Compute the fair aggregation direction $g_u \gets \mathcal{U}\!\left[ (M^\dagger)^\top \mathbf{1}_m \right]$.
        \STATE $\hat{m}_g \gets \mathcal{G}_{\text{HRGrad}}(\hat{m}_{g_1},\dots,\hat{m}_{g_m}) := \left( \sum_{j=1}^m (m_j^{\text{rot}})^\top g_u \right) g_u$.

        \STATE Estimate the unified gradient: $g_{c} \gets [\hat{m}_g(1-\beta_1^t)-\beta_1 m_{t-1}]/(1-\beta_1)$.
        \STATE $m_{t} \gets \beta_1 m_{t-1}+(1-\beta_1)g_{c}$.
        \STATE $v_{t} \gets \beta_2 v_{t-1}+(1-\beta_2)g_{c}^2$.
        \STATE $\hat{v}_g \gets v_{t}/(1-\beta_2^t)$.
        \STATE Update network weights: $\theta_t \gets \theta_{t-1} - \gamma\,\hat{m}_g/(\sqrt{\hat{v}_{g}}+\epsilon)$.
        \ENDFOR
    \end{algorithmic}
\end{algorithm}


\subsection{Pre-training}\label{sec:train}

\subsection{Fast fine-tuning}\label{tune}

\subsection{Convergence analysis}\label{sec:convergence}

We establish the convergence guarantees of HRGrad for both convex and non-convex settings.
We first derive the key geometric properties arising from the four-stage construction, and then exploit them to prove descent lemmas and convergence theorems.

Recall from Stage~2 and Stage~4 that the output of the rotation and magnitude-restoration process yields
\begin{equation}\label{eq:grot_def}
  g_i^{\mathrm{rot}} = \begin{cases}
    \|g_i\|\,r_i(\alpha_i^*), & \text{if } \bar{g}_i^\top d^* < 0, \\
    g_i, & \text{otherwise,}
  \end{cases}
\end{equation}
where $r_i(\alpha_i^*) = \cos(\alpha_i^*)\bar{g}_i + \sin(\alpha_i^*)w_i$ is the $SO(2)$ isometric rotation with $\alpha_i^* \in [0,\pi/2]$,
$\bar{g}_i = \mathcal{U}(g_i)$, and $w_i = \mathcal{U}\bigl(d^* - (\bar{g}_i^\top d^*)\bar{g}_i\bigr)$.

\begin{lemma}[Geometric Properties of HRGrad]\label{lem:hrgrad_properties}
Let $\{g_i^{\mathrm{rot}}\}_{i=1}^m$ and $g_{\mathrm{HRGrad}}$ be defined in Eqs.~\eqref{eq:grot_def},~\eqref{eq:HRGrad_gu}--\eqref{eq:HRGrad_final},
and let $M = [\mathcal{U}(g_1^{\mathrm{rot}}), \ldots, \mathcal{U}(g_m^{\mathrm{rot}})] \in \mathbb{R}^{D \times m}$ have full column rank.
Then:
\begin{itemize}
  \item[\textit{(i)}] \textbf{(Isometric Fidelity)} $\|g_i^{\mathrm{rot}}\| = \|g_i\|$ for all $i = 1, \ldots, m$.
  \item[\textit{(ii)}] \textbf{(Non-Conflict)} $(g_i^{\mathrm{rot}})^\top g_{\mathrm{HRGrad}} \geq 0$ for all $i$,
  with strict inequality whenever $\|g_{\mathrm{HRGrad}}\| > 0$.
  \item[\textit{(iii)}] \textbf{(Equal Cosine Similarity)} There exists a uniform constant $S_c > 0$ such that
  \begin{equation}\label{eq:equal_sc}
    \frac{(g_i^{\mathrm{rot}})^\top g_{\mathrm{HRGrad}}}{\|g_i^{\mathrm{rot}}\|\,\|g_{\mathrm{HRGrad}}\|} = S_c, \quad \forall\; i = 1, \ldots, m.
  \end{equation}
  \item[\textit{(iv)}] \textbf{(Aggregate Product Identity)}
  $\displaystyle\Bigl(\sum_{i=1}^m g_i^{\mathrm{rot}}\Bigr)^\top g_{\mathrm{HRGrad}} = \|g_{\mathrm{HRGrad}}\|^2$.
\end{itemize}
\end{lemma}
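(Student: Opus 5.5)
The plan is to reduce everything to one algebraic fact about the pseudoinverse, $M^\dagger M = I_m$, which holds because $M$ has full column rank. Throughout, I will treat the normalization operator $\mathcal{U}$ as exact normalization, i.e.\ $\delta = 0$. This is the idealization under which the identities are exact; for $\delta>0$ each identity only holds up to an $\mathcal{O}(\delta)$ perturbation. I expect making this idealization explicit to be the main subtlety of the argument, since several statements are literally false with $\delta>0$: for instance, $\|\bar g_i\| = \|g_i\|/(\|g_i\|+\delta) < 1$. Full column rank of $M$ also forces every column $\mathcal{U}(g_i^{\mathrm{rot}})$ to be nonzero, hence $g_i^{\mathrm{rot}}\neq 0$, so all normalizations are well defined.

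\textbf{Step (i), isometric fidelity.} In the non-conflicting case $g_i^{\mathrm{rot}}=g_i$ and there is nothing to prove. In the conflicting case it suffices to show $\|r_i(\alpha)\|=1$. The vector $\bar g_i$ is a unit vector. The vector $w_i$ is the normalized Gram--Schmidt residual of $d^*$ against $\bar g_i$, so $\bar g_i^\top w_i = 0$ and $\|w_i\|=1$. It is well defined because $d^*$ is not parallel to $\bar g_i$ in the conflicting case; I would note that $d^* = -\bar g_i$ is excluded since $d^*\in\mathbb{K}^*$ gives $G^\top d^*\ge 0$. The Pythagorean identity then gives $\|r_i\|^2=\cos^2\alpha_i+\sin^2\alpha_i=1$, and multiplying by $\|g_i\|$ yields (i).

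\textbf{Steps (ii)--(iv).} Set $v=(M^\dagger)^\top\mathbf 1_m$. Then $M^\top v=(M^\dagger M)^\top\mathbf 1_m=\mathbf 1_m$, so $v\neq 0$, and $g_u=v/\|v\|$ is a unit vector with $\mathcal{U}(g_i^{\mathrm{rot}})^\top g_u = c := 1/\|v\|>0$ for every $i$. Writing $a_i := (g_i^{\mathrm{rot}})^\top g_u$, we get $a_i = c\,\|g_i^{\mathrm{rot}}\| = c\,\|g_i\| > 0$, and $g_{\mathrm{HRGrad}} = A\, g_u$ with $A=\sum_j a_j>0$.
\begin{itemize}
\item[\textit{(ii)}] Since $(g_i^{\mathrm{rot}})^\top g_{\mathrm{HRGrad}} = a_i A$, this quantity is nonnegative, and strictly positive because every $a_i>0$ (equivalently, whenever $\|g_{\mathrm{HRGrad}}\|=A>0$).
\item[\textit{(iii)}] Using $\|g_{\mathrm{HRGrad}}\|=A$, the cosine equals $a_iA/(\|g_i^{\mathrm{rot}}\|A)=c$. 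This is independent of $i$, so $S_c=c=1/\|(M^\dagger)^\top\mathbf 1_m\|$.
\item[\textit{(iv)}] Summing gives $\bigl(\sum_i g_i^{\mathrm{rot}}\bigr)^\top g_{\mathrm{HRGrad}} = A\sum_i a_i = A^2 = \|g_{\mathrm{HRGrad}}\|^2$, using $\|g_u\|=1$.
\end{itemize}

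The computations are routine once $M^\top (M^\dagger)^\top \mathbf 1_m = \mathbf 1_m$ is in hand. The only delicate points are the $\delta=0$ convention and the well-definedness of $w_i$, and I would record both as standing assumptions at the start of the proof.
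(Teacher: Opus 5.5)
Your proof is correct. For (i), (iii) and (iv) it is the paper's argument: Gram--Schmidt orthonormality plus magnitude restoration for (i), the identity $M^\top (M^\dagger)^\top\mathbf{1}_m=\mathbf{1}_m$ for (iii) (the paper writes it via $(M^\dagger)^\top=M(M^\top M)^{-1}$), and summation with $\|g_u\|=1$ for (iv).

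The genuine difference is in (ii). The paper argues by cone geometry:
\begin{itemize}
\item it asserts that each $g_i^{\mathrm{rot}}$ lies in $\mathbb{H}=\mathbb{K}\cap\mathbb{K}^*$;
\item it asserts that $g_u\in\mathrm{Int}(\mathbb{H})\subseteq\mathbb{K}^*$ because $g_u$ is ``the normalized centroid of the extreme rays'';
\item it then applies the dual-cone pairing.
\end{itemize}
That step conflates $g_u$ with the anchor $d^*$. Since $g_u$ is defined through $M^\dagger$, its membership in $\mathbb{K}^*$ is never established; indeed the appendix later has to \emph{assume} $g_i^\top g_u\ge\rho_i\|g_i\|$. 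The stated equality case also does not follow from the product form alone.

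You instead derive (ii) as a corollary of the equal-cosine identity, $(g_i^{\mathrm{rot}})^\top g_u=c\|g_i\|>0$ for every $i$. This route has three advantages:
\begin{itemize}
\item it is self-contained and uses only full column rank;
\item it gives strict inequality unconditionally under that hypothesis;
\item it shows that (ii)--(iv) hold for the pseudoinverse aggregation of any full-rank family, independent of Stages 1--3.
\end{itemize}
The only thing it gives up is the paper's narrative that the harmonized cone is what produces non-conflict. Your explicit $\delta=0$ convention is also warranted, since the paper's proof silently uses $\|\bar g_i\|=\|w_i\|=\|g_u\|=1$.

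One refinement concerns your well-definedness note for $w_i$. The fact you invoke, $G^\top d^*\ge 0$, does follow from Stage 1, because $d^*$ is a nonnegative combination of the $r_j=G\pi_j$ with $G^\top G\pi_j\ge 0$. But it gives $\bar g_i^\top d^*\ge 0$ for \emph{every} $i$, so under the paper's construction the conflicting branch is never triggered at all. Your note is therefore valid but vacuous. If you want the conflicting case of (i) to carry content for a general anchor, take $d^*\neq-\bar g_i$ as part of the hypothesis that $w_i$ is defined.
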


\begin{proof}
\textit{(i)}\quad For conflicting tasks: $r_i(\alpha_i^*)$ satisfies $\bar{g}_i \perp w_i$ and $\|\bar{g}_i\| = \|w_i\| = 1$,
so $\|r_i(\alpha_i^*)\| = \sqrt{\cos^2(\alpha_i^*) + \sin^2(\alpha_i^*)} = 1$.
By Stage~4 magnitude restoration $g_i^{\mathrm{rot}} = \|g_i\|\,r_i(\alpha_i^*)$, hence $\|g_i^{\mathrm{rot}}\| = \|g_i\|$.
For non-conflicting tasks $g_i^{\mathrm{rot}} = g_i$ by Eq.~\eqref{eq:grot_def}.

\textit{(ii)}\quad By Stages~1--3, every $g_i^{\mathrm{rot}}$ is deflected into the Harmonized Cone
$\mathbb{H} = \mathbb{K} \cap \mathbb{K}^*$, and the anchor $d^*$ is the normalized centroid
of the extreme rays of $\mathbb{H}$, so the fair aggregation direction $g_u \in \mathrm{Int}(\mathbb{H}) \subseteq \mathbb{K}^*$.
The dual cone property of $\mathbb{K}^*$ gives $(g_i^{\mathrm{rot}})^\top g_u \geq 0$ for all $i$.
From Eq.~\eqref{eq:HRGrad_final}:
\begin{equation}
  (g_i^{\mathrm{rot}})^\top g_{\mathrm{HRGrad}}
  = \underbrace{\Bigl(\sum_{j=1}^m (g_j^{\mathrm{rot}})^\top g_u\Bigr)}_{\geq\,0}
    \cdot \underbrace{(g_i^{\mathrm{rot}})^\top g_u}_{\geq\,0} \geq 0.
\end{equation}
Equality holds iff $(g_i^{\mathrm{rot}})^\top g_u = 0$ for all $i$, i.e., $\|g_{\mathrm{HRGrad}}\| = 0$.

\textit{(iii)}\quad Since $M$ has full column rank, $(M^\dagger)^\top = M(M^\top M)^{-1}$.
Hence $g_u \propto M(M^\top M)^{-1}\mathbf{1}_m$, and
\begin{equation}
  M^\top g_u
  = \frac{M^\top M(M^\top M)^{-1}\mathbf{1}_m}{\bigl\|M(M^\top M)^{-1}\mathbf{1}_m\bigr\|}
  = \frac{\mathbf{1}_m}{\bigl\|M(M^\top M)^{-1}\mathbf{1}_m\bigr\|}
  =: S_c\,\mathbf{1}_m,
\end{equation}
so $\mathcal{U}(g_i^{\mathrm{rot}})^\top g_u = S_c > 0$ uniformly in $i$.
Because $g_{\mathrm{HRGrad}} \propto g_u$, the cosine similarity in Eq.~\eqref{eq:equal_sc} is constant for all $i$.

\textit{(iv)}\quad Summing over all $i$ and applying \textit{(iii)}:
\begin{equation}
  \Bigl(\sum_{i=1}^m g_i^{\mathrm{rot}}\Bigr)^\top g_{\mathrm{HRGrad}}
  = \sum_{i=1}^m \|g_i^{\mathrm{rot}}\|\,\|g_{\mathrm{HRGrad}}\|\,S_c
  = \|g_{\mathrm{HRGrad}}\|^2,
\end{equation}
where $\|g_{\mathrm{HRGrad}}\| = \bigl(\sum_i (g_i^{\mathrm{rot}})^\top g_u\bigr)\|g_u\| = \sum_i \|g_i^{\mathrm{rot}}\|\,S_c$
follows from $\|g_u\| = 1$ and each $(g_i^{\mathrm{rot}})^\top g_u = \|g_i^{\mathrm{rot}}\|\,S_c$.
\end{proof}

\begin{remark}
Isometric Fidelity ensures $\|g_{\mathrm{HRGrad}}\| = 0$ if and only if $\|g_i\| = 0$ for all $i$,
since $\|g_{\mathrm{HRGrad}}\| = S_c\sum_i \|g_i\|$ with $S_c > 0$.
This prevents the spurious premature stagnation caused by gradient cancellation in projection-based methods
(where $\|\mathcal{P}(g_i)\| \to 0$ even when $\|g_i\| \gg 0$), guaranteeing that the optimizer
halts only at genuine joint stationary points of all task losses.
\end{remark}

To formalize the descent analysis, we introduce the proxy aggregate objective
$\hat{\mathcal{L}}(\theta) := \frac{1}{m}\sum_{i=1}^m \hat{\mathcal{L}}_i(\theta)$,
where each $\hat{\mathcal{L}}_i$ is the locally linear surrogate with
$\nabla_\theta \hat{\mathcal{L}}_i(\theta) = g_i^{\mathrm{rot}}(\theta)$.
By Lemma~\ref{lem:hrgrad_properties}(i), $\hat{\mathcal{L}}_i$ and $\mathcal{L}_i$ share
the same critical points: $\nabla_\theta \hat{\mathcal{L}}_i = 0 \Leftrightarrow \nabla_\theta \mathcal{L}_i = 0$.

\begin{theorem}[Convergence in the Convex Setting]\label{thm:hrgrad_convex}
Assume (a) the proxy objectives $\{\hat{\mathcal{L}}_i\}_{i=1}^m$ are convex and differentiable;
(b) the aggregate proxy gradient $\hat{g} := \sum_{i=1}^m g_i^{\mathrm{rot}}$ is $L$-Lipschitz continuous
with constant $L > 0$.
Then updating along $g_{\mathrm{HRGrad}}$ with step size $\gamma \leq \frac{2}{L}$ guarantees
\begin{equation}\label{eq:convex_descent}
  \hat{\mathcal{L}}(\theta^+) \leq \hat{\mathcal{L}}(\theta)
  - \gamma\!\left(1 - \frac{L\gamma}{2}\right)\!\|g_{\mathrm{HRGrad}}\|^2
  \leq \hat{\mathcal{L}}(\theta),
\end{equation}
converging to either a location where $\|g_{\mathrm{HRGrad}}\| = 0$ or the global optimum.
\end{theorem}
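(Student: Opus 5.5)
The plan is to prove the first inequality in \eqref{eq:convex_descent} with the standard descent lemma for smooth functions, and to identify the first-order term with Lemma~\ref{lem:hrgrad_properties}(iv). Set $F(\theta) := \sum_{i=1}^m \hat{\mathcal{L}}_i(\theta)$, so that $\nabla F = \hat{g} = \sum_i g_i^{\mathrm{rot}}$ and $\hat{\mathcal{L}} = F/m$. The update is $\theta^+ = \theta - \gamma\, g_{\mathrm{HRGrad}}$. By assumption~(b), $\nabla F$ is $L$-Lipschitz, so the quadratic upper bound gives
\begin{equation*}
  F(\theta^+) \leq F(\theta) - \gamma\, \hat{g}^\top g_{\mathrm{HRGrad}} + \frac{L\gamma^2}{2}\,\|g_{\mathrm{HRGrad}}\|^2 .
\end{equation*}
Lemma~\ref{lem:hrgrad_properties}(iv) states exactly that $\hat{g}^\top g_{\mathrm{HRGrad}} = \|g_{\mathrm{HRGrad}}\|^2$. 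Substituting this yields $F(\theta^+) \leq F(\theta) - \gamma(1 - L\gamma/2)\|g_{\mathrm{HRGrad}}\|^2$. For $\gamma \leq 2/L$ the coefficient $1 - L\gamma/2$ is nonnegative, which gives monotone descent.

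The constants need care when passing from $F$ to $\hat{\mathcal{L}}$. Dividing by $m$ gives the decrement $\frac{\gamma}{m}(1 - L\gamma/2)\|g_{\mathrm{HRGrad}}\|^2$. The displayed constant is recovered either by taking the aggregate objective to be the unnormalized sum, or by reading $L$ as the smoothness constant of $\hat{\mathcal{L}}$ and rescaling $\gamma$. I would state this normalization explicitly rather than hide it. The qualitative conclusion, monotone nonincrease of $\hat{\mathcal{L}}$, holds under either convention.

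For the convergence claim, I would iterate the inequality with a strict step $\gamma < 2/L$ and telescope. Under convexity, together with the implicit assumption that $F$ is bounded below (for instance, that a minimizer exists), this gives $\sum_k \|g_{\mathrm{HRGrad}}(\theta_k)\|^2 \leq \frac{F(\theta_0) - \inf F}{\gamma(1 - L\gamma/2)} < \infty$. Hence $\|g_{\mathrm{HRGrad}}(\theta_k)\| \to 0$, so every limit point $\bar\theta$ satisfies $\|g_{\mathrm{HRGrad}}(\bar\theta)\| = 0$, provided $g_{\mathrm{HRGrad}}$ depends continuously on $\theta$.

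To upgrade such a point to a global optimum, I would use the Remark after Lemma~\ref{lem:hrgrad_properties}: $\|g_{\mathrm{HRGrad}}\| = S_c \sum_i \|g_i\|$ with $S_c > 0$. This holds whenever $M$ has full column rank; in the degenerate case we simply stop at $\|g_{\mathrm{HRGrad}}\| = 0$, which is the first alternative of the dichotomy in the statement. In the nondegenerate case every $g_i^{\mathrm{rot}}$ vanishes, so $\nabla F(\bar\theta) = 0$. By convexity of $F$, $\bar\theta$ is then a global minimizer of the proxy aggregate, and by Lemma~\ref{lem:hrgrad_properties}(i) it is also a joint critical point of the original losses.

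I expect the main obstacle to be rigor around the proxy objectives rather than the descent algebra. Assumptions~(a)--(b) require that the field $\theta \mapsto \sum_i g_i^{\mathrm{rot}}(\theta)$ actually be the gradient of a convex function. The statement simply assumes this, so I would invoke it as a hypothesis and not try to derive it. A second point of care is the borderline step $\gamma = 2/L$: there the descent is only non-strict, so the convergence part must be argued with $\gamma < 2/L$.
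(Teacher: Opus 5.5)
Your descent step is the paper's argument: the quadratic upper bound for the aggregate whose gradient is $\hat{g}$, then Lemma~\ref{lem:hrgrad_properties}(iv) to replace $\hat{g}^\top g_{\mathrm{HRGrad}}$ by $\|g_{\mathrm{HRGrad}}\|^2$.

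Your normalization caveat is correct, and it exposes a slip in the paper. The paper's first display uses $\hat{g}$ as if it were $\nabla\hat{\mathcal{L}}$, although $\nabla\hat{\mathcal{L}}=\hat{g}/m$. For the averaged objective the displayed bound is false for every $0<\gamma<2/L$. Take $m=D=2$, $\hat{\mathcal{L}}_i(\theta)=\tfrac{a}{2}\theta_i^2$ and $\theta=(1,1)$. The tasks never conflict, $g_{\mathrm{HRGrad}}=a(1,1)$, $L=a$, and $\hat{\mathcal{L}}$ decreases by exactly half the displayed amount.

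Only your first repair, the unnormalized sum, recovers the constant verbatim. Reading $L$ as the smoothness constant $L/m$ of $\hat{\mathcal{L}}$ does not work while you step along $g_{\mathrm{HRGrad}}$. You would instead have to run HRGrad on the averaged gradients $g_i/m$.

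Also, (iv) needs none of the cone or rank machinery. Since $g_{\mathrm{HRGrad}}=(\hat{g}^\top g_u)\,g_u$ is the projection of $\hat{g}$ onto $\mathrm{span}(g_u)$, you get $\hat{g}^\top g_{\mathrm{HRGrad}}=(\hat{g}^\top g_u)^2\ge\|g_{\mathrm{HRGrad}}\|^2$ whenever $\|g_u\|\le 1$. So your descent inequality holds at every iterate, even with the $\delta>0$ normalization.

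The genuine gap is your upgrade to the global optimum, which can never fire. At any point with $\|g_{\mathrm{HRGrad}}\|=0$, the matrix $M$ cannot have full column rank. Full rank forces every column $\mathcal{U}(g_i^{\mathrm{rot}})$, hence every $g_i$, to be nonzero, and then $\|g_{\mathrm{HRGrad}}\|=S_c\sum_i\|g_i\|>0$. So your ``nondegenerate case'' is empty and every limit point falls into the degenerate branch. You have therefore established only the first alternative of the dichotomy. The paper's closing sentence, that $\|g_{\mathrm{HRGrad}}\|=0$ is ``by convexity'' the global optimality condition, has the same defect.

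To reach the optimum, use the identity along the iterates, where the rank condition is in force, rather than at the limit. There $\sum_i\|g_i(\theta_k)\|=\|g_{\mathrm{HRGrad}}(\theta_k)\|/S_c^k$. Add a uniform bound $S_c^k\ge\alpha>0$, the kind of hypothesis the paper imposes in Theorem~\ref{thm:hrgrad_nonconvex}. Your telescoping then yields $\|\hat{g}(\theta_k)\|\le\sum_i\|g_i^{\mathrm{rot}}(\theta_k)\|=\sum_i\|g_i(\theta_k)\|\to 0$. Continuity of $\hat{g}$, which is hypothesis (b), gives $\hat{g}(\bar\theta)=0$ at every limit point, i.e.\ a global minimizer of the convex aggregate.

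This route also removes your proviso that $g_{\mathrm{HRGrad}}$ be continuous in $\theta$. That proviso is doubtful exactly at such limit points, because there $M$ is rank-deficient and the pseudoinverse is discontinuous across rank changes.
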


\begin{proof}
The $L$-Lipschitz gradient condition yields the quadratic upper bound:
\begin{equation}
  \hat{\mathcal{L}}(\theta^+)
  \leq \hat{\mathcal{L}}(\theta) + \hat{g}^\top(\theta^+ - \theta) + \frac{L}{2}\|\theta^+ - \theta\|^2.
\end{equation}
Substituting $\theta^+ = \theta - \gamma g_{\mathrm{HRGrad}}$:
\begin{equation}\label{eq:hrgrad_quad}
  \hat{\mathcal{L}}(\theta^+)
  \leq \hat{\mathcal{L}}(\theta) - \gamma\,\hat{g}^\top g_{\mathrm{HRGrad}}
  + \frac{L\gamma^2}{2}\|g_{\mathrm{HRGrad}}\|^2.
\end{equation}
Applying the Aggregate Product Identity (Lemma~\ref{lem:hrgrad_properties}(iv)),
$\hat{g}^\top g_{\mathrm{HRGrad}} = \|g_{\mathrm{HRGrad}}\|^2$.
Inserting into Eq.~\eqref{eq:hrgrad_quad}:
\begin{equation}
  \hat{\mathcal{L}}(\theta^+)
  \leq \hat{\mathcal{L}}(\theta)
  - \gamma\!\left(1 - \frac{L\gamma}{2}\right)\!\|g_{\mathrm{HRGrad}}\|^2.
\end{equation}
When $\gamma \leq \frac{2}{L}$ the coefficient $(1 - \frac{L\gamma}{2}) \geq 0$,
so Eq.~\eqref{eq:convex_descent} holds.
Equality requires $\|g_{\mathrm{HRGrad}}\| = 0$; by convexity this is the global optimality condition.
\end{proof}

\begin{theorem}[Convergence in the Non-Convex Setting]\label{thm:hrgrad_nonconvex}
Assume (a) the proxy objectives $\{\hat{\mathcal{L}}_i\}_{i=1}^m$ are differentiable but possibly non-convex;
(b) $\hat{g} = \sum_{i=1}^m g_i^{\mathrm{rot}}$ is $L$-Lipschitz continuous.
Then updating along $g_{\mathrm{HRGrad}}$ with step size $\gamma \leq \frac{1}{L}$ satisfies the ergodic bound:
\begin{equation}\label{eq:nonconvex_rate}
  \min_{1 \leq k \leq K}\left\|\sum_{i=1}^m g_i^k\right\|^2
  \leq \frac{2\bigl[\hat{\mathcal{L}}(\theta^0) - \hat{\mathcal{L}}^*\bigr]}{\gamma\,\alpha^2\,K},
\end{equation}
where $g_i^k = \nabla_\theta \mathcal{L}_i(\theta^k)$,\;
$\hat{\mathcal{L}}^* = \inf_\theta \hat{\mathcal{L}}(\theta)$, and
$\alpha = \min_{1 \leq k \leq K} S_c^k > 0$ is the minimum equal cosine similarity across all iterations.
Consequently, $\min_{k} \bigl\|\sum_i g_i^k\bigr\| \to 0$ as $K \to \infty$.
\end{theorem}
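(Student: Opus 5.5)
The proof will follow the same descent template as Theorem~\ref{thm:hrgrad_convex}, followed by telescoping and then a lower bound of $\|g_{\mathrm{HRGrad}}\|$ in terms of the original task gradients. The lower bound uses the two quantitative statements of Lemma~\ref{lem:hrgrad_properties}, namely isometric fidelity (i) and equal cosine similarity (iii).

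\textbf{Step 1: per-iteration descent.} At iterate $\theta^k$ I will use the quadratic upper bound implied by the $L$-Lipschitz continuity of $\hat g$. I identify the proxy objective whose gradient is $\hat g$, absorbing the factor $1/m$ in $\hat{\mathcal{L}}$ into the normalization of the surrogates. Substituting $\theta^{k+1}=\theta^k-\gamma g_{\mathrm{HRGrad}}^k$ and applying the Aggregate Product Identity, Lemma~\ref{lem:hrgrad_properties}(iv), gives $\hat g^\top g_{\mathrm{HRGrad}}^k=\|g_{\mathrm{HRGrad}}^k\|^2$. Exactly as in Eq.~\eqref{eq:hrgrad_quad}, this yields
\[
\hat{\mathcal{L}}(\theta^{k+1})\le \hat{\mathcal{L}}(\theta^k)-\gamma\Bigl(1-\tfrac{L\gamma}{2}\Bigr)\|g_{\mathrm{HRGrad}}^k\|^2\le \hat{\mathcal{L}}(\theta^k)-\tfrac{\gamma}{2}\|g_{\mathrm{HRGrad}}^k\|^2,
\]
where the last inequality uses $\gamma\le 1/L$. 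This is the reason for the stricter step-size condition compared with the convex case.

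\textbf{Step 2: telescoping.} Summing the inequality over $k=1,\dots,K$ and using $\hat{\mathcal{L}}(\theta^{K+1})\ge\hat{\mathcal{L}}^*$ gives
\[
\sum_k\|g_{\mathrm{HRGrad}}^k\|^2\le 2[\hat{\mathcal{L}}(\theta^0)-\hat{\mathcal{L}}^*]/\gamma.
\]
The index shift between $\theta^0$ and $\theta^1$ is harmless, because monotonicity gives $\hat{\mathcal{L}}(\theta^1)\le\hat{\mathcal{L}}(\theta^0)$. Bounding the sum from below by $K$ times its minimum term gives
\[
\min_k\|g_{\mathrm{HRGrad}}^k\|^2\le 2[\hat{\mathcal{L}}(\theta^0)-\hat{\mathcal{L}}^*]/(\gamma K).
\]

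\textbf{Step 3: relating to the raw gradients.} From the proof of Lemma~\ref{lem:hrgrad_properties}(iv) we have $\|g_{\mathrm{HRGrad}}^k\|=S_c^k\sum_i\|g_i^{\mathrm{rot},k}\|$. By isometric fidelity (i) this equals $S_c^k\sum_i\|g_i^k\|$. The triangle inequality then gives $\sum_i\|g_i^k\|\ge\|\sum_i g_i^k\|$, and with $S_c^k\ge\alpha$ we obtain
\[
\|g_{\mathrm{HRGrad}}^k\|\ge\alpha\Bigl\|\sum_i g_i^k\Bigr\|.
\]
Squaring this, inserting it into the bound of Step 2, and dividing by $\alpha^2$ gives Eq.~\eqref{eq:nonconvex_rate}. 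The final claim follows because the right-hand side is $O(1/K)$ with $\alpha>0$ fixed.

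\textbf{Main obstacle.} The delicate point is Step 3 together with the positivity of $\alpha$. The lemma only guarantees $S_c^k>0$ at each iterate, so $\alpha=\min_k S_c^k$ is positive for every finite $K$. However, the conclusion ``$\to0$ as $K\to\infty$'' additionally requires $\alpha$ to stay bounded away from zero uniformly in $K$. I would state this explicitly as the standing interpretation, i.e.\ assume $\inf_k S_c^k>0$. A sufficient condition is that the smallest singular value of $M^k$ stays bounded below, since $S_c=1/\|M(M^\top M)^{-1}\mathbf 1_m\|$ gives $S_c\ge\sigma_{\min}(M)/\sqrt m$.

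A secondary bookkeeping issue is consistency between $\hat{\mathcal{L}}=\frac1m\sum_i\hat{\mathcal{L}}_i$ and its gradient being $\hat g$ rather than $\hat g/m$. I would handle this by defining the surrogate so that $\nabla\hat{\mathcal{L}}=\hat g$, as Theorem~\ref{thm:hrgrad_convex} implicitly does. The alternative is to carry a factor $m$ through the constants, which only changes the constant in Eq.~\eqref{eq:nonconvex_rate}.
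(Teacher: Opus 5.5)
Your proposal is correct and follows essentially the paper's argument. Both use the descent lemma with the Aggregate Product Identity, then the bound $\|g_{\mathrm{HRGrad}}^k\| = S_c^k\sum_i\|g_i^k\| \ge S_c^k\|\sum_i g_i^k\|$ from isometric fidelity, equal cosine similarity and the triangle inequality, then telescoping. The only difference is ordering: the paper converts to $(S_c^k)^2\|\sum_i g_i^k\|^2$ before telescoping, while you telescope first and convert afterwards, which is equivalent.

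Your two caveats are well taken, and the paper's proof passes over both silently. First, the limit as $K\to\infty$ needs $\inf_k S_c^k>0$, because the paper's $\alpha=\min_{1\le k\le K}S_c^k$ depends on $K$. Second, the $\frac1m$ in $\hat{\mathcal{L}}$ is inconsistent with using $\hat g$ as its gradient; fixing it costs only a factor $m$ in the constant.
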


\begin{proof}
Applying the descent lemma with $\gamma \leq \frac{1}{L}$ and the Aggregate Product Identity
(Lemma~\ref{lem:hrgrad_properties}(iv)):
\begin{equation}
  \hat{\mathcal{L}}(\theta^{k+1})
  \leq \hat{\mathcal{L}}(\theta^k) - \frac{\gamma}{2}\|g_{\mathrm{HRGrad}}^k\|^2.
\end{equation}
By Isometric Fidelity (Lemma~\ref{lem:hrgrad_properties}(i)) and Equal Cosine Similarity (iii):
\begin{equation}
  \|g_{\mathrm{HRGrad}}^k\|
  = \sum_{i=1}^m \|g_i^{k,\mathrm{rot}}\|\,S_c^k
  = \sum_{i=1}^m \|g_i^k\|\,S_c^k
  \geq \left\|\sum_{i=1}^m g_i^k\right\| S_c^k,
\end{equation}
where the final step uses the triangle inequality $\sum_i \|g_i\| \geq \|\sum_i g_i\|$. Therefore:
\begin{equation}
  \hat{\mathcal{L}}(\theta^{k+1})
  \leq \hat{\mathcal{L}}(\theta^k)
  - \frac{\gamma}{2}(S_c^k)^2\left\|\sum_{i=1}^m g_i^k\right\|^2.
\end{equation}
Telescoping from $k = 1$ to $K$ and using $\hat{\mathcal{L}} \geq \hat{\mathcal{L}}^*$:
\begin{equation}
  \sum_{k=1}^K (S_c^k)^2 \left\|\sum_{i=1}^m g_i^k\right\|^2
  \leq \frac{2}{\gamma}\bigl[\hat{\mathcal{L}}(\theta^0) - \hat{\mathcal{L}}^*\bigr].
\end{equation}
Setting $\alpha = \min_k S_c^k > 0$ and dividing by $\alpha^2 K$ gives Eq.~\eqref{eq:nonconvex_rate}.
As $K \to \infty$ the right-hand side vanishes,
hence $\min_k \bigl\|\sum_i g_i^k\bigr\| \to 0$, establishing convergence to a stationary point.
\end{proof}

\section{Numerical results}\label{sec:result}
In this section, we present the numerical results of the proposed HRGrad for several multiscale kinetic equations, including the Boltzmann--BGK equation, the linear transport equation, and two additional kinetic models with more challenging asymptotic structures.
Besides the classical BGK and linear transport benchmarks, we further consider the ES-BGK equation and the linear semiconductor Boltzmann--Poisson equation.
The former extends the standard BGK model by replacing the isotropic Maxwellian equilibrium with an ellipsoidal Gaussian depending on the stress tensor, thereby allowing the correct Prandtl number in the fluid limit.
The latter extends the linear transport setting by incorporating a self-consistent electric field, a velocity-space drift term, and a Poisson coupling, with the drift--diffusion--Poisson system as its asymptotic limit.
These two additional examples are chosen to examine whether HRGrad remains effective when the multiscale conflict is caused not only by stiff relaxation, but also by high-order moment closure, anisotropy, self-consistent fields, and boundary layers.
Extensive numerical results are presented for several problems chosen from kinetic regimes $(\varepsilon \approx O(1))$ to hydrodynamic (diffusive) regimes $(\varepsilon \rightarrow 0)$, in order to compare our method and existing MTL methods. 
To verify and compare the performance of existing MTL methods and our method, we present both $1$D and $2$D numerical results for several problems chosen from rarefied regimes ($\varepsilon \approx O(1)$) to hydrodynamic (diffusive) regimes ($\varepsilon \to 0$).
To validate the effectiveness of our method, we conduct all numerical experiments with the initial and boundary conditions kept consistent with those of APNNs \cite{jin2023ap,jin2024ap}. 
Furthermore, we consider additional solution tasks for Knudsen numbers $\varepsilon\in(0,1]$.
All experiments are conducted on an Nvidia RTX-3090-24GB GPU.
Code and data for the following experiments are available at \url{https://github.com/liangzhangyong/HRGrad}.

\subsection{MTL baselines}
We compared the performance of the proposed HRGrad method with ten popular MTL baselines. 
Besides PCGrad and IMTL-G  from before, we also compare to Linear scalarization baseline (LS) \cite{zhang2018overview}, Uncertainty Weighting (UW) \cite{kendall2018uw}, Dynamic Weight Average (DWA) \cite{liu2019dwa}, Gradient Sign Dropout (GradDrop) \cite{chen2020drop}, Conflict-Averse Gradient Descent (CAGrad) \cite{liu2021conflict}, Random Loss Weighting (RLW) \cite{lin2021rlw}, Nash bargaining solution for MTL (Nash-MTL) \cite{navon2022nash}, and Fast Adaptive Multitask Optimization (FAMO) \cite{liu2023famo}. 
We use two metrics to measure the performance of different methods: the mean rank metric ($MR$, lower ranks being better) \cite{navon2022nash,liu2023famo}, which represents the average rank across all tasks. 
An $MR$ of 1 means consistently outperforming all others.
In addition, we consider the average F1 score ($\overline{F_1}$, larger is better) to measure the average performance on all tasks.

\paragraph{MGDA}
In MGDA \cite{sener2018multi}, the gradient vectors are first normalized, and the update step is taken in the direction of the minimum-norm point within the convex hull formed by their normalized convex combination. 
We employed the $\ell_2$ normalization, proposed in MGDA, which has the advantage of providing a stable definition of the update direction.

\paragraph{CAGrad}
In CAGrad \cite{liu2021conflict}, the hyperparameter $c \in [0,1)$ constrains the update direction to remain within a certain distance from the average of the loss gradients, and we set $c = 0.5$ in our experiments.

\paragraph{ConFIG}
In ConFIG \cite{liu2024config}, the update direction is originally obtained by solving a least-squares problem to find a vector whose inner product with each loss gradient equals 1.
However, this procedure produced NaN values in all five independent runs on NS2d-C, Volterra1d, and Volterra system.
Therefore, in our implementation, instead of using the unstable least-squares method, we computed the pseudo-inverse of the loss gradient matrix and multiplied it by $\mathbf{1}_m$ to obtain the update direction instead.


\subsection{Experiment setting}
Some settings for the numerical experiments are stated as follows. 
Our method is a gradient preconditioner that can be embedded into existing MTL methods. 
The Adam optimizer of the gradient descent algorithm is commonly used for MTL optimization of the task loss functions. 
The initial value of the parameters in all numerical experiments is generated by Xavier initialization. 
All the hyperparameters are chosen for the best performance after trying these experiments.
We adopt the APNNs architecture and automatically satisfy the constraint settings in each task.
Meanwhile, we adopt the meta-auto-decoder (MAD) method to construct the MTL framework for faster convergence speed without losing accuracy compared to other MTL methods across tasks.

The activation function we used is $\sigma(x) = x / (1 + \exp(-x))$ for BGK problems and $\sigma(x) = \tanh(x)$ for linear transport problems.
In particular, the spatial domain of interest, denoted as $\mathcal{D}$, includes the interval $[0, 1]$ for the linear transport equation and $[-0.5, 0.5]$ for the Boltzmann-BGK equation. 
Additionally, the parameter $V$ is assigned a value of 10.
The number of quadrature points is $30$ for the linear transport equation and $64$ for the Boltzmann-BGK equation.
To train the networks, the Adam \cite{kingma2014adam} version of the gradient descent method is used to solve the optimization problem with Xavier initialization.
In practice, we need to tune the hyperparameters, such as neural network architecture, learning rate, and batch size, to obtain a good level of accuracy \cite{lu2021deepxde}. 
As a matter of experience, one can tune the weights of loss terms to equalize them, and a decreasing annealing schedule for the learning rate is used to achieve better numerical performance.
We use an exponential decay strategy for an initial learning rate $\eta_0 = 10^{-3}$ with a decay rate of $\gamma = 0.96$ and a decay step of $p = 200$ iterations
\begin{equation}
    \eta_i = \eta_0 \cdot \gamma^{\lfloor \frac{i}{p} \rfloor},
\end{equation}
where, the variable $i$ represents the current $i-$th iteration step, and the symbol $\lfloor \cdot \rfloor$ denotes the floor function.

The reference solutions are obtained by standard finite difference methods. 
For most of the time we will check the relative $\ell^2$ error of the density $\rho(x)$ between our method and reference solutions, e.g. for $1$d case,
\begin{equation}
    \text{error} := \sqrt{
        \frac{\sum_j |\rho_{\theta, j} - \rho_j|^2}
        {\sum_j |\rho_j|^2}
    }.
\end{equation}

\paragraph{\textbf{Multiscale sampling}}
To enable effective pretraining across the full spectrum of Knudsen numbers $\varepsilon \in [\varepsilon_{\min}, 1]$ with $\varepsilon_{\min} > 0$ sufficiently small (typically $\varepsilon_{\min} = 10^{-6}$ or $10^{-4}$), a carefully designed sampling strategy for the scale parameter $\varepsilon$ is crucial. 

Training difficulty is strongly heterogeneous across regimes. 
The small-$\varepsilon$ regime is significantly harder: the equations become stiff, gradients are larger and more sensitive, and the network must accurately capture rapid relaxation while preserving macroscopic evolution and resolving thin layers. 
Naive uniform sampling in the linear scale, i.e., 
\begin{equation}
\varepsilon \sim \mathcal{U}(\varepsilon_{\min}, 1),
\end{equation}
severely undersamples the difficult small-$\varepsilon$ region. 
Such a sampling strategy biases training toward the transport-dominated regime, as the small-$\varepsilon$ interval is effectively ignored in a linear scale. 
This imbalance severely impedes the convergence and triggers failure modes, particularly in the challenging continuum limit.

To mitigate this, we adopt a log-uniform (log-scale) sampling strategy, which naturally assigns higher sampling density to small $\varepsilon$. 
Specifically, we introduce an auxiliary variable $\xi$ uniformly distributed over the logarithmic range
\begin{equation}
\xi \sim \mathcal{U}\bigl(\log\varepsilon_{\min}, \, 0\bigr),
\end{equation}
The scale parameter is then obtained via exponential mapping:
\begin{equation}
\varepsilon = \exp(\xi),
\end{equation}
The resulting probability density function for $\varepsilon$ is
\begin{equation}
p(\varepsilon) = \frac{1}{\varepsilon \, (\log 1 - \log \varepsilon_{\min})} = \frac{1}{\varepsilon \, \log(1/\varepsilon_{\min})}, \qquad \varepsilon \in [\varepsilon_{\min}, 1].
\end{equation}
Thus, the sampling density is proportional to $1/\varepsilon$, ensuring exponentially higher sampling frequency in decades where stiffness and asymptotic complexity are greatest.

In practice, during each training iteration, we independently sample a batch of $N_b$ values $\{\varepsilon_k\}_{k=1}^{N_b}$ according to this log-uniform distribution:
\begin{equation}
\varepsilon_k = \exp\!\bigl(\xi_k\bigr), \qquad \xi_k \sim \mathcal{U}\bigl(\log \varepsilon_{\min}, \, 0\bigr), \quad k=1,\dots,N_b.
\end{equation}
For each sampled $\varepsilon_k$, we generate the corresponding task-specific collocation points in the domain as well as boundary and initial condition points, and compute the per-task residual losses using the APNNs formulation. 
The resulting task gradients $\{g_k\}$ are then aggregated and corrected via HRGrad to produce the final parameter update.

This log-uniform strategy aligns with the multiplicative nature of multiscale transitions in kinetic theory and significantly improves training stability, convergence speed, and accuracy in the small-$\varepsilon$ regime.
It reduces the occurrence of failure modes observed under linear uniform sampling and enhances extrapolation capability to unseen $\varepsilon \to 0^+$.

For further refinement, one may incorporate adaptive importance resampling: periodically estimate the difficulty of logarithmically spaced $\varepsilon$-bins via moving-average residual loss or gradient norm, and adjust the sampling distribution to up-weight currently hard regimes. 
However, the baseline log-uniform sampling already provides substantial robustness when combined with HRGrad, and we employ it as the default strategy throughout all experiments.

\subsection{Boltzmann-BGK problems}
Consider the Boltzmann-BGK equation
\begin{equation}
    \partial_t f + v \cdot \nabla_x f = \frac{1}{\varepsilon} \left ( M(U) - f \right ),  \quad v \in \mathbb{R}.
\end{equation}
and recall that the system of the BGK model for APNNs is
\begin{equation}
    \left \{
    \begin{aligned}
         & \varepsilon \left ( \partial_t f + v \partial_x f \right ) = M(U) - f, \\
         & \partial_t U
        + \nabla_x \cdot \left \langle v m f \right \rangle = 0,                  \\
         & U = \left \langle m f \right \rangle .
    \end{aligned}
    \right.
\end{equation}
Initial thermodynamic equilibrium is considered for all tests, as follows
\begin{equation}
    f(0, x, v) = \frac{\rho_0}{{(2 \pi T_0)}^{\frac{1}{2}}} \exp \left ( - \frac{|v - u_0|^2}{2 T_0} \right ) := f_0 (x, v).
\end{equation}

For these tests, a classical Riemann-like problem with Sod like initial velocity data is considered, as
\begin{equation}
    \begin{aligned}
         & \rho_0(t, x) = \rho_L, & \quad u_0(t, x) = u_L, & \quad T_0(t, x) = T_L, & \quad x \leq 0, \\
         & \rho_0(t, x) = \rho_R, & \quad u_0(t, x) = u_R, & \quad T_0(t, x) = T_R, & \quad x > 0.
    \end{aligned}
\end{equation}

Furthermore, HRGrad encodes the density distribution variable $f_{\theta}$ and the macro variables $\rho_{\theta}, u_{\theta}, T_{\theta}$ using two separate channels and decodes them concurrently in the base task. 
In this problem, we impose hard constraints on the predictions to ensure that $f_{\theta}$ remains positive and that $\rho_{\theta}, u_{\theta}, T_{\theta}$ automatically satisfy the boundary constraints. 
The formulation is as follows
\begin{equation}
\left \{
\begin{aligned}
     &f_{\theta}(t, x, v) := \ln \left(1 + \exp (\tilde{f}_{\theta}(t, x, v)) \right) > 0, \\
     & \rho_{\theta}(t, x) := \rho_L^{\frac{x_R - x}{x_R - x_L}} \cdot \rho_R^{\frac{x - x_L}{x_R - x_L}} \cdot \exp \left ((x-x_L)(x_R - x) \cdot  \tilde{\rho}_{\theta}(t, x)\right ),\\
     & u_{\theta}(t, x) := \sqrt{(x-x_L)(x_R - x)} \cdot \tilde{u}_{\theta}(t, x) + \frac{x_R - x}{x_R - x_L} u_L + \frac{x - x_L}{x_R - x_L} u_R,  \\
     &T_{\theta}(t, x) := T_L^{\frac{x_R - x}{x_R - x_L}} \cdot T_R^{\frac{x - x_L}{x_R - x_L}} \cdot \exp \left ( (x-x_L)(x_R - x) \cdot  \tilde{T}_{\theta}(t, x)\right ). \\
\end{aligned}
\right.
\end{equation}

\subsubsection{1D smooth problems}\label{sec:1d_smooth}
In this section, we perform a test on the 1D smooth problem \cite{li2024solving} for $\varepsilon \in (0,1]$. 
The time interval for the test is set to $t \in (0, 0.1]$. 
The spatial domain is defined as $x \in (-0.5, 0.5)$ with periodic boundary conditions applied. 
For the microscopic velocity space, the computational domain is chosen as $v \in (-10, 10)$. 
The initial condition is specified by a Maxwellian distribution with particular macroscopic moments, as
\begin{equation}
    \rho_0(x) = 1 + 0.5\sin(2\pi x), \quad
    u_0(x) = 0, \quad 
    T_0(x) = 1 + 0.5\sin(2\pi x + 0.2).
\end{equation}

In this experiment, we employ the trapezoidal rule with $257$ points for the numerical evaluation of macroscopic moments.

\subsubsection{1D Riemann problems}
In this section, we conduct a test on the one-dimensional Sod tube problem \cite{li2024solving} for $\varepsilon \in (0,1]$. 
The computational settings for this test largely mirror those used in \Cref{sec:1d_smooth}, with the exceptions being the boundary condition, the number of collocation points, and the initial condition.
For the spatial domain, we implement the free-flow boundary condition.
The initial condition is represented by a local Maxwellian distribution with macroscopic moments $(\rho_0, u_0, T_0)$. 
Specifically, the values on the interval $(-0.5, 0)$ are set to $(1, 0, 1)$, and on $[0, 0.5)$, they are $(0.125, 0, 0.8)$. 
Given the challenge of approximating jump functions with neural networks, we opt for a smoothed version of the macroscopic moments
\begin{equation}
    \rho_0(x) = 1.5 + (0.625-1.5)H(x), \quad
    u_0(x) = 0, \quad
    T_0(x) = 1.5 + (0.75-1.5)H(x).
\end{equation}
where, $H(x)=\left(1 + \tanh(20x)\right)/2$ serves as a smoothed approximation of the Heaviside function.

\begin{figure}[!htb]
     \centering
     \includegraphics[width=\textwidth]{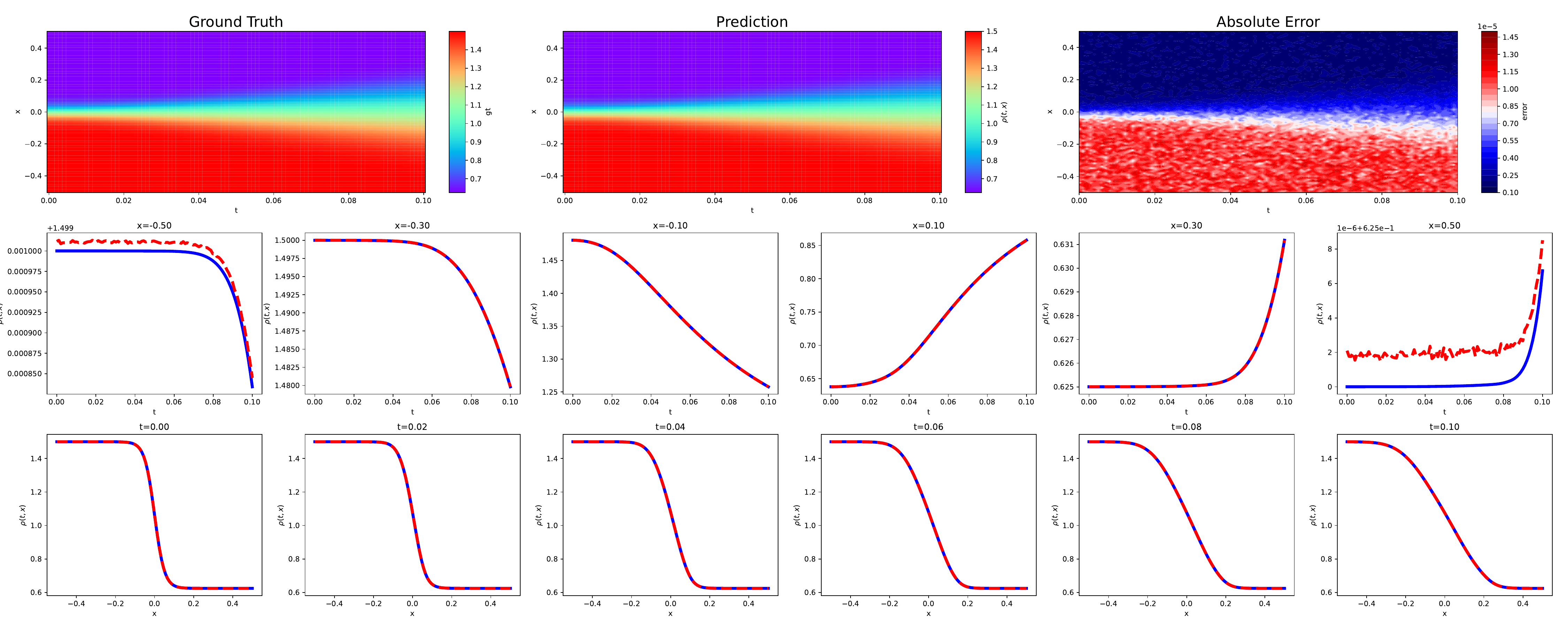}
     \vspace{-16pt}
     \caption{HRGrad predicts the macroscopic density moments $\rho$ of the 1D Riemann problem for $\varepsilon = 0.1$.}
\end{figure}

\begin{figure}[!htb]
     \centering
     \includegraphics[width=\textwidth]{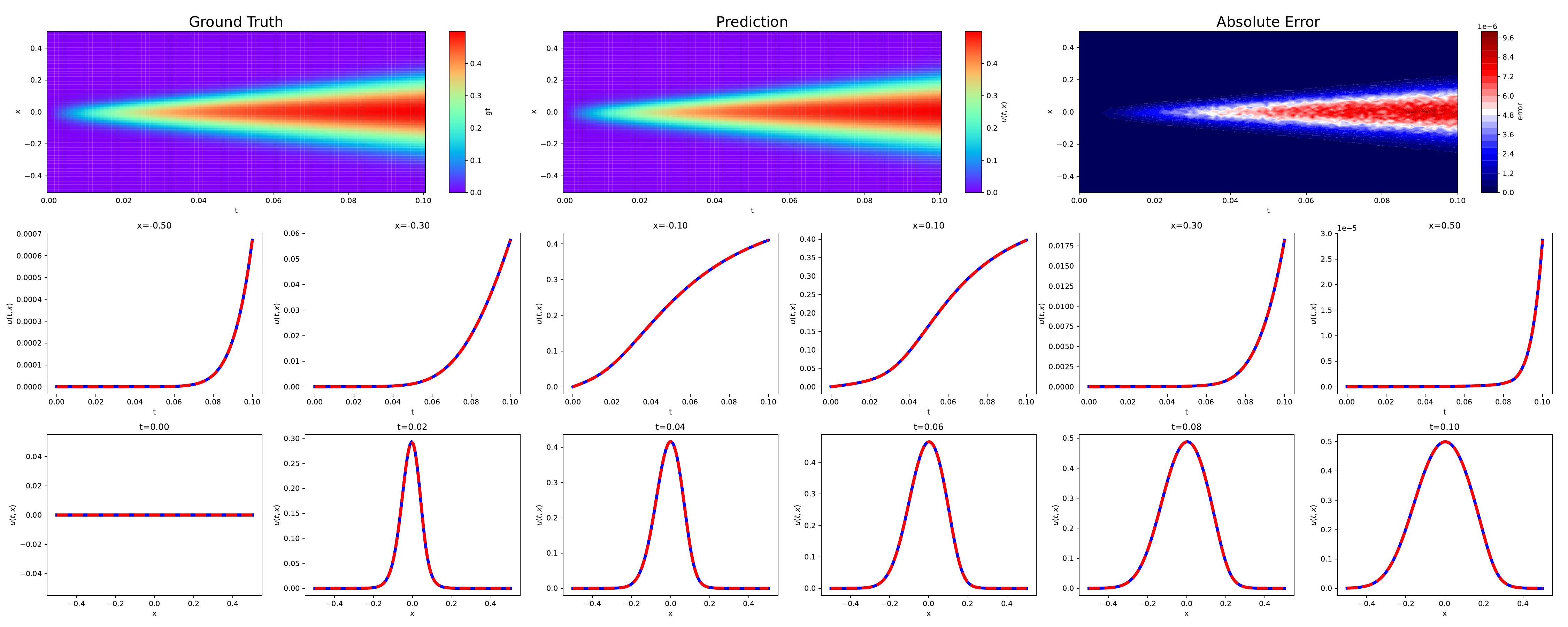}
     \vspace{-16pt}
     \caption{HRGrad predicts the macroscopic velocity moments $u$ of the 1D Riemann problem for $\varepsilon = 0.1$.}
\end{figure}

\begin{figure}[!htb]
     \centering
     \includegraphics[width=\textwidth]{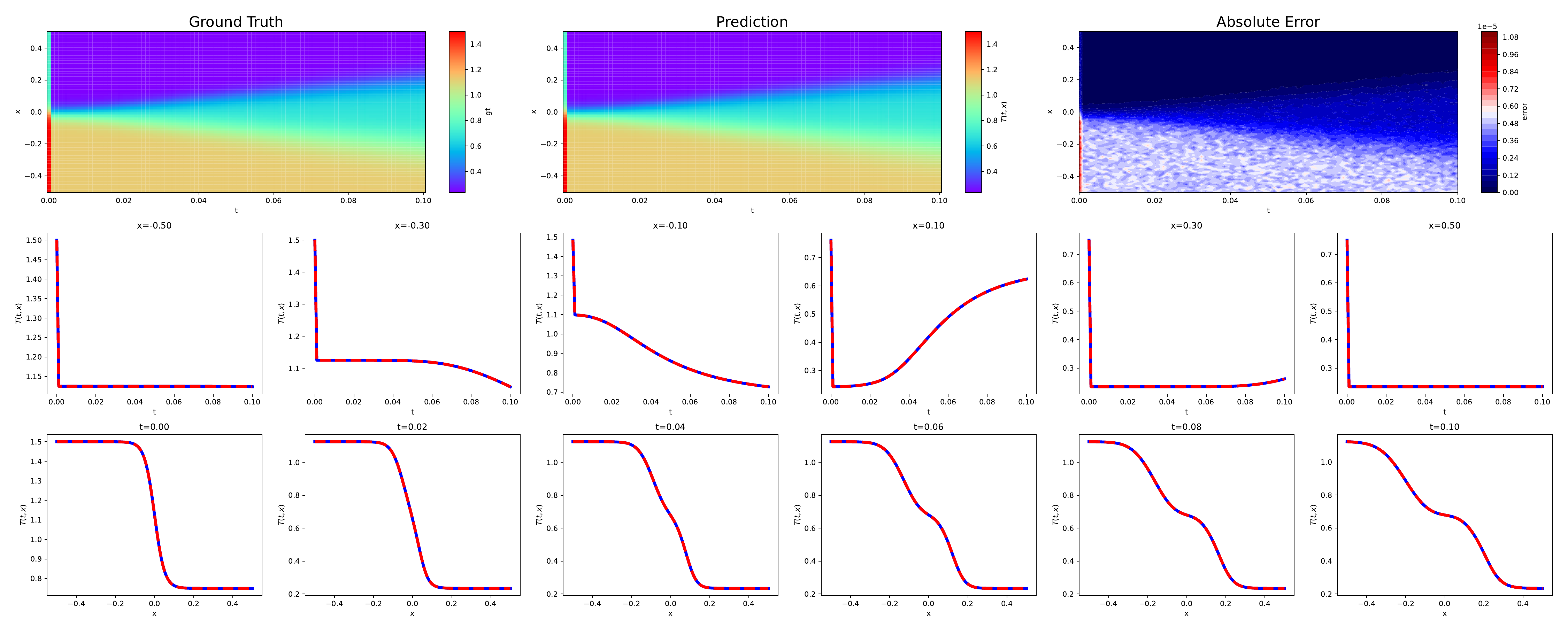}
     \vspace{-16pt}
     \caption{HRGrad predicts the macroscopic velocity moments $T$ of the 1D Riemann problem for $\varepsilon = 0.1$.}
\end{figure}

\subsubsection{ES-BGK problems}

To further examine the performance of HRGrad beyond the classical BGK relaxation model, we consider the ellipsoidal statistical BGK (ES-BGK) equation.
Compared with the standard BGK model, the ES-BGK equation introduces an anisotropic Gaussian equilibrium depending on the stress tensor, which enables the model to recover the correct Prandtl number in the Navier--Stokes--Fourier asymptotic regime.
This makes the ES-BGK problem a natural higher-level benchmark for testing multiscale optimization methods, since the neural solver must simultaneously resolve the kinetic distribution, the macroscopic conservation laws, and the anisotropic moment structure.

We consider the one-dimensional-in-space and two-dimensional-in-velocity ES-BGK equation
\begin{equation}
    \partial_t f + v_1 \partial_x f
    =
    \frac{\tau(\rho,T)}{\varepsilon}
    \left( G[f] - f \right),
    \qquad
    x \in \Omega \subset \mathbb{R},
    \quad v=(v_1,v_2)\in\mathbb{R}^2 ,
\end{equation}
where $f=f(t,x,v)$ is the particle distribution function, $\varepsilon$ is the Knudsen number, and $G[f]$ is the ellipsoidal Gaussian equilibrium.
The macroscopic quantities are defined by
\begin{equation}
    \rho = \int_{\mathbb{R}^2} f\,\diff v,
    \qquad
    \rho u = \int_{\mathbb{R}^2} v f\,\diff v,
    \qquad
    E = \frac{1}{2}\int_{\mathbb{R}^2} |v|^2 f\,\diff v .
\end{equation}
The temperature tensor is given by
\begin{equation}
    \Theta
    =
    \frac{1}{\rho}
    \int_{\mathbb{R}^2}
    (v-u)\otimes(v-u) f\,\diff v ,
\end{equation}
and the corrected tensor temperature is
\begin{equation}
    \mathcal{T}
    =
    (1-\nu) T I + \nu \Theta,
    \qquad
    -\frac{1}{2} \leq \nu < 1 .
\end{equation}
The ellipsoidal Gaussian is then defined as
\begin{equation}
    G[f]
    =
    \frac{\rho}{\sqrt{\det(2\pi\mathcal{T})}}
    \exp\left(
        -\frac{1}{2}
        (v-u)^\top \mathcal{T}^{-1}(v-u)
    \right).
\end{equation}
The parameter $\nu$ controls the Prandtl number through
\begin{equation}
    \mathrm{Pr} = \frac{1}{1-\nu}.
\end{equation}

In the hydrodynamic limit $\varepsilon\to 0$, the solution approaches a local equilibrium and the leading-order macroscopic dynamics are governed by the compressible Euler system.
When the first-order Chapman--Enskog correction is taken into account, the ES-BGK model is consistent with the Navier--Stokes--Fourier equations with an adjustable Prandtl number.
Therefore, this example allows us to test whether HRGrad can handle not only the kinetic-to-fluid transition, but also the optimization conflict induced by anisotropic stress and heat-flux corrections.

We adopt a Riemann-type benchmark with left and right macroscopic states
\begin{equation}
    (\rho_L,u_L,T_L)=(1,M\sqrt{2},1),
    \qquad
    (\rho_R,u_R,T_R)=(1,0,1.05),
\end{equation}
where $M=2.5$.
The initial distribution is taken as the corresponding local Maxwellian generated from these macroscopic states.
The computational domain is chosen as
\begin{equation}
    x\in[-0.5,0.5],
    \qquad
    v\in[-V,V]^2,
\end{equation}
with $V=8$ unless otherwise specified.
The representative Knudsen numbers are selected as
\begin{equation}
    \varepsilon \in \left\{5\times 10^{-1},10^{-1},10^{-2},10^{-3}\right\},
\end{equation}
and the multiscale training version samples $\varepsilon$ from the same log-uniform distribution used in the other experiments.

In the neural approximation, we use separate network channels for the distribution function $f_\theta$ and the macroscopic variables $U_\theta=(\rho_\theta,u_\theta,T_\theta)$.
To improve stability, the positivity of $f_\theta$, $\rho_\theta$, and $T_\theta$ is enforced through softplus or exponential parameterizations.
Since the ES-BGK equilibrium depends on the tensor $\mathcal{T}$, we additionally impose a stress-consistency loss between the stress tensor computed from $f_\theta$ and the auxiliary macroscopic stress representation.
The loss contains the kinetic residual, the conservation-law residual, the moment-consistency residual, the ES equilibrium residual, and the initial and boundary losses.
These loss components are treated as different tasks and are aggregated by HRGrad.

The main diagnostic quantities are the density $\rho$, velocity $u$, temperature $T$, heat flux
\begin{equation}
    Q_1(t,x)
    =
    \frac{1}{\varepsilon}
    \int_{\mathbb{R}^2}
    \frac{|v-u|^2}{2}(v_1-u_1) f\,\diff v,
\end{equation}
and the anisotropy indicator
\begin{equation}
    \mathcal{A}(t,x)
    =
    \frac{\|\Theta - T I\|_F}{T}.
\end{equation}
For large $\varepsilon$, the HRGrad prediction is compared with a high-resolution discrete-velocity ES-BGK solver.
For small $\varepsilon$, we also compare the macroscopic quantities with the corresponding fluid-limit reference solution.
This setup provides a more stringent test than the standard BGK equation, because the method must preserve both the stiff relaxation structure and the anisotropic moment information across different kinetic regimes.

\subsection{Linear transport problems}
We consider representative slab-geometry transport benchmarks from the rarefied regime ($\varepsilon \approx O(1)$) to the diffusive regime ($\varepsilon \to 0$). 
To keep the presentation focused, we organize them into three groups: a smooth periodic benchmark, a family of one-dimensional non-periodic stress tests, and a two-dimensional smooth benchmark.
Together they cover different initial-boundary conditions, heterogeneous coefficients, boundary-layer effects, and dimensional extension.

Consider the linear transport equation
\begin{equation}
    \varepsilon \partial_t f + v\partial_x f = \frac{1}{\varepsilon}\left ( \frac{1}{2} \int_{-1}^1 f \, \diff v' - f \right ),
\end{equation}
and recall that the even-odd decomposition of this equation is
\begin{equation}
\left \{
\begin{aligned}
     & \varepsilon^2 \partial_t r + \varepsilon^2 v\partial_x j = \rho - r, \\
     & \varepsilon^2 \partial_t j + v \partial_x r = - j,                   \\
     & \partial_t \rho +  \left \langle v \partial_x j \right \rangle = 0,  \\
     & \rho = \left \langle r \right \rangle.                               \\
\end{aligned}
\right.
\end{equation}

\subsubsection{Smooth initial data with periodic BC}
We start from the rarefied regimes where $\varepsilon=1$ to the diffusive regime where $\varepsilon\to 0$, and consider periodic boundary conditions with a smooth initial data as follows
\begin{equation}
    f_0(x, v) = \frac{\rho(x)}{\sqrt{2\pi}}e^{-\frac{v^2}{2}},
\end{equation}
where
\begin{equation}
    \rho(x) = 1 + \cos (4 \pi x).
\end{equation}

The source term, scattering, and absorbing coefficients are set as
\begin{equation}
    \sigma_S = 1, \quad \sigma_A = 0, \quad Q = 0, \quad \varepsilon\in(0,1].
\end{equation}

To enhance numerical performance, exact periodic boundary conditions are enforced. 
The approach relies on a Fourier basis, where the transform
\begin{equation}
    T: x \to \{\sin( 2\pi j x), \cos(2\pi j x)\}_{j = 1}^k
\end{equation}
is applied prior to the first layer of the DNNs, as outlined in \cite{han2020sol, lyu2020enforcing}. 
In this paper, we set $k = 8$ to strike a balance between computational efficiency and the accuracy of the Fourier representation.

\subsubsection{1D non-periodic transport problems}
We next collect three representative one-dimensional non-periodic tests, which stress HRGrad under boundary-driven transport, spatially varying scattering, and kinetic boundary layers.

\paragraph{Isotropic in-flow boundary condition}
We first consider isotropic in-flow boundary conditions given by
\begin{equation}
    x \in [0, 1], \quad F_L(v) = 1, \quad F_R(v) = 0.
\end{equation}
The initial condition is set as $f_0(x, v) = 0$.
The source term, scattering, and absorbing coefficients are defined as
\begin{equation}
    Q = 0, \quad \sigma_S = 1, \quad \sigma_A = 0, \quad \varepsilon\in(0,1],
\end{equation}
The results are illustrated in Figure, where the exact value of $\left \langle g \right \rangle = 0$ is achieved. 

To enhance numerical performance, $\rho_{\theta}$ can be further designed to inherently satisfy the initial condition, which is given by
\begin{equation}
    \rho_{\theta}(t, x) := t \cdot \exp \left( -\tilde{\rho}_{\theta}(t, x)\right) \approx \rho(t, x).
\end{equation}

\paragraph{Variable scattering coefficient}
Let
\begin{equation}
    x \in [0, 1], \quad F_L(v) = 1, \quad F_R(v) = 0,
\end{equation}
{and initial condition $f_0(x, v) = 0$.} The source term, scattering, and absorbing coefficients are set as
\begin{equation}
    Q = 1, \quad \sigma_S = 1 + (10x)^2, \quad \sigma_A = 0, \quad \varepsilon\in(0,1].
\end{equation}
In Figure, we report the numerical solution at time $t=0.0, 0.1, 0.2$.
In this problem, we have a source term and the scattering cross section that depend on $x$, so the scaling term $\sigma_S / \varepsilon$ ranges from $1/\varepsilon \to O(1)$, a problem with mixing scales.
The numerical results show reasonably good performance of the proposed HRGrad method.

\paragraph{Boundary-layer case}
Let
\begin{equation}
    x \in [0, 1], \quad F_L(v) = 5 \sin(v), \quad F_R(v) = 0,
\end{equation}
and initial condition $f_0(x, v) = 0$.
The source term, scattering, and absorbing coefficients are set as
\begin{equation}
    Q = 0, \quad \sigma_S = 1, \quad \sigma_A = 0, \quad \varepsilon\in(0,1].
\end{equation}
where since $F_L$ depends on $v$, there is a boundary layer near $x=0$.

\subsubsection{2D smooth problems}
Consider a 2D smooth problem from the Rarefied regime to the diffusive regime with
\begin{equation}
    \Gamma =  [0, 1] \times [0, 1], \quad F_B(x,v) = 0, \quad n \cdot v < 0, \quad x \in \partial \Gamma,
\end{equation}
and initial condition $f_0(x, v) = 0$.
The source term, scattering, and absorbing coefficients are set as
\begin{equation}
    Q = 1, \quad \sigma_S = 1, \quad \sigma_A = 0, \quad  \varepsilon\in(0,1].
\end{equation}
were $n$ denotes the exterior unit normal vector on $\partial \Gamma$.

 

\subsubsection{Linear semiconductor Boltzmann--Poisson problems}

We next consider a linear semiconductor Boltzmann--Poisson problem, which can be viewed as a natural extension of the linear transport equation with a self-consistent electric field.
Compared with the pure linear transport benchmarks, this problem contains an additional velocity-space drift term, a Poisson equation for the electrostatic potential, contact boundary layers, and a drift--diffusion asymptotic limit.
It therefore provides a more challenging multiscale test for HRGrad.

We consider the one-dimensional-in-space and one-dimensional-in-velocity model
\begin{equation}
    \varepsilon \partial_t f
    + v\partial_x f
    - E(t,x)\partial_v f
    =
    \frac{1}{\varepsilon} Q(f),
    \qquad
    x\in(0,1), \quad v\in\mathbb{R},
\end{equation}
where $f=f(t,x,v)$ is the electron distribution function, $E$ is the electric field, and $Q(f)$ is a linear relaxation operator.
In the numerical experiments, we use the relaxation-time approximation
\begin{equation}
    Q(f)=\rho M-f,
    \qquad
    \rho(t,x)=\int_{\mathbb{R}} f(t,x,v)\,\diff v,
\end{equation}
where $M(v)$ is a normalized Maxwellian.
The electric potential $\phi$ and the electric field $E$ are determined by the Poisson equation
\begin{equation}
    -\beta \partial_{xx}\phi = \rho - c(x),
    \qquad
    E=-\partial_x\phi ,
\end{equation}
where $\beta$ denotes the scaled Debye length and $c(x)$ is the doping profile.

In the diffusive scaling, as $\varepsilon\to 0$, the distribution approaches local equilibrium,
\begin{equation}
    f = \rho M + O(\varepsilon),
\end{equation}
and the limiting macroscopic model is the drift--diffusion--Poisson system
\begin{equation}
    \partial_t \rho
    =
    \partial_x
    \left(
        D\partial_x\rho
        + \eta \rho E
    \right),
    \qquad
    -\beta \partial_{xx}\phi = \rho - c(x),
    \qquad
    E=-\partial_x\phi .
\end{equation}
Here $D$ is the diffusion coefficient and $\eta$ is the mobility coefficient.
This asymptotic structure is analogous to the diffusion limit of the linear transport equation, but the additional electric-field coupling introduces stronger task interaction and more severe optimization stiffness.

We use a smooth $n^+-n-n^+$ diode-type doping profile,
\begin{equation}
\begin{aligned}
    c(x)
    =
    C_n
    &+
    \frac{C_+-C_n}{2}
    \left[
        1+\tanh\left(\frac{x_1-x}{s}\right)
    \right]  \\
    &+
    \frac{C_+-C_n}{2}
    \left[
        1+\tanh\left(\frac{x-x_2}{s}\right)
    \right],
\end{aligned}
\end{equation}
where $x_1=0.3$, $x_2=0.7$, and $s=0.02$.
Unless otherwise stated, we set
\begin{equation}
    \beta = 2\times 10^{-3},
    \qquad
    V_b = 5,
\end{equation}
and impose the Dirichlet boundary condition for the electric potential
\begin{equation}
    \phi(t,0)=0,
    \qquad
    \phi(t,1)=V_b .
\end{equation}
For the kinetic distribution, the inflow boundary conditions are prescribed as
\begin{equation}
    f(t,0,v)=F_L(v), \quad v>0,
    \qquad
    f(t,1,v)=F_R(v), \quad v<0,
\end{equation}
where $F_L$ and $F_R$ are Maxwellian inflow data associated with the contact densities.
The initial condition is chosen as
\begin{equation}
    f(0,x,v)=c(x)M(v).
\end{equation}

The representative Knudsen numbers are
\begin{equation}
    \varepsilon\in\{1,10^{-1},10^{-2},10^{-3}\}.
\end{equation}
For the multiscale version, $\varepsilon$ is sampled from the log-uniform distribution on $[\varepsilon_{\min},1]$, as described in the multiscale sampling paragraph.
This is particularly important for the Boltzmann--Poisson problem, since the small-$\varepsilon$ regime is dominated by the drift--diffusion balance and the Poisson coupling, whereas the kinetic regime requires resolving the velocity-space non-equilibrium induced by the electric field.

For the neural representation, we use a micro--macro form
\begin{equation}
    f_\theta(t,x,v)
    =
    \rho_\theta(t,x)M(v)
    +
    \varepsilon g_\theta(t,x,v),
    \qquad
    \left\langle g_\theta \right\rangle = 0,
\end{equation}
together with an additional potential head $\phi_\theta(t,x)$.
The electric field is then computed by
\begin{equation}
    E_\theta(t,x)=-\partial_x\phi_\theta(t,x).
\end{equation}
The total loss contains the kinetic residual, the micro--macro consistency residual, the Poisson residual, the drift--diffusion limiting residual, and the initial and boundary losses.
The Poisson residual is essential in this example, because an inaccurate electric field can lead to a wrong drift direction even if the density error is small.
HRGrad is then applied to aggregate the gradients associated with the different $\varepsilon$-tasks and different physical residuals.

The reference solutions are generated by a deterministic asymptotic-preserving solver based on parity or micro--macro decomposition, with Gauss--Hermite quadrature in velocity and finite differences in space.
For $\varepsilon=O(1)$, we compare the predicted density $\rho$, current $J$, electric field $E$, and potential $\phi$ with the kinetic reference solution.
For $\varepsilon\ll 1$, we additionally compare with the drift--diffusion--Poisson limit.
The current is computed by
\begin{equation}
    J(t,x)=\int_{\mathbb{R}} v f(t,x,v)\,\diff v.
\end{equation}
The main error metrics are the relative $\ell^2$ errors of $\rho$, $E$, and $\phi$, the Poisson residual, the mass conservation error, and the asymptotic-preserving error
\begin{equation}
    e_{\mathrm{AP}}(t)
    =
    \frac{\|f_\theta-\rho_\theta M\|_{L^1_{x,v}}}
    {\|f_\theta\|_{L^1_{x,v}}}.
\end{equation}
This example directly tests whether HRGrad can maintain stable training when the kinetic equation, the macroscopic limit equation, and the self-consistent field equation impose competing gradient directions.

For the additional ES-BGK and semiconductor Boltzmann--Poisson experiments, the same multiscale training protocol is adopted.
In particular, $\varepsilon$ is treated as a task parameter and sampled by the log-uniform strategy.
The loss functions are decomposed into kinetic residuals, macroscopic limiting residuals, moment or field consistency residuals, and initial-boundary residuals, so that HRGrad can explicitly correct the gradient conflict among different scales and physical constraints.

\section{Conclusions}\label{sec:conclusion}

In this study, we have developed a harmonized rotational gradient method (HRGrad) for training the model to simultaneously solve both the Boltzmann-BGK equation and the linear transport equation across all ranges of Knudsen numbers.
This is also a multi-scale problem where the Knudsen number gradually decreases, reflecting a transition from large-scale to small-scale phenomena, leading to a shift from free molecular flow to a continuum flow state.
The computational challenge inherent in this task arises from the multi-scale nature of the kinetic equation and the simultaneous handling of different scale-solving issues. 
Existing methods for addressing such problems often experience gradient conflicts, making effective training difficult. 
To address this, we propose the harmonized rotational gradient method (HRGrad), which not only corrects the gradient magnitude but also adjusts the gradient direction, balancing the task losses for different Knudsen numbers by ensuring each task's loss decreases at approximately the same rate.
Additionally, we employ APNNs to solve these equations using macro-micro and even-odd decomposition for each task, addressing the small-scale challenges for PINNs.
Through a series of numerical tests, we demonstrate that HRGrad is capable of predicting solutions for different Knudsen numbers of the Boltzmann-BGK equation and the linear transport equation, even for unseen Knudsen numbers.

In addition, there is still room for improvement on the complexity of the proposed HRGrad method for solving high-dimensional kinetic problems, along with the computational cost of handling complex collisions and high-dimensional integrals, which we leave for future work.
In future work, we aim to integrate a neural sparse representation method \cite{li2024solving} to efficiently approximate the Maxwellian distribution within the BGK and quadratic collision models, thereby enhancing HRGrad's computational efficiency for each task. 
Furthermore, we propose incorporating a separable physics-informed neural network (SPINN)-based method \cite{oh2025separable} to address high-dimensional Boltzmann-BGK problems, enabling HRGrad to handle even more intricate high-dimensional challenges effectively.

\section*{Acknowledgments}
We would like to thank the referees for their insightful comments that greatly improved the paper.
The computing for this work was supported by the High Performance Computing Platform at Eastern Institute of Technology, Ningbo.

\appendix
\section{Convergence analysis}
\label{app:convergence_analysis}

In this appendix, we provide the formal proof of global convergence for both convex and
non-convex landscapes for the proposed HRGrad method, together with two supplementary
analyses: \textit{per-task gradient descent compatibility} (how the isometric rotation
preserves individual task optimization) and \textit{MCAE angular stability}
(how the MER objective achieves the minimal necessary rotation for conflict resolution).
All four HRGrad stages contribute explicitly to the results below.

\textbf{Notation.}  Throughout this appendix, $g_i = \nabla_\theta \mathcal{L}_i(\theta)$ denotes
the $i$-th task gradient, $\bar{g}_i = \mathcal{U}(g_i)$ its unit vector,
$\alpha_i^* \in [0,\pi/2]$ the optimal rotation angle from the MER objective (Eq.~\eqref{eq:mer_objective}),
and $g_i^{\mathrm{rot}}$ the isometrically rotated counterpart defined in Eq.~\eqref{eq:grot_def}.
We write $\mathbb{K} = \{G\lambda \mid \lambda \in \mathbb{R}_+^m\}$ for the Primal Gradient Cone,
$\mathbb{K}^* = \{y \in \mathbb{R}^D \mid G^\top y \geq \mathbf{0}_m\}$ for the Dual Gradient Cone,
and $\mathbb{H} = \mathbb{K} \cap \mathbb{K}^*$ for the Harmonized Cone.

The convergence analysis rests on two geometric properties of HRGrad that follow
directly from Lemma~\ref{app:lem:hrgrad} and one regularity condition on the
alignment between the original task gradients and the fair consensus direction.

\begin{description}
  \item[(P1) Equal cosine similarity\quad(Lemma~\ref{app:lem:hrgrad}\textit{(iii)})]
    There exists a constant $S_c > 0$ such that
    $\mathcal{U}(g_i^{\mathrm{rot}})^\top g_u = S_c$ for all $i = 1,\ldots,m$.
  \item[(P2) Magnitude conservation\quad(Lemma~\ref{app:lem:hrgrad}\textit{(i),(iii)})]
    $\|g_i^{\mathrm{rot}}\| = \|g_i\|$ for all $i$, and consequently
    $\|g_{\mathrm{HRGrad}}\| = S_c\sum_{i=1}^m\|g_i\|$.
\end{description}

\begin{assumption}[Strict Common Descent]\label{ass:scd}
  At every iterate $\theta$, the original task gradients satisfy
  $g_i^\top g_u \geq \rho_i\|g_i\|$ for some constants $\rho_i > 0$,
  $i = 1,\ldots,m$.  Set $\rho_{\min} = \min_{1\leq i\leq m}\rho_i > 0$.
\end{assumption}

\begin{remark}[Sufficient condition for Assumption~\ref{ass:scd}]\label{rem:scd}
  Let $\alpha_{\max}^* = \max_i \alpha_i^*$ be the largest MER rotation angle.
  The chord-length identity $\|g_i^{\mathrm{rot}} - g_i\| = 2\|g_i\|\sin(\alpha_i^*/2)$
  and the Cauchy--Schwarz inequality give
  \begin{equation}\label{eq:app:scd_suff}
    \sum_{i=1}^m g_i^\top g_u
    = \sum_{i=1}^m (g_i^{\mathrm{rot}})^\top g_u
      - \sum_{i=1}^m (g_i^{\mathrm{rot}} - g_i)^\top g_u
    \geq \Bigl(S_c - 2\sin\tfrac{\alpha_{\max}^*}{2}\Bigr)
         \sum_{i=1}^m \|g_i\|,
  \end{equation}
  so Assumption~\ref{ass:scd} holds with
  $\rho_{\min} = S_c - 2\sin(\alpha_{\max}^*/2) > 0$
  whenever $\alpha_{\max}^* < 2\arcsin(S_c/2)$.
  In the conflict-free case $\alpha_i^* = 0$, this gives $\rho_{\min} = S_c$ and
  $\kappa = 1$ (see Corollary~\ref{cor:conflict_free}).
\end{remark}

Under Assumption~\ref{ass:scd}, we define the \emph{manifold mapping coefficient}
\begin{equation}\label{eq:app:kappa}
  \kappa := \frac{\rho_{\min}}{S_c} \in (0, 1],
\end{equation}
which equals $1$ precisely when all task gradients are non-conflicting.

\subsection{Complete Proof of the Geometric Lemma}\label{app:proof:lemma}

\begin{lemma}[Geometric Properties of HRGrad]\label{app:lem:hrgrad}
Let $\{g_i^{\mathrm{rot}}\}_{i=1}^m$ and $g_{\mathrm{HRGrad}}$ be defined in
Eqs.~\eqref{eq:grot_def}, \eqref{eq:HRGrad_gu}--\eqref{eq:HRGrad_final},
and let $M = [\mathcal{U}(g_1^{\mathrm{rot}}), \ldots, \mathcal{U}(g_m^{\mathrm{rot}})] \in \mathbb{R}^{D \times m}$
have full column rank.  Then:
\begin{itemize}
  \item[\textit{(i)}] \textbf{(Isometric Fidelity)}
    $\|g_i^{\mathrm{rot}}\| = \|g_i\|$ for all $i$.
  \item[\textit{(ii)}] \textbf{(Non-Conflict)}
    $(g_i^{\mathrm{rot}})^\top g_{\mathrm{HRGrad}} \geq 0$ for all $i$,
    with strict inequality when $\|g_{\mathrm{HRGrad}}\| > 0$.
  \item[\textit{(iii)}] \textbf{(Equal Cosine Similarity)}
    $\exists\, S_c > 0$ s.t.\ $\dfrac{(g_i^{\mathrm{rot}})^\top g_{\mathrm{HRGrad}}}
    {\|g_i^{\mathrm{rot}}\|\,\|g_{\mathrm{HRGrad}}\|} = S_c$ for all $i$.
  \item[\textit{(iv)}] \textbf{(Aggregate Product Identity)}
    $\bigl(\sum_{i=1}^m g_i^{\mathrm{rot}}\bigr)^\top g_{\mathrm{HRGrad}} = \|g_{\mathrm{HRGrad}}\|^2$.
\end{itemize}
\end{lemma}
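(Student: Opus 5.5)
The plan is to prove the four items directly from the algebraic definitions in Stages 2 and 4. I will not use the cone-geometric picture of Stage 1, since the anchor $d^*$ only enters through the definition of $w_i$. Throughout, I idealize the normalization $\mathcal{U}$ by taking $\delta=0$, so that $\mathcal{U}(x)$ is an exact unit vector for $x\neq 0$. For $\delta>0$ every identity below holds only up to $O(\delta)$, and I will say so explicitly. I also assume $g_i\neq 0$, so that $\bar g_i$ and $\mathcal{U}(g_i^{\mathrm{rot}})$ are defined. For a conflicting task I assume $\bar g_i\neq -d^*$; otherwise $d^*-(\bar g_i^\top d^*)\bar g_i=0$ and $w_i$ is undefined.

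For \textit{(i)}, take a conflicting task. By construction $w_i$ is the normalized Gram--Schmidt residual of $d^*$ against $\bar g_i$, so $\bar g_i^\top w_i=0$ and $\|w_i\|=\|\bar g_i\|=1$. Pythagoras then gives $\|r_i(\alpha_i^*)\|^2=\cos^2\alpha_i^*+\sin^2\alpha_i^*=1$, and the restoration $g_i^{\mathrm{rot}}=\|g_i\|\,r_i(\alpha_i^*)$ yields $\|g_i^{\mathrm{rot}}\|=\|g_i\|$. For a non-conflicting task, $g_i^{\mathrm{rot}}=g_i$ and there is nothing to prove.

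Item \textit{(iii)} is the core step, and I would prove it before \textit{(ii)}. Full column rank of $M$ makes $M^\top M$ invertible, so $(M^\dagger)^\top=M(M^\top M)^{-1}$. Set $v:=M(M^\top M)^{-1}\mathbf 1_m$. Then $M^\top v=\mathbf 1_m$. In particular $v\neq0$, because otherwise $M^\top v$ would vanish. Hence $g_u=v/\|v\|$ and $M^\top g_u=S_c\mathbf 1_m$ with $S_c:=1/\|v\|>0$. This says $\mathcal U(g_i^{\mathrm{rot}})^\top g_u=S_c$ for every $i$, that is, $(g_i^{\mathrm{rot}})^\top g_u=\|g_i^{\mathrm{rot}}\|\,S_c$. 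Next write $g_{\mathrm{HRGrad}}=c\,g_u$ with $c=\sum_j (g_j^{\mathrm{rot}})^\top g_u=S_c\sum_j\|g_j\|>0$, where the last equality uses \textit{(i)}. Since $\|g_u\|=1$, we get $\|g_{\mathrm{HRGrad}}\|=c$. The cosine in \eqref{eq:equal_sc} is then $c\|g_i^{\mathrm{rot}}\|S_c/(\|g_i^{\mathrm{rot}}\|\,c)=S_c$, which is independent of $i$.

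Item \textit{(ii)} then follows immediately: $(g_i^{\mathrm{rot}})^\top g_{\mathrm{HRGrad}}=c\,\|g_i\|\,S_c\geq0$. It is strictly positive whenever $g_i\neq0$, and in that case $\|g_{\mathrm{HRGrad}}\|=c>0$. This route avoids the claim that $g_u$ lies in $\mathbb K^*$, which is not evident from Stage 1, since $g_u$ is built from $M$ rather than from $d^*$. For \textit{(iv)}, linearity gives $\bigl(\sum_i g_i^{\mathrm{rot}}\bigr)^\top g_{\mathrm{HRGrad}}=c\sum_i (g_i^{\mathrm{rot}})^\top g_u=c^2=\|g_{\mathrm{HRGrad}}\|^2$. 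This identity in fact needs only $\|g_u\|=1$ and the definition \eqref{eq:HRGrad_final}.

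The main obstacle is bookkeeping rather than depth. One must make the idealization $\delta=0$ explicit, or carry $O(\delta)$ errors through all four items. One must also handle the degenerate cases ($g_i=0$, or $\bar g_i=-d^*$), where $\bar g_i$, $w_i$ or $\mathcal U(g_i^{\mathrm{rot}})$ fail to be defined and the full-column-rank hypothesis is not even meaningful.
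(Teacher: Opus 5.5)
Your proof is correct. For (i), (iii) and (iv) it follows the paper's own argument. Item (i) uses Gram--Schmidt orthogonality plus magnitude restoration. Item (iii) uses $(M^\dagger)^\top=M(M^\top M)^{-1}$ to get $M^\top g_u=S_c\mathbf{1}_m$. Item (iv) sums the equal projections; your remark that it needs only $\|g_u\|=1$ and the definition of $g_{\mathrm{HRGrad}}$ is a small sharpening.

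The genuine difference is in (ii). The paper argues by cone duality: it asserts that Stages~1--3 deflect every $g_i^{\mathrm{rot}}$ into $\mathbb{H}=\mathbb{K}\cap\mathbb{K}^*$, and that $g_u$ is the normalized centroid of the extreme rays of $\mathbb{H}$. From this it concludes $g_u\in\mathbb{K}^*$ and hence $(g_i^{\mathrm{rot}})^\top g_u\ge 0$. You instead prove (iii) first and read off $(g_i^{\mathrm{rot}})^\top g_{\mathrm{HRGrad}}=c\,S_c\|g_i\|$ with $c=S_c\sum_j\|g_j\|>0$.

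Your ordering is the one that actually holds up. As you suspected, $g_u=\mathcal{U}[(M^\dagger)^\top\mathbf{1}_m]$ is the Stage~4 pseudo-inverse direction, not the Stage~1 centroid $d^*$. So $g_u\in\mathbb{K}^*$ cannot be read off from Stage~1. The claim that every $g_i^{\mathrm{rot}}$ lands in $\mathbb{H}$ is also false in general. The paper itself places $d^*\in\mathbb{K}^*$, i.e.\ $G^\top d^*\ge\mathbf{0}_m$, because each extreme ray satisfies $G^\top r_j=G^\top G\pi_j\ge\mathbf{0}_m$. Hence no task ever satisfies $\bar g_i^\top d^*<0$, and $g_i^{\mathrm{rot}}=g_i$. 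That vector lies in $\mathbb{K}^*$ only if the original gradients are already pairwise non-conflicting.

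Your argument needs nothing beyond the full-column-rank hypothesis in the statement, and it gives strict positivity for every $i$ at once. The cone route, had its inclusions held, would also have given the pairwise property $(g_i^{\mathrm{rot}})^\top g_j^{\mathrm{rot}}\ge 0$ that the appendix later invokes. That property is not part of this lemma, so you lose nothing by not proving it.

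Finally, your explicit $\delta=0$ idealization is one the paper makes silently. For $\delta>0$ the operator $\mathcal{U}$ does not return unit vectors, so $\|g_u\|<1$ and the columns of $M$ are not unit. Moreover $\bar g_i^\top w_i\propto(\bar g_i^\top d^*)(1-\|\bar g_i\|^2)\neq 0$, so $w_i$ is not exactly orthogonal to $\bar g_i$. The identities in (i), (iii) and (iv) then hold only approximately, exactly as you flagged.
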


\begin{proof}
\textit{(i)} $[\text{Stage 2 isometry}]$\quad
For a conflicting task ($\bar{g}_i^\top d^* < 0$): the Gram-Schmidt step ensures
$\bar{g}_i \perp w_i$ and $\|\bar{g}_i\| = \|w_i\| = 1$, so the rotation preserves unit length:
\begin{equation}\label{eq:app:isom}
  \|r_i(\alpha_i^*)\|^2 = \cos^2(\alpha_i^*)\|\bar{g}_i\|^2 + \sin^2(\alpha_i^*)\|w_i\|^2
  = \cos^2(\alpha_i^*) + \sin^2(\alpha_i^*) = 1.
\end{equation}
The Stage~4 magnitude restoration gives $g_i^{\mathrm{rot}} = \|g_i\|\,r_i(\alpha_i^*)$,
so $\|g_i^{\mathrm{rot}}\| = \|g_i\|$.
For a non-conflicting task ($\bar{g}_i^\top d^* \geq 0$): $g_i^{\mathrm{rot}} = g_i$
by definition~\eqref{eq:grot_def}, so the identity is trivially satisfied.

\textit{(ii)} $[\text{Harmonized Cone dual property}]$\quad
By Stages~1--3, every rotated gradient $g_i^{\mathrm{rot}}$ is deflected into
$\mathbb{H} = \mathbb{K} \cap \mathbb{K}^*$.
Since $g_u$ is constructed as the normalized centroid of the extreme rays of $\mathbb{H}$
(Stage~4), we have $g_u \in \mathrm{Int}(\mathbb{H}) \subseteq \mathbb{K}^*$.
By the definition of the Dual Cone $\mathbb{K}^*$:
\begin{equation}\label{eq:app:dual}
  g_u \in \mathbb{K}^* \;\Longrightarrow\; (g_i^{\mathrm{rot}})^\top g_u \geq 0
  \quad\forall\;i \text{ (since $g_i^{\mathrm{rot}} \in \mathbb{K}$).}
\end{equation}
From Eq.~\eqref{eq:HRGrad_final}, $g_{\mathrm{HRGrad}} = \bigl(\sum_j (g_j^{\mathrm{rot}})^\top g_u\bigr) g_u$:
\begin{equation}\label{eq:app:noncf}
  (g_i^{\mathrm{rot}})^\top g_{\mathrm{HRGrad}}
  = \underbrace{\Bigl(\sum_{j=1}^m (g_j^{\mathrm{rot}})^\top g_u\Bigr)}_{\geq\,0}
    \cdot \underbrace{(g_i^{\mathrm{rot}})^\top g_u}_{\geq\,0} \geq 0.
\end{equation}
Equality holds iff $\sum_j (g_j^{\mathrm{rot}})^\top g_u = 0$, i.e., $\|g_{\mathrm{HRGrad}}\| = 0$.

\textit{(iii)} $[\text{Pseudo-inverse fair aggregation}]$\quad
Since $M$ has full column rank, the Moore-Penrose pseudoinverse satisfies
$(M^\dagger)^\top = M(M^\top M)^{-1}$, so
$g_u \propto M(M^\top M)^{-1}\mathbf{1}_m$.
Computing the projected product:
\begin{equation}\label{eq:app:sc}
  M^\top g_u
  = \frac{M^\top M(M^\top M)^{-1}\mathbf{1}_m}{\bigl\|M(M^\top M)^{-1}\mathbf{1}_m\bigr\|}
  = \frac{\mathbf{1}_m}{\bigl\|M(M^\top M)^{-1}\mathbf{1}_m\bigr\|}
  =: S_c\,\mathbf{1}_m, \quad S_c > 0.
\end{equation}
Hence $\mathcal{U}(g_i^{\mathrm{rot}})^\top g_u = S_c$ uniformly for all $i$.
Because $g_{\mathrm{HRGrad}} \propto g_u$, the cosine similarity $(g_i^{\mathrm{rot}})^\top g_{\mathrm{HRGrad}}
/ (\|g_i^{\mathrm{rot}}\|\|g_{\mathrm{HRGrad}}\|) = S_c$ is constant across all tasks.

\textit{(iv)} $[\text{Summation identity}]$\quad
Using \textit{(iii)}, each cross-product is
$(g_i^{\mathrm{rot}})^\top g_{\mathrm{HRGrad}} = \|g_i^{\mathrm{rot}}\|\,\|g_{\mathrm{HRGrad}}\|\,S_c$.
Summing and using the magnitude formula $\|g_{\mathrm{HRGrad}}\|
= \bigl(\sum_i (g_i^{\mathrm{rot}})^\top g_u\bigr) = \sum_i \|g_i^{\mathrm{rot}}\| S_c$
(from $\|g_u\|=1$ and \textit{(iii)}):
\begin{equation}\label{eq:app:aggid}
  \Bigl(\sum_{i=1}^m g_i^{\mathrm{rot}}\Bigr)^\top g_{\mathrm{HRGrad}}
  = \sum_{i=1}^m \|g_i^{\mathrm{rot}}\|\,\|g_{\mathrm{HRGrad}}\|\,S_c
  = \|g_{\mathrm{HRGrad}}\| \cdot \underbrace{\sum_{i=1}^m \|g_i^{\mathrm{rot}}\| S_c}_{=\,\|g_{\mathrm{HRGrad}}\|}
  = \|g_{\mathrm{HRGrad}}\|^2.
\end{equation}
\end{proof}

\subsection{Convergence for Convex Losses}\label{app:proof:convex}

\begin{theorem}[Convex Convergence of HRGrad]\label{app:thm:convex}
  Suppose Assumption~\ref{ass:scd} holds.  Let each $\mathcal{L}_i \colon \mathbb{R}^D \to \mathbb{R}$
  be convex and continuously differentiable, and let the total gradient
  $g = \nabla_\theta\mathcal{L} = \sum_{i=1}^m \nabla_\theta\mathcal{L}_i$ be
  $L$-Lipschitz continuous with constant $L > 0$.
  Then for any step size $\gamma \leq 2\kappa/L$, the HRGrad update
  $\theta^+ = \theta - \gamma g_{\mathrm{HRGrad}}$ satisfies
  \begin{equation}\label{eq:app:convex_bound}
    \mathcal{L}(\theta^+)
    \leq \mathcal{L}(\theta)
    - \gamma\kappa\!\Bigl(1 - \frac{L\gamma}{2\kappa}\Bigr)\!\|g_{\mathrm{HRGrad}}\|^2
    \leq \mathcal{L}(\theta).
  \end{equation}
  In particular, $\{\mathcal{L}(\theta^k)\}$ is monotonically non-increasing, and the iterates
  converge to the global optimum of $\mathcal{L}$ or to a Pareto stationary point
  at which $\|g_{\mathrm{HRGrad}}\| = 0$.
\end{theorem}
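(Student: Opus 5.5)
The proof is a standard descent-lemma argument. The new ingredient is that the inner product of the true total gradient $g=\sum_i g_i$ with the update $g_{\mathrm{HRGrad}}$ must be controlled through Assumption~\ref{ass:scd}, since the Aggregate Product Identity (Lemma~\ref{app:lem:hrgrad}\textit{(iv)}) only applies to the rotated gradients.

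\textbf{Step 1 (smoothness).} Since $\nabla\mathcal{L}=g$ is $L$-Lipschitz, the quadratic upper bound gives
\begin{equation*}
  \mathcal{L}(\theta^+)\le \mathcal{L}(\theta)-\gamma\, g^\top g_{\mathrm{HRGrad}}+\tfrac{L\gamma^2}{2}\|g_{\mathrm{HRGrad}}\|^2 .
\end{equation*}

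\textbf{Step 2 (lower bound on the cross term).} We may assume $\|g_{\mathrm{HRGrad}}\|>0$; otherwise the claim is trivial. Then $g_{\mathrm{HRGrad}}=\|g_{\mathrm{HRGrad}}\|\,g_u$. Indeed, by \eqref{eq:HRGrad_final} the scalar coefficient $\sum_j (g_j^{\mathrm{rot}})^\top g_u=S_c\sum_j\|g_j\|$ is positive by (P1)--(P2), and $\|g_u\|=1$. Assumption~\ref{ass:scd} gives
\begin{equation*}
  g^\top g_u=\sum_i g_i^\top g_u\ge \rho_{\min}\sum_i\|g_i\| .
\end{equation*}
Using (P2), $\sum_i\|g_i\|=\|g_{\mathrm{HRGrad}}\|/S_c$. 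Hence
\begin{equation*}
  g^\top g_{\mathrm{HRGrad}}\ge \frac{\rho_{\min}}{S_c}\,\|g_{\mathrm{HRGrad}}\|^2=\kappa\,\|g_{\mathrm{HRGrad}}\|^2,
\end{equation*}
with $\kappa$ as in \eqref{eq:app:kappa}. This replaces the identity $\hat g^\top g_{\mathrm{HRGrad}}=\|g_{\mathrm{HRGrad}}\|^2$ used in Theorem~\ref{thm:hrgrad_convex} for the proxy objective.

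\textbf{Step 3 (descent).} Substituting into Step 1 gives
\begin{equation*}
  \mathcal{L}(\theta^+)\le\mathcal{L}(\theta)-\gamma\kappa\Bigl(1-\tfrac{L\gamma}{2\kappa}\Bigr)\|g_{\mathrm{HRGrad}}\|^2 ,
\end{equation*}
which is \eqref{eq:app:convex_bound}. The coefficient is nonnegative exactly when $\gamma\le 2\kappa/L$, so the sequence $\mathcal{L}(\theta^k)$ is monotonically non-increasing.

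\textbf{Step 4 (limit behaviour).} For strict $\gamma<2\kappa/L$, telescope and use that $\mathcal{L}$ is bounded below; I take this as implicit, since convexity alone does not give it, and it would be stated if needed. This yields $\sum_k\|g^k_{\mathrm{HRGrad}}\|^2<\infty$, so $\|g^k_{\mathrm{HRGrad}}\|\to0$. Then (P2) gives $\sum_i\|g_i^k\|=\|g^k_{\mathrm{HRGrad}}\|/S_c^k\to 0$, provided $S_c^k$ stays bounded away from zero along the iterates. So every task gradient, and hence $g$, vanishes in the limit. By convexity of $\mathcal{L}=\sum_i\mathcal{L}_i$, a point with $g=0$ is a global minimizer. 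Any remaining limit point where only $g_{\mathrm{HRGrad}}=0$ is the degenerate Pareto-stationary alternative stated in the theorem.

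\textbf{Main obstacle.} I expect the delicate point to be Step 2 and the limit argument rather than the algebra. The sign and size of $g^\top g_u$ for the \emph{original} gradients is not given by the construction; it has to come from Assumption~\ref{ass:scd} (or from Remark~\ref{rem:scd} when the rotation angles are small). The limit statement also needs uniform lower bounds on $S_c^k$, and ideally on $\kappa$, along the trajectory. I would state these as part of the standing assumptions: Assumption~\ref{ass:scd} holds with constants $\rho_i$ independent of the iterate, together with the full-rank hypothesis of Lemma~\ref{app:lem:hrgrad} at every step.
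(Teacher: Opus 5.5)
Your Steps 1--3 are the paper's proof: the descent lemma, then the lower bound $g^\top g_{\mathrm{HRGrad}} = \|g_{\mathrm{HRGrad}}\|\sum_i g_i^\top g_u \ge \kappa\|g_{\mathrm{HRGrad}}\|^2$ from Assumption~\ref{ass:scd} and (P2), then substitution. For the limit claim, the paper only notes that equality forces $\|g_{\mathrm{HRGrad}}\|=0$, which by (P2) gives $\nabla\mathcal{L}_i=0$ for every $i$ and hence global optimality by convexity; your telescoping argument in Step 4 reaches the same conclusion more carefully, and the extra hypotheses you flag are genuinely needed for an along-the-iterates statement (lower-boundedness of $\mathcal{L}$, and lower bounds on $S_c^k$ and $\kappa$ that hold uniformly over the iterates).
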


\begin{proof}
Since $g$ is $L$-Lipschitz, the descent lemma gives
\begin{equation}\label{eq:app:quad}
  \mathcal{L}(\theta^+)
  \leq \mathcal{L}(\theta) + g^\top(\theta^+ - \theta)
  + \frac{L}{2}\|\theta^+ - \theta\|^2.
\end{equation}
Substituting $\theta^+ - \theta = -\gamma g_{\mathrm{HRGrad}}$:
\begin{equation}\label{eq:app:ca_temp1}
  \mathcal{L}(\theta^+)
  \leq \mathcal{L}(\theta)
  - \gamma g^\top g_{\mathrm{HRGrad}}
  + \frac{L\gamma^2}{2}\|g_{\mathrm{HRGrad}}\|^2.
\end{equation}
We establish a lower bound for $g^\top g_{\mathrm{HRGrad}}$.
Write $g_{\mathrm{HRGrad}} = \|g_{\mathrm{HRGrad}}\| g_u$ and compute:
\begin{align}
  g^\top g_{\mathrm{HRGrad}}
  &= \|g_{\mathrm{HRGrad}}\|\sum_{i=1}^m g_i^\top g_u \notag \\
  &\geq \|g_{\mathrm{HRGrad}}\|\,\rho_{\min}\sum_{i=1}^m\|g_i\|
    \qquad\text{(Assumption~\ref{ass:scd})} \label{eq:app:inner} \\
  &= \|g_{\mathrm{HRGrad}}\|\,\rho_{\min}\cdot\frac{\|g_{\mathrm{HRGrad}}\|}{S_c}
    \qquad\text{((P2))} \notag \\
  &= \kappa\|g_{\mathrm{HRGrad}}\|^2. \label{eq:app:kappa_bound}
\end{align}
Substituting \eqref{eq:app:kappa_bound} into \eqref{eq:app:ca_temp1}:
\begin{equation}
  \mathcal{L}(\theta^+)
  \leq \mathcal{L}(\theta) - \gamma\kappa\|g_{\mathrm{HRGrad}}\|^2
  + \frac{L\gamma^2}{2}\|g_{\mathrm{HRGrad}}\|^2
  = \mathcal{L}(\theta)
  - \gamma\kappa\!\Bigl(1 - \frac{L\gamma}{2\kappa}\Bigr)\!\|g_{\mathrm{HRGrad}}\|^2.
\end{equation}
For $\gamma \leq 2\kappa/L$ the factor $(1 - L\gamma/(2\kappa)) \geq 0$,
so $\mathcal{L}(\theta^+) \leq \mathcal{L}(\theta)$ with equality iff
$\|g_{\mathrm{HRGrad}}\| = 0$.
By (P2), $\|g_{\mathrm{HRGrad}}\| = 0$ iff $\nabla_\theta\mathcal{L}_i(\theta) = 0$
for all $i$, which is the global optimum by convexity.
\end{proof}

\begin{corollary}[Conflict-free case]\label{cor:conflict_free}
  If all task gradients are non-conflicting ($\alpha_i^* = 0$ for all $i$), then
  $g_i^{\mathrm{rot}} = g_i$, $\kappa = 1$, and the bound \eqref{eq:app:convex_bound}
  tightens to
  $\mathcal{L}(\theta^+) \leq \mathcal{L}(\theta) - \gamma(1 - L\gamma/2)\|g_{\mathrm{HRGrad}}\|^2$
  with step size $\gamma \leq 2/L$.
  Moreover, the Aggregate Product Identity (Lemma~\ref{app:lem:hrgrad}\textit{(iv)})
  gives the exact equality $g^\top g_{\mathrm{HRGrad}} = \|g_{\mathrm{HRGrad}}\|^2$,
  so no alignment assumption is needed in this case.
\end{corollary}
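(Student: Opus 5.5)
The statement to prove is Corollary~\ref{cor:conflict_free}. I would derive it as a specialization of Theorem~\ref{app:thm:convex}, supplemented by a direct argument that avoids Assumption~\ref{ass:scd}.

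\textbf{Step 1: identify the rotated gradients.} If every task is non-conflicting, then $\alpha_i^*=0$ for all $i$. Two cases give $g_i^{\mathrm{rot}}=g_i$. For tasks with $\bar g_i^\top d^*\ge 0$, this holds directly by the second branch of \eqref{eq:grot_def}. For any task treated as conflicting but with $\alpha_i^*=0$, we have $r_i(0)=\bar g_i$, so $g_i^{\mathrm{rot}}=\|g_i\|\bar g_i=g_i$. Here I take $\mathcal{U}(g_i)=g_i/\|g_i\|$, i.e.\ I ignore $\delta$, or treat the $\delta$-regularization as exact, as the appendix implicitly does. Consequently $\sum_i g_i^{\mathrm{rot}}=\sum_i g_i=g$.

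\textbf{Step 2: obtain the exact inner-product identity.} By Step 1, Lemma~\ref{app:lem:hrgrad}\textit{(iv)} reads $g^\top g_{\mathrm{HRGrad}}=\|g_{\mathrm{HRGrad}}\|^2$. This is the claimed exact equality, and it uses only the full-rank hypothesis on $M$, not Assumption~\ref{ass:scd}.

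\textbf{Step 3: identify $\kappa$.} By (P1) we have $g_i^\top g_u=(g_i^{\mathrm{rot}})^\top g_u=S_c\|g_i\|$. Hence Assumption~\ref{ass:scd} holds automatically with $\rho_i=S_c$, which gives $\rho_{\min}=S_c$ and $\kappa=\rho_{\min}/S_c=1$, consistent with Remark~\ref{rem:scd}.

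\textbf{Step 4: the descent bound.} Insert the Step 2 identity into the descent-lemma inequality \eqref{eq:app:ca_temp1} from the proof of Theorem~\ref{app:thm:convex}. This yields $\mathcal{L}(\theta^+)\le\mathcal{L}(\theta)-\gamma(1-L\gamma/2)\|g_{\mathrm{HRGrad}}\|^2$, and the right-hand side is at most $\mathcal{L}(\theta)$ whenever $\gamma\le 2/L$. Equivalently, the result is Theorem~\ref{app:thm:convex} with $\kappa=1$.

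\textbf{Main obstacle.} The only delicate point is Step 1: reconciling ``non-conflicting'' with the case split in \eqref{eq:grot_def}, together with the $\delta$ in $\mathcal{U}$. I would state explicitly that the hypothesis means either $\bar g_i^\top d^*\ge0$ or $\alpha_i^*=0$, and that both yield $g_i^{\mathrm{rot}}=g_i$ up to the normalization convention. Everything after that is direct substitution into results already proved.
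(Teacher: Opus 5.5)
Your proposal is correct and follows essentially the paper's own reasoning. The corollary is Theorem~\ref{app:thm:convex} specialized through Remark~\ref{rem:scd} ($\rho_{\min}=S_c$, hence $\kappa=1$), combined with Lemma~\ref{app:lem:hrgrad}\textit{(iv)} applied with $g_i^{\mathrm{rot}}=g_i$ so that $\sum_i g_i^{\mathrm{rot}}=g$. Your explicit handling of the case split in \eqref{eq:grot_def} and of the $\delta$ in $\mathcal{U}$, which the paper leaves implicit, is extra care rather than a change of route; note also that the Step~2 identity only needs $\|g_u\|=1$, not the full-rank hypothesis.
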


\subsection{Convergence for Non-Convex Losses}\label{app:proof:nonconvex}

\begin{theorem}[Non-Convex Convergence of HRGrad]\label{app:thm:nonconvex}
  Suppose Assumption~\ref{ass:scd} holds.  Let each $\mathcal{L}_i$ be continuously
  differentiable (not necessarily convex), and let
  $g = \nabla_\theta\mathcal{L}$ be $L$-Lipschitz continuous.  Define
  $\mathcal{L}^* = \inf_{\theta}\mathcal{L}(\theta) > -\infty$ and
  $\alpha = \min_{0\leq k\leq K-1} S_c^k > 0$.  Then for step size
  $\gamma \leq \kappa/L$:
  \begin{equation}\label{eq:app:nonconvex_rate}
    \min_{0\leq k\leq K-1}\|\nabla_\theta\mathcal{L}(\theta^k)\|^2
    \leq \frac{2[\mathcal{L}(\theta^0) - \mathcal{L}^*]}{\gamma\,\kappa\,\alpha^2\,K}
    \xrightarrow{K\to\infty} 0.
  \end{equation}
  Consequently, every accumulation point of $\{\theta^k\}$ is a stationary
  point of $\mathcal{L}$.
\end{theorem}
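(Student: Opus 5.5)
The argument is the non-convex analogue of the proof of Theorem~\ref{app:thm:convex}: one-step descent, a lower bound on $\|g_{\mathrm{HRGrad}}^k\|$ in terms of the true gradient, then telescoping.

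\textbf{Step 1: per-step descent.} At iterate $\theta^k$, apply the descent lemma for the $L$-smooth function $\mathcal{L}$ with $\theta^{k+1}-\theta^k = -\gamma g_{\mathrm{HRGrad}}^k$. This reproduces \eqref{eq:app:ca_temp1}. I then reuse the chain \eqref{eq:app:inner}--\eqref{eq:app:kappa_bound}, which relies only on Assumption~\ref{ass:scd} and (P2), not on convexity. It gives $g^\top g_{\mathrm{HRGrad}}^k \ge \kappa\|g_{\mathrm{HRGrad}}^k\|^2$, and hence
\begin{equation*}
  \mathcal{L}(\theta^{k+1}) \le \mathcal{L}(\theta^k) - \gamma\kappa\Bigl(1-\frac{L\gamma}{2\kappa}\Bigr)\|g_{\mathrm{HRGrad}}^k\|^2 \le \mathcal{L}(\theta^k) - \frac{\gamma\kappa}{2}\|g_{\mathrm{HRGrad}}^k\|^2 .
\end{equation*}
The second inequality uses $\gamma\le\kappa/L$, which gives $1-L\gamma/(2\kappa)\ge 1/2$.

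\textbf{Step 2: relating the update norm to the true gradient.} By (P2) and the triangle inequality,
\begin{equation*}
  \|g_{\mathrm{HRGrad}}^k\| = S_c^k\sum_i\|g_i^k\| \ge S_c^k\Bigl\|\sum_i g_i^k\Bigr\| = S_c^k\|\nabla_\theta\mathcal{L}(\theta^k)\| \ge \alpha\|\nabla_\theta\mathcal{L}(\theta^k)\| .
\end{equation*}
Substituting this into Step 1 gives the per-step decrease $\frac{\gamma\kappa\alpha^2}{2}\|\nabla\mathcal{L}(\theta^k)\|^2$.

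\textbf{Step 3: telescoping and stationarity of limit points.} Sum the decrease over $k=0,\dots,K-1$, bound the telescoped left side by $\mathcal{L}(\theta^0)-\mathcal{L}^*$, and bound the sum from below by $K$ times the minimum. This yields \eqref{eq:app:nonconvex_rate}. For the accumulation-point claim, note that the summed bound holds for every $K$, so $\sum_k\|\nabla\mathcal{L}(\theta^k)\|^2<\infty$ and therefore $\|\nabla\mathcal{L}(\theta^k)\|\to0$. Continuity of $\nabla\mathcal{L}$ then makes every limit point stationary.

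\textbf{Main obstacle.} This last passage needs $\alpha$ and $\kappa$ to be uniform over all iterates. As stated, $\alpha=\min_{k<K}S_c^k$ depends on $K$. I would handle this by explicitly assuming $\inf_k S_c^k>0$, and I would also require the constants $\rho_i$ in Assumption~\ref{ass:scd} to be uniform in $\theta$, which is how the assumption is phrased. With those two uniformity conditions, the summability argument goes through without any further restriction.
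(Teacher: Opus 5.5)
Your Steps 1--3 coincide with the paper's proof. Both use the descent lemma with the bound $g^\top g_{\mathrm{HRGrad}}\ge\kappa\|g_{\mathrm{HRGrad}}\|^2$, then (P2) and the triangle inequality, then telescoping; the paper merely keeps $(S_c^k)^2$ inside the telescoped sum and replaces it by $\alpha^2$ at the end, which is immaterial. The genuine difference is the final claim, and there your route is the correct one. The paper extracts only $\liminf_k\|\nabla\mathcal{L}(\theta^k)\|=0$ and then asserts that every accumulation point is stationary, which does not follow. It also silently treats $\alpha=\min_{k<K}S_c^k$ as independent of $K$, and that is needed even for the ``$\to 0$'' in the displayed bound. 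Your argument ($\sum_k\|\nabla\mathcal{L}(\theta^k)\|^2<\infty$, hence $\|\nabla\mathcal{L}(\theta^k)\|\to0$, plus continuity of $\nabla\mathcal{L}$) is what actually delivers the stated conclusion. Two sharpenings of your uniformity discussion follow. First, $\kappa=\rho_{\min}/S_c^k$ also varies with $k$, so uniform $\rho_i$ alone does not make it constant. But $S_c^k\le1$ gives $\kappa^k\ge\rho_{\min}$, so imposing $\gamma\le\rho_{\min}/L$ makes Step~1 valid at every iterate, with $\rho_{\min}$ replacing $\kappa$ in the final bound. Second, suppose you accept the paper's normalization $\kappa\in(0,1]$, i.e.\ $S_c^k\ge\rho_{\min}$ at every iterate; this is immediate when $g_i^{\mathrm{rot}}=g_i$, since then $S_c^k=\mathcal{U}(g_i^k)^\top g_u^k\ge\rho_i$. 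Then your extra hypothesis $\inf_k S_c^k>0$ already follows from uniform $\rho_i$, with $\alpha\ge\rho_{\min}$ for every $K$.
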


\begin{proof}
Apply \eqref{eq:app:ca_temp1} at iteration $k$ with the bound \eqref{eq:app:kappa_bound}:
\begin{equation}\label{eq:app:descent_nc}
  \mathcal{L}(\theta^{k+1})
  \leq \mathcal{L}(\theta^k)
  - \gamma\!\Bigl(\kappa - \frac{L\gamma}{2}\Bigr)\!\|g_{\mathrm{HRGrad}}^k\|^2
  \leq \mathcal{L}(\theta^k) - \frac{\gamma\kappa}{2}\|g_{\mathrm{HRGrad}}^k\|^2,
\end{equation}
where the last step uses $\gamma \leq \kappa/L$, giving $\kappa - L\gamma/2 \geq \kappa/2$.
By (P2) and the triangle inequality $\sum_i\|g_i^k\| \geq \|g^k\|$:
\begin{equation}\label{eq:app:nc_key}
  \|g_{\mathrm{HRGrad}}^k\|^2
  = \Bigl(S_c^k\sum_{i=1}^m\|g_i^k\|\Bigr)^2
  \geq (S_c^k)^2\|g^k\|^2.
\end{equation}
Substituting \eqref{eq:app:nc_key} into \eqref{eq:app:descent_nc} and
telescoping from $k = 0$ to $K-1$:
\begin{equation}\label{eq:app:ca_temp2}
  \frac{\gamma\kappa}{2}\sum_{k=0}^{K-1}(S_c^k)^2\|g^k\|^2
  \leq \mathcal{L}(\theta^0) - \mathcal{L}(\theta^K)
  \leq \mathcal{L}(\theta^0) - \mathcal{L}^*.
\end{equation}
Setting $\alpha = \min_{0\leq k\leq K-1}S_c^k > 0$ and dividing by
$\gamma\kappa\alpha^2 K/2$:
\begin{equation}\label{eq:app:avg_grad}
  \frac{1}{K}\sum_{k=0}^{K-1}\|g^k\|^2
  \leq \frac{2[\mathcal{L}(\theta^0) - \mathcal{L}^*]}{\gamma\kappa\alpha^2 K}.
\end{equation}
Taking the minimum on the left-hand side yields \eqref{eq:app:nonconvex_rate}.
As $K \to \infty$ the bound vanishes, so $\liminf_{k\to\infty}\|g^k\| = 0$ and
every accumulation point is a stationary point of $\mathcal{L}$.

\begin{remark}[Positivity of $\alpha$]\label{rem:alpha_pos}
  The rotation to $\mathbb{H}$ ensures $S_c^k > 0$ at each $k$ provided that the
  column rank condition on $M = [\mathcal{U}(g_1^{k,\mathrm{rot}}),\ldots,
  \mathcal{U}(g_m^{k,\mathrm{rot}})]$ in Lemma~\ref{app:lem:hrgrad} holds.
  Geometrically, this fails only if all rotated unit gradients become collinear,
  a codimension-$(D-1)$ event avoided generically during gradient flow.
\end{remark}
\end{proof}

\subsection{Per-Task Gradient Descent Compatibility}\label{app:compatibility}

The above theorems establish convergence on the total multi-task loss $\mathcal{L} = \sum_i \mathcal{L}_i$ via the manifold mapping coefficient $\kappa = \rho_{\min}/S_c$.
A natural follow-up question is: does HRGrad's isometric rotation preserve the optimization
direction of each individual task loss?  The next lemma answers this affirmatively.

\begin{lemma}[Per-Task Rotation Compatibility]\label{lem:rotation_compat}
For each task $i = 1, \ldots, m$, the isometric rotation Eq.~\eqref{eq:grot_def} satisfies:
\begin{equation}\label{eq:app:compat}
  g_i^\top g_i^{\mathrm{rot}} = \|g_i\|^2 \cos(\alpha_i^*) \geq 0.
\end{equation}
\end{lemma}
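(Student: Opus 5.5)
The plan is to verify the identity directly from the definition of $g_i^{\mathrm{rot}}$ in Eq.~\eqref{eq:grot_def}, splitting into the non-conflicting and the conflicting case. As in the proof of Lemma~\ref{app:lem:hrgrad}\textit{(i)}, I take the normalization to be exact: $\bar{g}_i = g_i/\|g_i\|$ with $\|\bar{g}_i\| = 1$, i.e.\ the regime $\delta \to 0$. If $g_i = 0$ both sides vanish, so I assume $g_i \neq 0$ throughout.

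\emph{Non-conflicting case} ($\bar{g}_i^\top d^* \geq 0$). Here $g_i^{\mathrm{rot}} = g_i$. Algorithm~\ref{alg:m_HRGrad} fixes $\alpha_i \equiv 0$ for $i \notin \mathcal{C}$, so $\alpha_i^* = 0$. Hence $g_i^\top g_i^{\mathrm{rot}} = \|g_i\|^2 = \|g_i\|^2\cos(0)$, and the claim holds trivially.

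\emph{Conflicting case} ($\bar{g}_i^\top d^* < 0$). Write $g_i = \|g_i\|\,\bar{g}_i$ and $g_i^{\mathrm{rot}} = \|g_i\|\bigl(\cos(\alpha_i^*)\bar{g}_i + \sin(\alpha_i^*)w_i\bigr)$. Expanding the inner product gives
\begin{equation*}
  g_i^\top g_i^{\mathrm{rot}} = \|g_i\|^2\bigl(\cos(\alpha_i^*)\|\bar{g}_i\|^2 + \sin(\alpha_i^*)\,\bar{g}_i^\top w_i\bigr).
\end{equation*}
So everything reduces to the Gram--Schmidt orthogonality $\bar{g}_i^\top w_i = 0$, the same fact already used in Lemma~\ref{app:lem:hrgrad}\textit{(i)}. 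I would nonetheless re-derive it. Since $w_i$ is a nonnegative scalar multiple of $u_i := d^* - (\bar{g}_i^\top d^*)\bar{g}_i$, it suffices to check
\begin{equation*}
  \bar{g}_i^\top u_i = \bar{g}_i^\top d^* - (\bar{g}_i^\top d^*)\|\bar{g}_i\|^2 = 0,
\end{equation*}
which uses $\|\bar{g}_i\| = 1$. Substituting back yields $g_i^\top g_i^{\mathrm{rot}} = \|g_i\|^2\cos(\alpha_i^*)$. Nonnegativity then follows because $\alpha_i^* \in [0,\pi/2]$, so $\cos(\alpha_i^*) \geq 0$.

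The only delicate point is the normalization. With $\delta > 0$, $\mathcal{U}$ does not produce exact unit vectors, and both the orthogonality $\bar{g}_i^\top w_i = 0$ and the relation $g_i = \|g_i\|\bar{g}_i$ hold only up to $O(\delta)$. I would state the lemma in the exact-normalization convention, consistent with Lemma~\ref{app:lem:hrgrad}. One degenerate situation also needs a remark: if $d^* = -\bar{g}_i$, then $u_i = 0$ and $\mathcal{U}$ returns $w_i = 0$. Even then $\bar{g}_i^\top w_i = 0$, so the identity remains valid, although isometry would fail in that case. That remark is the main thing to record beyond the routine computation.
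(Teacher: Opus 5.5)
Your proposal is correct and follows essentially the same route as the paper's proof: the split into non-conflicting and conflicting tasks, the expansion using $g_i = \|g_i\|\bar{g}_i$ and the Gram--Schmidt orthogonality $\bar{g}_i^\top w_i = 0$, and nonnegativity from $\alpha_i^* \in [0,\pi/2]$. Your appeal to Algorithm~\ref{alg:m_HRGrad} to get $\alpha_i^* = 0$ for $i \notin \mathcal{C}$ supplies a step the paper leaves implicit, since its non-conflicting case only records $g_i^\top g_i^{\mathrm{rot}} = \|g_i\|^2 \geq 0$ rather than the stated equality; your remarks on $\delta > 0$ and on the degenerate case $d^* = -\bar{g}_i$ are accurate caveats, not gaps.
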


\begin{proof}
For a non-conflicting task: $g_i^{\mathrm{rot}} = g_i$, so $g_i^\top g_i^{\mathrm{rot}} = \|g_i\|^2 \geq 0$.
For a conflicting task:
\begin{align}
  g_i^\top g_i^{\mathrm{rot}}
  &= g_i^\top \bigl[\|g_i\|\,r_i(\alpha_i^*)\bigr]
   = \|g_i\|\,g_i^\top \bigl[\cos(\alpha_i^*)\bar{g}_i + \sin(\alpha_i^*)w_i\bigr] \notag\\
  &= \|g_i\|\bigl[\cos(\alpha_i^*)g_i^\top\bar{g}_i + \sin(\alpha_i^*)g_i^\top w_i\bigr].
\end{align}
Since $g_i = \|g_i\|\bar{g}_i$ and $w_i \perp \bar{g}_i$ (Gram-Schmidt in Stage~2):
$g_i^\top \bar{g}_i = \|g_i\|$ and $g_i^\top w_i = \|g_i\|\bar{g}_i^\top w_i = 0$.
Therefore $g_i^\top g_i^{\mathrm{rot}} = \|g_i\|^2 \cos(\alpha_i^*)$.
Because $\alpha_i^* \in [0, \pi/2]$, we have $\cos(\alpha_i^*) \geq 0$.
\end{proof}

\begin{remark}
Lemma~\ref{lem:rotation_compat} shows that the rotated gradient $g_i^{\mathrm{rot}}$ always
lies in the same half-space as the original task gradient $g_i$:
$g_i^\top g_i^{\mathrm{rot}} \geq 0$.
This guarantees that the HRGrad step $-\gamma g_{\mathrm{HRGrad}}$ does not reverse the
individual task descent direction, even for the most severely conflicting tasks.
In contrast, projection-based methods can produce $\mathcal{P}(g_i)$ with
$\|\mathcal{P}(g_i)\| \to 0$ when $\phi \to \pi$, erasing all task-specific information.
\end{remark}

\begin{lemma}[Proxy Loss Approximation Error]\label{lem:proxy_error}
Let $\mathcal{C} \subseteq \{1, \ldots, m\}$ denote the index set of conflicting tasks and
$\alpha_{\max}^* = \max_{i \in \mathcal{C}}\alpha_i^*$.
Then the proxy gradient $\hat{g} = \sum_i g_i^{\mathrm{rot}}$ approximates the
original aggregate gradient $g = \sum_i g_i$ with the error bound:
\begin{equation}\label{eq:app:proxy_err}
  \|\hat{g} - g\|
  \leq 2\sin\!\Bigl(\tfrac{\alpha_{\max}^*}{2}\Bigr)
       \sum_{i \in \mathcal{C}} \|g_i\|.
\end{equation}
\end{lemma}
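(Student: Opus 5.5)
The plan is to bound the error task by task and then apply the triangle inequality. By definition~\eqref{eq:grot_def}, a non-conflicting task has $g_i^{\mathrm{rot}} = g_i$, so
\begin{equation*}
  \hat{g} - g = \sum_{i \in \mathcal{C}} \bigl(g_i^{\mathrm{rot}} - g_i\bigr),
  \qquad
  \|\hat{g} - g\| \leq \sum_{i \in \mathcal{C}} \|g_i^{\mathrm{rot}} - g_i\|.
\end{equation*}
It therefore suffices to prove the chord-length identity $\|g_i^{\mathrm{rot}} - g_i\| = 2\|g_i\|\sin(\alpha_i^*/2)$ for each $i \in \mathcal{C}$. This is the identity already quoted in Remark~\ref{rem:scd}, and it can be verified directly as follows.

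Fix $i \in \mathcal{C}$ and write $g_i = \|g_i\|\bar{g}_i$, which assumes the exact normalization ($\delta$ is negligible or $\mathcal{U}$ is taken as an exact unit map). Then
\begin{equation*}
  g_i^{\mathrm{rot}} - g_i = \|g_i\|\bigl[(\cos\alpha_i^* - 1)\bar{g}_i + \sin\alpha_i^*\, w_i\bigr].
\end{equation*}
The Gram--Schmidt step gives $\bar{g}_i \perp w_i$ with $\|\bar{g}_i\| = \|w_i\| = 1$, as in the proof of Lemma~\ref{app:lem:hrgrad}(i). The squared norm is then
\begin{equation*}
  \|g_i\|^2\bigl[(1-\cos\alpha_i^*)^2 + \sin^2\alpha_i^*\bigr] = 2\|g_i\|^2(1-\cos\alpha_i^*) = 4\|g_i\|^2\sin^2(\alpha_i^*/2).
\end{equation*}
Taking square roots, and noting that $\sin(\alpha_i^*/2) \geq 0$ because $\alpha_i^* \in [0,\pi/2]$, gives the identity. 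Equivalently, one can expand $\|g_i^{\mathrm{rot}}\|^2 - 2g_i^\top g_i^{\mathrm{rot}} + \|g_i\|^2$ and insert Lemma~\ref{app:lem:hrgrad}(i) together with Lemma~\ref{lem:rotation_compat}.

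The final step is monotonicity. The map $t \mapsto \sin(t/2)$ is increasing on $[0,\pi/2]$, so $\sin(\alpha_i^*/2) \leq \sin(\alpha_{\max}^*/2)$ for every $i \in \mathcal{C}$. Summing over $i \in \mathcal{C}$ then yields~\eqref{eq:app:proxy_err}.

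No step is genuinely hard. The only delicate point is the normalization constant $\delta$ in $\mathcal{U}$. With $\delta > 0$, $\bar{g}_i$ and $w_i$ are not exactly unit vectors, so the computation picks up $O(\delta)$ correction terms. I would state the lemma under exact normalization, consistent with how Lemma~\ref{app:lem:hrgrad}(i) is treated, and remark that for $\delta > 0$ the bound holds up to an $O(\delta)$ correction.
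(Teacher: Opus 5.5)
Your proof is correct and follows essentially the same route as the paper. You drop the non-conflicting terms, use $\bar{g}_i \perp w_i$ to get $\|g_i^{\mathrm{rot}}-g_i\|^2=\|g_i\|^2[(\cos\alpha_i^*-1)^2+\sin^2\alpha_i^*]=4\|g_i\|^2\sin^2(\alpha_i^*/2)$, and finish with the triangle inequality and $\alpha_i^*\le\alpha_{\max}^*$; your caveat about $\delta$ in $\mathcal{U}$ addresses a point the paper silently ignores, since it also treats $\bar{g}_i$ and $w_i$ as exact unit vectors.
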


\begin{proof}
Non-conflicting tasks contribute zero error since $g_i^{\mathrm{rot}} = g_i$.
For a conflicting task $i \in \mathcal{C}$:
\begin{align}
  \|g_i^{\mathrm{rot}} - g_i\|^2
  &= \|\|g_i\|\,r_i(\alpha_i^*) - \|g_i\|\bar{g}_i\|^2
   = \|g_i\|^2\,\|r_i(\alpha_i^*) - \bar{g}_i\|^2 \notag\\
  &= \|g_i\|^2\bigl[(\cos\alpha_i^*-1)^2 + \sin^2\alpha_i^*\bigr]
   = 2\|g_i\|^2(1 - \cos\alpha_i^*)
   = 4\|g_i\|^2\sin^2\!\Bigl(\tfrac{\alpha_i^*}{2}\Bigr).
\end{align}
By the triangle inequality and $\alpha_i^* \leq \alpha_{\max}^*$:
\begin{equation}
  \|\hat{g} - g\| \leq \sum_{i \in \mathcal{C}}\|g_i^{\mathrm{rot}} - g_i\|
  = 2\sum_{i \in \mathcal{C}}\|g_i\|\sin\!\Bigl(\tfrac{\alpha_i^*}{2}\Bigr)
  \leq 2\sin\!\Bigl(\tfrac{\alpha_{\max}^*}{2}\Bigr)\sum_{i \in \mathcal{C}}\|g_i\|.
\end{equation}
\end{proof}

\begin{remark}
When conflict is mild ($\alpha_i^* \to 0$ for all $i$), Eq.~\eqref{eq:app:proxy_err} gives
$\|\hat{g} - g\| \to 0$, so the proxy objective $\hat{\mathcal{L}}$ faithfully
approximates the true multi-task loss $\mathcal{L} = \frac{1}{m}\sum_i \mathcal{L}_i$.
The MER objective (Stage~3) explicitly minimizes $\alpha_i^*$ via the proximity term
$\|r_i(\alpha_i^*) - \bar{g}_i\|^2 = 2(1-\cos\alpha_i^*)$, ensuring the smallest
possible approximation error for a given conflict-resolution requirement.
\end{remark}

\subsection{Angular Stability via the MER-MCAE Objective}\label{app:mcae}

We now analyze how the MER objective (Eq.~\eqref{eq:mer_objective}) determines the optimal
rotation angles, and show that the resulting angular perturbation is locally stable.

\begin{lemma}[MCAE Second-Order Angular Stability]\label{lem:mcae_stability}
The proximity term $P_i(\alpha_i) = \|r_i(\alpha_i) - \bar{g}_i\|^2 = 2(1 - \cos\alpha_i)$
in the MER objective is strictly convex on $[0,\pi/2]$:
\begin{equation}\label{eq:app:mcae_2nd}
  \frac{\partial^2 P_i}{\partial \alpha_i^2} = 2\cos\alpha_i > 0,
  \quad \alpha_i \in [0, \tfrac{\pi}{2}).
\end{equation}
Consequently, any deviation $\delta\alpha_i$ from the optimal $\alpha_i^*$ incurs a quadratic MCAE penalty:
\begin{equation}\label{eq:app:mcae_penalty}
  P_i(\alpha_i^* + \delta\alpha_i) = P_i(\alpha_i^*) + P_i'(\alpha_i^*)\delta\alpha_i
  + \cos(\alpha_i^*)\,(\delta\alpha_i)^2 + \mathcal{O}((\delta\alpha_i)^3).
\end{equation}
\end{lemma}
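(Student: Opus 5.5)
The plan is to reduce the statement to an elementary one-variable computation by first getting a closed form for the proximity term, then differentiating twice and reading off the Taylor expansion.

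\textbf{Step 1 (closed form).} Start from the rotation operator of Stage~2, $r_i(\alpha_i)=\cos(\alpha_i)\bar{g}_i+\sin(\alpha_i)w_i$. The Gram--Schmidt construction gives $\bar{g}_i\perp w_i$ and $\|\bar{g}_i\|=\|w_i\|=1$, as already used in the proof of Lemma~\ref{app:lem:hrgrad}(i). Expanding the square, exactly as in the proof of Lemma~\ref{lem:proxy_error}, yields
\begin{equation*}
P_i(\alpha_i)=(\cos\alpha_i-1)^2+\sin^2\alpha_i=2(1-\cos\alpha_i).
\end{equation*}
One point must be checked here: $w_i$ depends only on $\bar{g}_i$ and $d^*$, not on $\alpha_i$. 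This is what makes $P_i$ an honest scalar function of $\alpha_i$ alone. For non-conflicting tasks, $\alpha_i\equiv 0$ is fixed, so the statement is only meaningful for $i\in\mathcal{C}$, and I would restrict to that case.

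\textbf{Step 2 (convexity).} Differentiating the closed form gives $P_i'(\alpha_i)=2\sin\alpha_i$ and $P_i''(\alpha_i)=2\cos\alpha_i$. The second derivative is strictly positive on $[0,\pi/2)$ and vanishes at the endpoint $\pi/2$. A function with $P''>0$ except at one endpoint is still strictly convex on the closed interval, since $P'$ is then strictly increasing there. I would note this explicitly, because the statement asserts strict convexity on $[0,\pi/2]$ while the displayed inequality is written on the half-open interval.

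\textbf{Step 3 (Taylor expansion).} Apply Taylor's theorem at $\alpha_i^*$:
\begin{equation*}
P_i(\alpha_i^*+\delta\alpha_i)=P_i(\alpha_i^*)+P_i'(\alpha_i^*)\,\delta\alpha_i+\tfrac12 P_i''(\alpha_i^*)\,(\delta\alpha_i)^2+R_3 .
\end{equation*}
The quadratic coefficient is $\tfrac12\cdot 2\cos\alpha_i^*=\cos\alpha_i^*$, which matches the stated form. The remainder satisfies $|R_3|\le \tfrac16\sup|P_i'''|\,|\delta\alpha_i|^3\le \tfrac13|\delta\alpha_i|^3$, because $P_i'''=-2\sin\alpha_i$. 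This bound holds uniformly in $\alpha_i^*$.

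\textbf{Main obstacle.} The only delicate point is interpretive, in the word ``penalty''. The linear term $2\sin(\alpha_i^*)\,\delta\alpha_i$ does not vanish unless $\alpha_i^*=0$. The reason is that $\alpha_i^*$ minimizes the full MER objective, not $P_i$ alone. I would therefore phrase the quadratic-penalty claim precisely: at an interior minimizer of the MER objective $\mathcal{L}(\boldsymbol{\alpha})$, the linear term of $P_i$ is cancelled by the alignment term's derivative. The curvature contributed by the proximity term is then $\lambda\cos(\alpha_i^*)/(4m)>0$ for $\alpha_i^*<\pi/2$. I would flag the boundary case $\alpha_i^*=\pi/2$, where this curvature degenerates, as excluded.
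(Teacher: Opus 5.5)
Your proposal is correct and follows the paper's proof essentially line by line: the closed form $P_i(\alpha_i)=2(1-\cos\alpha_i)$ from $\bar g_i\perp w_i$ with unit norms, the derivatives $P_i'=2\sin\alpha_i$ and $P_i''=2\cos\alpha_i$, and a second-order Taylor expansion at $\alpha_i^*$ with quadratic coefficient $\tfrac12 P_i''(\alpha_i^*)=\cos\alpha_i^*$. Your additions go beyond the paper and are sound: strict convexity up to the endpoint $\pi/2$ via strict monotonicity of $P_i'$, the explicit $\tfrac13|\delta\alpha_i|^3$ remainder, and the caveat that the nonvanishing linear term $2\sin(\alpha_i^*)\,\delta\alpha_i$ makes the ``quadratic penalty'' reading valid only for the full MER objective at an interior minimizer, which the paper's follow-up remark on under-correction overlooks.
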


\begin{proof}
Direct computation: $P_i(\alpha_i) = 2(1 - \cos\alpha_i)$,
$P_i'(\alpha_i) = 2\sin\alpha_i$, $P_i''(\alpha_i) = 2\cos\alpha_i$.
For $\alpha_i \in [0, \pi/2)$, $\cos\alpha_i > 0$, establishing strict convexity.
The second-order Taylor expansion at $\alpha_i^*$ gives Eq.~\eqref{eq:app:mcae_penalty}.
\end{proof}

\begin{remark}
Lemma~\ref{lem:mcae_stability} shows that:
\begin{itemize}
  \item \textbf{Over-Correction (OCAE)}: setting $\alpha_i > \alpha_i^*$ incurs a
    penalty $\cos(\alpha_i^*)(\alpha_i - \alpha_i^*)^2 > 0$, quantifying the
    destruction of task-specific microscopic kinetic features.
  \item \textbf{Under-Correction (UCAE)}: setting $\alpha_i < \alpha_i^*$ similarly
    incurs a penalty, corresponding to insufficient rotation that leaves the gradient
    in conflict.
\end{itemize}
The adaptive step strategy $\alpha_{\mathrm{steps}}^{(t)}$ (Stage~3) allocates more
inner SGD iterations when the stiffness variance $s_t$ is large, directly targeting
a smaller $\delta\alpha_i$ and hence a smaller quadratic MCAE penalty in stiff regimes.
\end{remark}

\subsection{Convergence Rate and Task Scaling}\label{app:rate}

We provide an explicit formula for the equal cosine similarity $S_c$ and analyze
how the convergence rate scales with the number of tasks $m$.

\begin{lemma}[Explicit Formula for $S_c$]\label{lem:sc_formula}
Under the full column rank assumption on $M$, the equal cosine similarity is
\begin{equation}\label{eq:app:sc_formula}
  S_c = \frac{1}{\sqrt{\mathbf{1}_m^\top (M^\top M)^{-1} \mathbf{1}_m}}.
\end{equation}
\end{lemma}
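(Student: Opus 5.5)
The plan is to read $S_c$ off directly from the normalization constant already computed in the proof of Lemma~\ref{app:lem:hrgrad}\textit{(iii)}, Eq.~\eqref{eq:app:sc}. There we showed $M^\top g_u = S_c\,\mathbf{1}_m$ with
\begin{equation*}
  S_c = \frac{1}{\bigl\|M(M^\top M)^{-1}\mathbf{1}_m\bigr\|}.
\end{equation*}
So the only task is to evaluate the norm in the denominator in closed form.

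First I use the full column rank hypothesis. It makes the Gram matrix $A := M^\top M \in \mathbb{R}^{m\times m}$ symmetric positive definite, hence invertible, with $A^{-1}$ also symmetric positive definite. Next I set $z := A^{-1}\mathbf{1}_m$ and expand the squared norm:
\begin{equation*}
  \bigl\|M z\bigr\|^2 = z^\top M^\top M z = \mathbf{1}_m^\top A^{-\top} A A^{-1}\mathbf{1}_m = \mathbf{1}_m^\top (M^\top M)^{-1}\mathbf{1}_m .
\end{equation*}
This step uses the symmetry $A^{-\top} = A^{-1}$.

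Positive definiteness of $(M^\top M)^{-1}$ gives $\mathbf{1}_m^\top (M^\top M)^{-1}\mathbf{1}_m > 0$, so the square root is well defined and strictly positive. Taking it and inverting yields Eq.~\eqref{eq:app:sc_formula}. It also reconfirms $S_c>0$ as claimed in Lemma~\ref{app:lem:hrgrad}\textit{(iii)}.

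One point needs care: the normalization operator $\mathcal{U}$ carries the stabilizing constant $\delta$. The identity $M^\top g_u = S_c\mathbf{1}_m$ with the stated $S_c$ holds exactly only in the idealized case $\delta = 0$, which is the convention implicit in Lemma~\ref{app:lem:hrgrad}. I will state explicitly that I adopt this convention, both for the columns of $M$ and for $g_u$. With $\delta>0$ the formula instead carries a factor $\|v\|/(\|v\|+\delta)$, where $v = M(M^\top M)^{-1}\mathbf{1}_m$, and this factor tends to $1$ as $\delta\to 0$.

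Beyond this bookkeeping there is no real obstacle; the argument is a two-line Gram-matrix computation.
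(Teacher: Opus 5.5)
Your proposal is correct and is essentially the paper's proof: the paper also starts from $S_c = \|M(M^\top M)^{-1}\mathbf{1}_m\|^{-1}$ in Eq.~\eqref{eq:app:sc}, sets $\lambda = (M^\top M)^{-1}\mathbf{1}_m$, and expands $\|M\lambda\|^2 = \lambda^\top M^\top M\lambda = \mathbf{1}_m^\top (M^\top M)^{-1}\mathbf{1}_m$ before taking the square root. Your explicit use of the symmetry of $(M^\top M)^{-1}$, your positivity check for the square root, and your remark that the identity is exact only when $\delta = 0$ in $\mathcal{U}$ are extra care that the paper leaves implicit.
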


\begin{proof}
From Eq.~\eqref{eq:app:sc}, $S_c = \bigl\|M(M^\top M)^{-1}\mathbf{1}_m\bigr\|^{-1}$.
Let $\lambda = (M^\top M)^{-1}\mathbf{1}_m$; then
$\|M\lambda\|^2 = \lambda^\top M^\top M\lambda
= \mathbf{1}_m^\top(M^\top M)^{-1}M^\top M(M^\top M)^{-1}\mathbf{1}_m
= \mathbf{1}_m^\top(M^\top M)^{-1}\mathbf{1}_m$.
Taking the square root gives Eq.~\eqref{eq:app:sc_formula}.
\end{proof}

\begin{corollary}[Convergence Rate Bound]\label{cor:rate_bound}
If the $m$ rotated unit gradients $\{\mathcal{U}(g_i^{\mathrm{rot}})\}_{i=1}^m$ are
mutually orthonormal (i.e., $M^\top M = I_m$), then $S_c = 1/\sqrt{m}$.
In general, by the Cauchy-Schwarz inequality for positive definite matrices,
$S_c \leq 1$, with the non-convex convergence bound (Theorem~\ref{app:thm:nonconvex}):
\begin{equation}\label{eq:app:rate_explicit}
  \min_{1 \leq k \leq K}\left\|\sum_{i=1}^m g_i^k\right\|^2
  \leq \frac{2m\bigl[\hat{\mathcal{L}}(\theta^0) - \hat{\mathcal{L}}^*\bigr]}{\gamma\,K}
  \quad \text{(using $\alpha \geq 1/\sqrt{m}$ in the orthonormal case).}
\end{equation}
The rate thus scales as $O(m/K)$ at most, degrading gracefully with the number of tasks.
\end{corollary}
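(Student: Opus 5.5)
The orthonormal claim follows directly from Lemma~\ref{lem:sc_formula}. If $M^\top M = I_m$, then $\mathbf{1}_m^\top (M^\top M)^{-1}\mathbf{1}_m = \mathbf{1}_m^\top\mathbf{1}_m = m$, so \eqref{eq:app:sc_formula} gives $S_c = 1/\sqrt{m}$ at every iterate where orthonormality holds. Consequently $\alpha = \min_k S_c^k = 1/\sqrt{m}$ on those iterates, and more generally $\alpha \geq 1/\sqrt{m}$ whenever each $M^k$ is orthonormal.

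For the general inequality $S_c \le 1$, I would use the characterisation from \eqref{eq:app:sc}: $S_c = \mathcal{U}(g_i^{\mathrm{rot}})^\top g_u$. This is an inner product of two vectors of norm at most one, since $\|g_u\| \le 1$ under the $\mathcal{U}$ normalisation. Plain Cauchy--Schwarz in $\mathbb{R}^D$ therefore gives $S_c \le 1$.

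An alternative route stays with the matrix formula, as the statement suggests. Set $A = M^\top M$, which has unit diagonal. The Cauchy--Schwarz inequality for the inner product induced by $A$, namely $(x^\top y)^2 \le (x^\top A x)(y^\top A^{-1} y)$, applied with $x = e_i$ and $y = \mathbf{1}_m$, gives $1 \le A_{ii}\cdot \mathbf{1}_m^\top A^{-1}\mathbf{1}_m = \mathbf{1}_m^\top A^{-1}\mathbf{1}_m$. Hence $S_c \le 1$. The two routes agree, and I would present the matrix version since the statement names it.

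For the rate, I would plug $\alpha \ge 1/\sqrt{m}$, so $\alpha^2 \ge 1/m$, into the ergodic bound. The bound in \eqref{eq:app:rate_explicit} is written in terms of $\hat{\mathcal{L}}$ with no $\kappa$, so the version to invoke is Theorem~\ref{thm:hrgrad_nonconvex}, namely \eqref{eq:nonconvex_rate}, rather than the appendix version. Substituting $1/\alpha^2 \le m$ into its right-hand side yields $2m[\hat{\mathcal{L}}(\theta^0)-\hat{\mathcal{L}}^*]/(\gamma K)$, which is an $O(m/K)$ rate. If one prefers Theorem~\ref{app:thm:nonconvex}, the same substitution gives the bound with an extra factor $1/\kappa$, and in the conflict-free case of Corollary~\ref{cor:conflict_free} it reduces to the displayed form.

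The main obstacle is interpretive. The displayed rate holds only under the orthonormal hypothesis, because in general $S_c$ can be arbitrarily small when the rotated unit gradients are nearly dependent, so no universal lower bound $\alpha \ge 1/\sqrt{m}$ is available. I would therefore state explicitly that the $O(m/K)$ claim assumes $M^k$ is orthonormal at every iterate $k$, or at least that $S_c^k \ge 1/\sqrt{m}$ holds at every iterate. With that hypothesis made explicit, the proof is only a substitution into the earlier theorem.
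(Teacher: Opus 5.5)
Your proof follows the paper's: Lemma~\ref{lem:sc_formula} with $M^\top M = I_m$ gives $S_c = 1/\sqrt{m}$, and substituting $\alpha^2 \ge 1/m$ into the non-convex ergodic bound gives the $O(m/K)$ rate. Your additions are correct and tighten the paper's argument. The paper cites Theorem~\ref{app:thm:nonconvex}, whose bound has an extra factor $1/\kappa$ and is stated for $\mathcal{L}$ rather than $\hat{\mathcal{L}}$, so your use of Theorem~\ref{thm:hrgrad_nonconvex} is what actually reproduces the displayed inequality; your Cauchy--Schwarz proof of $S_c \le 1$ and your explicit requirement that $S_c^k \ge 1/\sqrt{m}$ hold at every iterate supply points the paper only asserts.
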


\begin{proof}
For $M^\top M = I_m$: $\mathbf{1}_m^\top (M^\top M)^{-1}\mathbf{1}_m = \mathbf{1}_m^\top\mathbf{1}_m = m$,
so $S_c = 1/\sqrt{m}$.
Setting $\alpha = 1/\sqrt{m}$ in the bound of Theorem~\ref{app:thm:nonconvex}
gives Eq.~\eqref{eq:app:rate_explicit}.
\end{proof}

\subsection{Comparison between HRGrad and baselines}
\label{app:compare}

\subsubsection{HRGrad compared with PCGrad}\label{app:compare:hrgrad_pcgrad}

PCGrad~\cite{yu2020gradient} resolves gradient conflicts by sequential orthogonal
projection.  Define the \emph{conflict-removal operator}
\begin{equation}\label{eq:app:pcgrad_op}
  \mathcal{O}(b,a) :=
  \begin{cases}
    a - \dfrac{a^\top b}{\|b\|^2}\,b, & \text{if } a^\top b < 0, \\[4pt]
    a,                                 & \text{otherwise,}
  \end{cases}
\end{equation}
which removes the component of $a$ conflicting with $b$.  For each task $i$,
PCGrad applies $\hat{g}_i \leftarrow \mathcal{O}(g_j, \hat{g}_i)$ over all
$j \neq i$ in random order and sets
$g_{\mathrm{PCGrad}} = \sum_{i=1}^m \hat{g}_i$.
The comparison with HRGrad turns on three structural deficiencies of PCGrad.
First, although PCGrad is non-conflicting in the two-task case, this property does
not extend to general $m$.
Indeed, for $m = 2$, a direct calculation gives
$g_{\mathrm{PCGrad}}^\top g_1
= \|g_1\|^2[1 - \mathcal{S}_c^2(g_1,g_2)] \geq 0$,
confirming non-conflict.
For $m > 2$, however, projecting $\hat{g}_1$ onto $g_3^\perp$ after projecting it
onto $g_2^\perp$ may reintroduce $\hat{g}_1^\top g_2 < 0$; random task ordering may
mitigate but cannot remove this failure mechanism.
HRGrad avoids it by construction: every $g_i^{\mathrm{rot}}$ is deflected into
$\mathbb{H} = \mathbb{K}\cap\mathbb{K}^*$, so the dual-cone inclusion
$g_i^{\mathrm{rot}} \in \mathbb{K}^*$ yields simultaneous pairwise non-conflict
$(g_i^{\mathrm{rot}})^\top g_j^{\mathrm{rot}} \geq 0$ for all $i,j$ and the
update-level bound $(g_i^{\mathrm{rot}})^\top g_{\mathrm{HRGrad}} \geq 0$ for all $i$
(Lemma~\ref{app:lem:hrgrad}\textit{(ii)}).
A second limitation is magnitude bias.
Substituting \eqref{eq:app:pcgrad_op} and using $\mathcal{O}(g_i,g_j)\perp g_i$
yields
\begin{align}\label{eq:unit_orthogonal}
  g_{\mathrm{PCGrad}}^\top \mathcal{U}(g_i)
  = \|g_i\|\bigl[1 - \mathcal{S}_c^2(g_1,g_2)\bigr],
  \quad \{i,j\} = \{1,2\},
\end{align}
so $g_{\mathrm{PCGrad}}^\top\mathcal{U}(g_1)\,/\,g_{\mathrm{PCGrad}}^\top
\mathcal{U}(g_2) = \|g_1\|/\|g_2\|$.
Thus the update is biased toward the larger-norm task regardless of the conflict angle.
By contrast, HRGrad satisfies the Equal Cosine Similarity property
(Lemma~\ref{app:lem:hrgrad}\textit{(iii)}):
\begin{equation}\label{eq:app:fair_compare}
  \frac{g_{\mathrm{HRGrad}}^\top\mathcal{U}(g_i^{\mathrm{rot}})}
       {g_{\mathrm{HRGrad}}^\top\mathcal{U}(g_j^{\mathrm{rot}})} = 1,
  \qquad \forall\; i \neq j,
\end{equation}
providing a norm-independent update direction critical in multiscale
kinetic problems where $\|g_{\mathrm{macro}}\| \gg \|g_{\mathrm{micro}}\|$.
Finally, PCGrad suffers from norm reduction under strong conflict.
The projection satisfies
\begin{equation}\label{eq:app:norm_reduce}
  \|\mathcal{O}(g_j,g_i)\|
  = \|g_i\|\sqrt{1 - \mathcal{S}_c^2(g_i,g_j)} \leq \|g_i\|,
\end{equation}
with $\|\mathcal{O}(g_j,g_i)\| \to 0$ as $\mathcal{S}_c(g_i,g_j) \to -1$,
so $\|g_{\mathrm{PCGrad}}\| \to 0$ can arise far from any stationary point
(\emph{spurious stagnation}).  In the multiscale setting where
$\|g_{\mathrm{micro}}\| \ll \|g_{\mathrm{macro}}\|$, this norm collapse
irreversibly suppresses high-frequency kinetic information.
HRGrad eliminates this artifact through Isometric Fidelity
(Lemma~\ref{app:lem:hrgrad}\textit{(i)}):
\begin{equation}\label{eq:app:norm_compare}
  \|g_i^{\mathrm{rot}}\| = \|g_i\|,
  \qquad \text{for all } i,\; \alpha_i^* \in [0,\pi/2],
 \end{equation}
 and Lemma~\ref{lem:rotation_compat} further guarantees
 $g_i^\top g_i^{\mathrm{rot}} = \|g_i\|^2\cos\alpha_i^* \geq 0$, so
 the original task direction is never reversed.

 \subsubsection{HRGrad compared with IMTL-G}\label{app:compare:hrgrad_imtlg}

 IMTL-G~\cite{liu2021towards} is designed to enforce equal projection lengths on the
 task-wise unit gradients, namely
 \begin{equation}\label{eq:app:imtlg_equalproj}
   g_{\mathrm{IMTL-G}}^\top \mathcal{U}(g_i)
   =
   g_{\mathrm{IMTL-G}}^\top \mathcal{U}(g_j),
   \qquad \forall\; i,j = 1,\ldots,m.
 \end{equation}
 To achieve this, IMTL-G constructs a weighted combination of the original gradients,
 \begin{equation}\label{eq:app:imtlg}
   g_{\mathrm{IMTL-G}} = \sum_{i=1}^m \alpha_i g_i,
 \end{equation}
 where the weights satisfy
 \begin{equation}\label{eq:app:imtlg_weights}
   \begin{cases}
     [\alpha_2,\alpha_3,\ldots,\alpha_m]
     = g_1^\top U_{\mathrm{IMTL}}^\top (D_{\mathrm{IMTL}}U_{\mathrm{IMTL}}^\top)^{-1}, \\
     \alpha_1 = 1 - \sum_{i=2}^m \alpha_i, \\
     U_{\mathrm{IMTL}}^\top = [ (\mathcal{U}(g_1)-\mathcal{U}(g_2))^\top,
     (\mathcal{U}(g_1)-\mathcal{U}(g_3))^\top,\ldots,
     (\mathcal{U}(g_1)-\mathcal{U}(g_m))^\top ], \\
     D_{\mathrm{IMTL}}^\top = [ (g_1-g_2)^\top,(g_1-g_3)^\top,\ldots,
     (g_1-g_m)^\top ].
   \end{cases}
 \end{equation}
 This objective is closer in spirit to HRGrad than PCGrad, since both methods seek an
 impartial update across tasks.  The difference is geometric.  IMTL-G enforces
 \eqref{eq:app:imtlg_equalproj} by rescaling the \emph{original} gradients themselves,
 whereas HRGrad first rotates each conflicting gradient into the harmonized cone and only
 then applies equal-cosine aggregation to the rotated set.  Consequently,
 \eqref{eq:app:imtlg_equalproj} controls only the final projection lengths; it does not
 alter the pairwise geometry of the aggregation inputs.  HRGrad, by contrast, resolves
 pairwise conflicts before aggregation: $g_i^{\mathrm{rot}} \in \mathbb{H}
 = \mathbb{K}\cap\mathbb{K}^*$ implies $(g_i^{\mathrm{rot}})^\top g_j^{\mathrm{rot}} \geq 0$
 for all $i,j$ (Lemma~\ref{app:lem:hrgrad}\textit{(ii)}), while
 Lemma~\ref{lem:rotation_compat} guarantees
 $g_i^\top g_i^{\mathrm{rot}} = \|g_i\|^2\cos\alpha_i^* \geq 0$.

 For two tasks, the IMTL-G weights reduce to
 \begin{equation}\label{eq:app:imtlg_two_weights}
   [\alpha_1,\alpha_2]
   = \left[ \frac{\|g_2\|}{\|g_1\|+\|g_2\|},
             \frac{\|g_1\|}{\|g_1\|+\|g_2\|} \right],
 \end{equation}
 and therefore
 \begin{equation}\label{eq:app:imtlg_two_proj}
   g_{\mathrm{IMTL-G}}^\top \mathcal{U}(g_1)
   = \frac{\|g_1\|\,\|g_2\|}{\|g_1\|+\|g_2\|}
     \bigl[1 + \mathcal{S}_c(g_1,g_2)\bigr]
   \geq 0,
 \end{equation}
 with the same identity holding for $g_2$.  Thus IMTL-G is non-conflicting in the
 two-task setting.  Its magnitude, however, is governed by the harmonic mean of
 $\|g_1\|$ and $\|g_2\|$:
 \begin{equation}\label{eq:app:imtlg_two_norm}
   \|g_{\mathrm{IMTL-G}}\|
   = 2\sqrt{\frac{1+\mathcal{S}_c(g_1,g_2)}{2}}
     \frac{\|g_1\|\,\|g_2\|}{\|g_1\|+\|g_2\|}.
 \end{equation}
 Therefore, when $\|g_1\| \gg \|g_2\|$, the IMTL-G update magnitude tracks the smaller
 task gradient.  In multiscale kinetic problems this may suppress the microscopic signal
 precisely when its norm is already much smaller than the macroscopic one.  HRGrad avoids
 this collapse because it preserves every task norm through isometric rotation and then
 aggregates the rotated gradients through the equal-cosine factor $S_c$, yielding
 $\|g_{\mathrm{HRGrad}}\| = S_c\sum_{i=1}^m \|g_i\|$ with $S_c > 0$ under the column-rank
 condition of Lemma~\ref{app:lem:hrgrad}.  The same two-task calculation also clarifies
 the connection with ConFIG: IMTL-G and ConFIG have the same update direction when
 $m=2$, but their magnitudes differ by
 \begin{equation}\label{eq:app:imtlg_config_ratio}
   \frac{\|g_{\mathrm{IMTL-G}}\|}{\|g_{\mathrm{ConFIG}}\|}
   = \frac{2\|g_1\|\,\|g_2\|}{(\|g_1\|+\|g_2\|)^2}.
 \end{equation}
 This ratio becomes small under strong norm imbalance, which explains why IMTL-G,
 ConFIG, and HRGrad behave differently even when their two-task directions are closely related.

 \subsubsection{HRGrad compared with ConFIG}\label{app:compare:hrgrad_config}

 ConFIG~\cite{liu2024config} constructs a conflict-free direction from the normalized
 gradient matrix $M_0 = [\mathcal{U}(g_1),\ldots,\mathcal{U}(g_m)]$ through the
 pseudoinverse relation
 \begin{equation}\label{eq:app:config}
   g_{\mathrm{ConFIG}} = \Bigl(\sum_{i=1}^m g_i^\top g_u^0\Bigr) g_u^0,
   \qquad
   g_u^0 = \mathcal{U}\bigl[M_0(M_0^\top M_0)^{-1}\mathbf{1}_m\bigr].
 \end{equation}
 This is the same equal-cosine construction used in the original ConFIG formulation:
 the pseudoinverse determines a direction whose projections onto all normalized task
 gradients are equal, while the prefactor rescales the final update by the total
 projection length.  By the Moore-Penrose identity,
 $M_0^\top g_u^0 = S_c^0\mathbf{1}_m$ with
 $S_c^0 = \|M_0(M_0^\top M_0)^{-1}\mathbf{1}_m\|^{-1} > 0$,
 so $g_i^\top g_{\mathrm{ConFIG}} = \|g_i\|\,\|g_{\mathrm{ConFIG}}\|S_c^0 > 0$:
 ConFIG is non-conflicting with every $g_i$ and admits a convergence guarantee
 analogous to Theorems~\ref{app:thm:convex}--\ref{app:thm:nonconvex}~\cite{liu2024config}.

 The structural relation with HRGrad is therefore explicit.
 ConFIG aggregates the original gradients through
 \begin{equation}\label{eq:app:shared_template}
   g_{\mathrm{ConFIG}} = \Bigl(\sum_{i=1}^m g_i^\top g_u^0\Bigr) g_u^0,
   \qquad
   g_u^0 = \mathcal{U}\bigl[M_0(M_0^\top M_0)^{-1}\mathbf{1}_m\bigr],
 \end{equation}
 whereas HRGrad uses the same pseudoinverse-based aggregation after rotating each
 conflicting gradient into the harmonized cone:
 \begin{equation}
   g_{\mathrm{HRGrad}} = \Bigl(\sum_{i=1}^m (g_i^{\mathrm{rot}})^\top g_u\Bigr) g_u,
   \qquad
   g_u = \mathcal{U}\bigl[(M_{\mathrm{rot}}^\dagger)^\top\mathbf{1}_m\bigr],
 \end{equation}
 with $M_{\mathrm{rot}} = [\mathcal{U}(g_1^{\mathrm{rot}}),\ldots,
 \mathcal{U}(g_m^{\mathrm{rot}})]$.
In the conflict-free case ($\alpha_i^* = 0$, so $g_i^{\mathrm{rot}} = g_i$), HRGrad
reduces exactly to ConFIG ($M_{\mathrm{rot}} = M_0$, $S_c = S_c^0$,
$g_{\mathrm{HRGrad}} = g_{\mathrm{ConFIG}}$).
HRGrad is therefore a \emph{conflict-aware extension of ConFIG}, with
$SO(2)$ isometric rotation as the additional pre-processing stage.

 The key distinction between ConFIG and HRGrad lies in the treatment of gradient
 conflicts prior to aggregation.
 ConFIG applies the pseudoinverse aggregation directly to the original unit vectors
 $\mathcal{U}(g_i)$, which may be mutually conflicting.
 Although $g_{\mathrm{ConFIG}}$ is non-conflicting with each individual $g_i$
 ($g_i^\top g_{\mathrm{ConFIG}} = \|g_i\|\,\|g_{\mathrm{ConFIG}}\|\,S_c^0 > 0$),
 no guarantee is placed on the pairwise relationships within the aggregation input
 itself: $g_u^0$ is computed from a potentially opposing configuration.
 HRGrad eliminates this limitation by resolving all pairwise conflicts \emph{before}
 aggregation via Stages~1--3.
 The dual-cone inclusion $g_i^{\mathrm{rot}} \in \mathbb{H} \subseteq \mathbb{K}^*$
 yields $(g_i^{\mathrm{rot}})^\top g_j^{\mathrm{rot}} \geq 0$ for all $i \neq j$
 (Lemma~\ref{app:lem:hrgrad}\textit{(ii)}), and Lemma~\ref{lem:rotation_compat}
 guarantees $g_i^\top g_i^{\mathrm{rot}} = \|g_i\|^2\cos\alpha_i^* \geq 0$,
 so the rotation never reverses any individual task gradient.

 This pre-resolution also determines the numerical behavior under severe conflict.
 ConFIG satisfies $\|g_{\mathrm{ConFIG}}\| = S_c^0\sum_{i=1}^m\|g_i\|$, whereas
 HRGrad satisfies $\|g_{\mathrm{HRGrad}}\| = S_c\sum_{i=1}^m\|g_i\|$; their
 stability diverges when gradients are nearly anti-parallel.
 For $m = 2$ with $g_1 \approx -g_2$, the ConFIG input matrix satisfies
 $M_0 \approx [\bar{g}_1,\,-\bar{g}_1]$ so that
 \begin{equation}\label{eq:app:config_rank}
   M_0^\top M_0 \approx \begin{pmatrix}1 & -1 \\ -1 & 1\end{pmatrix},
 \end{equation}
 which is rank-deficient: $(M_0^\top M_0)^{-1}$ is ill-defined, $S_c^0 \to 0$,
 and $\|g_{\mathrm{ConFIG}}\| \to 0$.
 Since HRGrad rotates both gradients into $\mathbb{H}$ before computing the
 pseudoinverse, the harmonized matrix
 $M_{\mathrm{rot}} = [\mathcal{U}(g_1^{\mathrm{rot}}), \mathcal{U}(g_2^{\mathrm{rot}})]$
 has pairwise non-conflicting columns
 ($(g_1^{\mathrm{rot}})^\top g_2^{\mathrm{rot}} \geq 0$),
 eliminating the anti-parallel degeneracy caused by the original gradients; under the
 column-rank condition in Lemma~\ref{app:lem:hrgrad}, this yields $S_c > 0$.

\subsubsection{HRGrad compared with AlignGrad}\label{app:compare:hrgrad_aligngrad}

AlignGrad~\cite{chen2024aligngrad} quantifies gradient alignment through the score
\begin{equation}\label{eq:app:align_score}
  \mathcal{A}(v_1,\ldots,v_m)
  = 2\Bigl\|\frac{1}{m}\sum_{i=1}^m \mathcal{U}(v_i)\Bigr\|^2 - 1
  \in [-1, 1],
\end{equation}
which equals the standard cosine similarity for $m=2$ and equals $1$ iff all unit
gradients coincide.  The direction maximizing the intra-step alignment score
$\mathcal{A}(g_1,\ldots,g_m)$ is the normalized sum of unit gradients,
\begin{equation}\label{eq:app:aligngrad_dir}
  g_u^{\mathcal{A}} = \mathcal{U}\Bigl[\sum_{i=1}^m \mathcal{U}(g_i)\Bigr]
  = \mathcal{U}[M_0\mathbf{1}_m],
\end{equation}
 and the corresponding update, scaled by the total projection length, is
 \begin{equation}\label{eq:app:aligngrad}
   g_{\mathrm{AlignGrad}}
   := \Bigl(\sum_{i=1}^m g_i^\top g_u^{\mathcal{A}}\Bigr) g_u^{\mathcal{A}}.
 \end{equation}
 Since $g_u^{\mathcal{A}}$ is a unit vector, the Aggregate Product Identity gives
 $(\sum_i g_i)^\top g_{\mathrm{AlignGrad}} = \|g_{\mathrm{AlignGrad}}\|^2$, so
 AlignGrad admits a convergence guarantee of the same structural form as
 Theorems~\ref{app:thm:convex}--\ref{app:thm:nonconvex}.

 The relation with ConFIG is immediate from comparing \eqref{eq:app:config} with
 \eqref{eq:app:aligngrad}.  Both methods aggregate the original gradients, but they
 use different directions: AlignGrad uses
 $g_u^{\mathcal{A}} = \mathcal{U}[M_0\mathbf{1}_m]$, whereas ConFIG uses
 $g_u^0 = \mathcal{U}[M_0(M_0^\top M_0)^{-1}\mathbf{1}_m]$.
 When $M_0^\top M_0 = I_m$, these two choices coincide and
 $g_u^{\mathcal{A}} = g_u^0$; in general, ConFIG reweights correlated task gradients to
 enforce equal cosine similarity, whereas AlignGrad treats all tasks equally.

 This distinction is important because AlignGrad does not enforce uniform cosine
 similarity across tasks.
 By the Moore-Penrose identity, ConFIG satisfies
 $\mathcal{U}(g_i)^\top g_u^0 = S_c^0$ for all $i$, while HRGrad satisfies
 $\mathcal{U}(g_i^{\mathrm{rot}})^\top g_u = S_c$ for all $i$.
 For AlignGrad, however, the individual projections are
 \begin{equation}\label{eq:app:align_proj}
   \mathcal{U}(g_i)^\top g_u^{\mathcal{A}}
   = \frac{\mathcal{U}(g_i)^\top \sum_{j=1}^m \mathcal{U}(g_j)}
         {\bigl\|\sum_{j=1}^m \mathcal{U}(g_j)\bigr\|},
 \end{equation}
 which are not equal in general.
 A task whose unit gradient is more closely aligned with the mean direction receives
 a larger projection and therefore exerts disproportionate influence on the update.
 In multiscale kinetic problems, where macroscopic and microscopic gradients may have
 substantially different pairwise correlations, this mechanism systematically
 under-weights minority-direction tasks.

 The same averaging structure also limits the non-conflict guarantee beyond the
 two-task setting.  For $m=2$, one has
 $g_i^\top g_u^{\mathcal{A}} \propto 1 + \cos\theta_{12} > 0$ whenever
 $\theta_{12} < \pi$, so the update is non-conflicting.
 For $m>2$, however, there is no guarantee that $g_i^\top g_u^{\mathcal{A}} > 0$
 for every task: a minority gradient $g_k$ opposing the majority mean may satisfy
 $g_k^\top g_u^{\mathcal{A}} < 0$.
 HRGrad removes this difficulty by deflecting every $g_i^{\mathrm{rot}}$ into
 $\mathbb{H}$ before aggregation, which guarantees
 $(g_i^{\mathrm{rot}})^\top g_j^{\mathrm{rot}} \geq 0$ for all $i,j$ and all $m$
 (Lemma~\ref{app:lem:hrgrad}\textit{(ii)}).

 This same loss of coherence appears in the severe-conflict regime.
 When $g_1 \approx -g_2$, one has $\sum_i \mathcal{U}(g_i) \approx \mathbf{0}$,
 so $g_u^{\mathcal{A}}$ becomes undefined and $\|g_{\mathrm{AlignGrad}}\| \to 0$.
 HRGrad avoids this degeneracy by rotating all gradients into $\mathbb{H}$ before
 aggregation, ensuring $\|\sum_i \mathcal{U}(g_i^{\mathrm{rot}})\| > 0$ regardless of
 the original conflict angle.

 Finally, this degradation under conflict enters directly into the convergence-rate
 constant.
 The non-convex rate for AlignGrad has the same structural form as
 Theorem~\ref{app:thm:nonconvex}, but with the HRGrad constant
 $\kappa = \rho_{\min}/S_c$ replaced by
 $\kappa^{\mathcal{A}} = \rho_{\min}/\alpha^{\mathcal{A}}$, where
 \begin{equation}\label{eq:app:aligngrad_rate}
   \alpha^{\mathcal{A}} = \min_k \frac{\bigl\|\sum_{i=1}^m \mathcal{U}(g_i^k)\bigr\|}{m}
 \end{equation}
 is the minimum per-step average alignment score.
 When conflicts are severe, $\alpha^{\mathcal{A}} \to 0$, the bound on
 $\min_k\|\nabla_\theta\mathcal{L}(\theta^k)\|^2$ deteriorates accordingly.
 By contrast, HRGrad guarantees $S_c^k > 0$ at every step
 (Remark~\ref{rem:alpha_pos}), so $\kappa$ remains uniformly bounded and the
 convergence rate is preserved across all conflict regimes.

 \subsubsection{Summary comparison}\label{app:compare:summary}

 Table~\ref{tab:method_compare} summarizes the key theoretical properties of the four tabulated methods.

  \begin{table}[htbp]
    \centering
    \caption{Comparison of gradient conflict resolution methods.
     $S_c$: equal cosine similarity constant; $m$: number of tasks;
     $\checkmark$: property is provably guaranteed; $\times$: not guaranteed.}
   \label{tab:method_compare}
   \small
   \begin{tabular}{p{6.2cm}cccc}
     \toprule
     \textbf{Property} & \textbf{PCGrad} & \textbf{AlignGrad} & \textbf{ConFIG} & \textbf{HRGrad} \\
     \midrule
     Non-conflict update ($2$ tasks)
       & $\checkmark$ & $\checkmark$ & $\checkmark$ & $\checkmark$ \\
     Non-conflict update ($m > 2$ tasks)
       & $\times$     & $\times$     & $\checkmark$ & $\checkmark$ \\
     Pairwise non-conflict before aggregation
       & $\times$     & $\times$     & $\times$     & $\checkmark$ \\
     Equal cosine with aggregation direction (uniform)
       & $\times$     & $\times$     & $\checkmark$ & $\checkmark$ \\
     Isometric Fidelity: $\|g_i^{\mathrm{rot}}\| = \|g_i\|$
       & $\times$     & N/A          & N/A          & $\checkmark$ \\
     Per-task compatibility: $g_i^\top g_i^{\mathrm{rot}} \geq 0$
       & N/A          & N/A          & N/A          & $\checkmark$ \\
     Aggregate Product Identity on aggregation inputs
       & $\times$     & $\checkmark$ & $\checkmark$ & $\checkmark$ \\
     Convergence guarantee (convex)
       & $\times$     & $\checkmark$ & $\checkmark$ & $\checkmark$ \\
     Convergence guarantee (non-convex)
       & $\times$     & $\checkmark$ & $\checkmark$ & $\checkmark$ \\
     Uniform rate: $\alpha = \min_k S_c^k > 0$ always
       & $\times$     & $\times$     & $\times$     & $\checkmark$ \\
     Numerical stability under severe conflict
       & $\times$     & $\times$     & $\times$     & $\checkmark$ \\
     Reduces to ConFIG when conflict-free
       & N/A          & N/A$^\dagger$ & N/A         & $\checkmark$ \\
     \bottomrule
     \multicolumn{5}{l}{\small $^\dagger$AlignGrad $=$ ConFIG when $M_0^\top M_0 = I_m$ (orthonormal unit gradients).}
   \end{tabular}
 \end{table}

\section{Computation time} 
In this section, we present the computation times for solutions across various problems addressed in.

\section{Computational complexity and Knudsen numbers}
In this section, slight deviations are observable at both $t = 0$ and $t = 0.1$ for different Knudsen numbers.
These deviations can be attributed to the abrupt changes in the particle density function $f$ across $x = 0$, where both the magnitude and dispersion of the particle density function undergo more pronounced shifts compared to those observed in Fig.
Furthermore, for $\varepsilon = 1.0$, the deviations observed in the temperature may be linked to the distinct shape of the particle density function.
Fig displays slices of the particle density function $v_x \mapsto f$ at $t=0.1$, $x=0.15$, $v_y=v_z=0$, for $\varepsilon \in \{1.0, 0.1, 0.01, 0.001\}$.
For $\varepsilon = 0.001$, the density function exhibits a bell-shaped curve.
In contrast, for $\varepsilon \in \{1.0, 0.01\}$, the function displays a bimodal distribution with two peaks, suggesting the need for a higher density of collocation points in the $v_x$ direction, thereby increasing the computational complexity.

\bibliographystyle{siamplain}
\bibliography{references}

\end{document}